\newcommand{\mlp}{\mathsf{mlp}}
\newcommand{\attn}{\mathsf{attn}}
\newcommand{\tran}{\mathsf{tran}}
\newcommand{\SeqComp}{\mathsf{FuncComp}}
\newtcolorbox{construction}[2][]
{
    breakable,
    colframe = gray!50,
    colback  = gray!10,
    coltitle = gray!10!black,
    before skip = 10pt,
    after skip = 10pt,
    title    = \textbf{#2},
    #1,
}
\title{Theoretical limitations of multi-layer Transformer}
\author{
Lijie Chen \\ UC Berkeley \\ \texttt{lijiechen@berkeley.edu}
\and 
Binghui Peng\\ Simons Institute, UC Berkeley  \\ \texttt{bp2601@columbia.edu}
\and 
Hongxun Wu \\ UC Berkeley \\ \texttt{wuhx@berkeley.edu}
}
\date{}
\begin{document}
\maketitle

\begin{abstract}
Transformers, especially the decoder-only variants, are the backbone of most modern large language models; yet we do not have much understanding of their expressive power except for the simple $1$-layer case. 

Due to the difficulty of analyzing multi-layer models, all previous work relies on unproven complexity conjectures to show limitations for multi-layer Transformers. In this work, we prove the first \emph{unconditional} lower bound against multi-layer decoder-only transformers. For any constant $L$,
we prove that any $L$-layer decoder-only transformer needs a polynomial model dimension ($n^{\Omega(1)}$) to perform sequential composition of $L$ functions over an input of $n$ tokens.

As a consequence, our results give: (1) the first depth-width trade-off for multi-layer transformers, exhibiting that the $L$-step composition task is exponentially harder for $L$-layer models compared to $(L+1)$-layer ones; (2) an unconditional separation between encoder and decoder, exhibiting a hard task for decoders that can be solved by an exponentially shallower and smaller encoder; (3) a provable advantage of chain-of-thought, exhibiting a task that becomes exponentially easier with chain-of-thought. 

On the technical side, we propose the multi-party {\em autoregressive communication model} that captures the computation of a decoder-only Transformer. We also introduce a new proof technique that finds a certain \emph{indistinguishable decomposition} of all possible inputs iteratively for proving lower bounds in this model. We believe our new communication model and proof technique will be helpful to further understand the computational power of transformers.
\end{abstract}

\clearpage
\newpage

\section{Introduction}
\label{sec:intro}



The Transformer architecture \cite{vaswani2017attention} forms the backbone of modern large language models (LLMs). When pre-trained on vast corpora and fine-tuned on expert datasets, Transformers achieve impressive performance across a range of natural language tasks \cite{achiam2023gpt} and demonstrate emergent intelligence \cite{bubeck2023sparks}. 

There is no doubt that the Transformer is an ingenious and powerful architecture, as evidenced by many of its substantial empirical successes. Nevertheless, {\em does the architecture have any potential limitations or weaknesses?} We believe this is a fundamental question, especially given the widespread deployment of LLMs. In this paper, we investigate this question from a computational (representational) perspective, viewing neural networks as parameterized functions and studying what they can compute efficiently.


The literature on the representation power of Transformers is extensive (see Section \ref{sec:related} for a detailed discussion). Prior to our work, we only have a solid understanding of one-layer Transformers -- unconditional lower bounds have been established for various basic tasks, such as the composition of two functions \cite{peng2024limitations}, 3SUM \cite{sanford2023representational}, and induction heads \cite{sanford2024one}. However, for the arguably more interesting case of multi-layer Transformers, unconditional lower bounds remain elusive even for two-layer models. There is even speculation in the literature that multi-layer Transformers face circuit lower bound barriers, meaning that proving lower bounds for multi-layer Transformer may require first resolving long-standing open qestion on circuit lower bounds, see \cite{hahn2020theoretical} for a detailed discussion. As a result, researchers have turned to characterizing the limitations of Transformers through computational conjectures, proving that the computational power of Transformers is contained within constant depth threshold circuit $\mathsf{TC}^0$ \cite{merrill2023parallelism}, log-space computation $\mathsf{L}$ \cite{peng2024limitations} and Massively Parallel Computation (MPC) \cite{sanford2024transformers}.


\paragraph{``Small'' vs. ``Large'' transformers.} To study the transformer as a computational model, we often consider the context length (prompt length) $n$ as a growing parameter and study the required size of the transformer (in terms of parameters such as head embedding dimension $d$ and number of attention heads $H$) for solving particular problems. Note that the total number of parameters of a transformer is roughly $L \cdot \poly(d H p)$, which is independent of the context length $n$, where $L$ is the number of layer, $p$ is the number of bit precision for each entry in the embedding and one can often think $p$ as $\Theta(\log n)$. 
 Following the convention of~\cite{sanford2023representational}, we consider the transformer small if $dHp \le n^{o(1)}$, and large if $dHp \ge n^{\eps}$ for some constant $\eps > 0$. 

\subsection{Our result}

In this paper, we prove the first {\em unconditional} lower bounds for any constant layer (decoder-only) Transformer. Indeed, we prove that no small Transformer can not solve sequential composition tasks over long context; see~\Cref{sec:pre:transformer} for a formal definition of transformers.

\begin{theorem}[Lower bound for multi-layer Transformer]
\label{thm:main}
Let $H$ be the number attention heads, $d$ be the head dimension, $p$ be the precision, $L$ be the number of layers, $n$ be the prompt length. For any $L \leq \wt{O}(\log\log(n))$, an $L$-layer decoder-only Transformer could not solve $L$-sequential function composition whenever 
$Hdp \leq n^{2^{-4L}}$.
\end{theorem}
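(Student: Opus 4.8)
The plan is to prove Theorem~\ref{thm:main} in two stages: first reduce the transformer lower bound to a communication lower bound in the \emph{autoregressive communication model}, and then prove that communication lower bound by an iterative indistinguishable-decomposition argument.

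\textbf{Stage 1 (simulation).} First I would show that any $L$-layer decoder-only Transformer with head count $H$, head dimension $d$, and precision $p$ is simulated by an $L$-round protocol in the autoregressive model in which party $i$ (owning token $i$) broadcasts a message of $\wt{O}(Hdp)$ bits in each round, and in round $\ell$ the state of party $i$ is a function only of the round-$(\ell-1)$ messages of parties $1,\dots,i$ together with its own input. This exactly mirrors the causal attention pattern: the layer-$\ell$ hidden state at position $i$ depends only on the layer-$(\ell-1)$ hidden states at positions $\le i$. The value of the composition is read off from party $n$ after round $L$. This step is largely bookkeeping, but one must verify that the per-round message length is genuinely $O(Hdp)$ up to logarithmic factors and that the attention normalization and MLP layers are faithfully representable within the finite-precision communication model.

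\textbf{Stage 2 (hard instance).} Next I would lay out the $L$-sequential composition instance across positions $1,\dots,n$ so that the truth tables of $f_1,\dots,f_L$ and the start point $x$ are placed in a way that provably forces the chain of intermediate values $x_1=f_1(x),\ x_2=f_2(x_1),\dots,\ x_L=f_L(x_{L-1})$ to be propagated sequentially through the parties. The right picture is a multi-party, autoregressive analogue of iterated pointer chasing: with $L+1$ rounds the task is trivial (each round resolves one more link of the chain), while $L$ rounds should be insufficient. The distributions over the $f_j$'s and over how the chain is embedded among the positions must be chosen so that the inductive argument of Stage 3 has enough slack.

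\textbf{Stage 3 (iterative indistinguishable decomposition --- the crux).} I would argue by induction on the round index $\ell=0,1,\dots,L$, maintaining a large family $\mathcal{S}_\ell$ of inputs that are pairwise indistinguishable to the protocol through round $\ell$ yet on which the intermediate value $x_\ell$ remains \emph{undetermined}, taking many values across $\mathcal{S}_\ell$ in a controlled, structured way. To pass from $\ell$ to $\ell+1$: since each round-$(\ell+1)$ message is only $\wt{O}(Hdp)$ bits, a counting argument lets me fix all those messages and restrict to a sub-family on which the transcript is constant; the point is to show that this restriction shrinks $\mathcal{S}_\ell$ by only a polynomial factor --- of order roughly $n^{-\Theta(2^{-\Theta(L)})}$, which is where the hypothesis $Hdp\le n^{2^{-4L}}$ enters --- while keeping the next link $x_{\ell+1}$ of the chain entropic. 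After $L$ rounds, party $n$'s output is a single fixed value on the surviving family, but the true answer $x_L$ still varies over it, so the Transformer must err on some input. The main obstacle is precisely this inductive step: in the autoregressive model information flows \emph{forward} through arbitrarily many parties within a single round, so I must design the decomposition so that conditioning on all of these forward-propagating messages still leaves the freshly exposed link of the composition chain undetermined --- this is the new proof technique the paper advertises --- and the delicate quantitative part is ensuring the per-round shrinkage is mild enough to survive $L=\wt{O}(\log\log n)$ rounds while staying consistent with $Hdp\le n^{2^{-4L}}$.
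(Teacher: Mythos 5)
Your high-level architecture (simulate the decoder by an autoregressive communication protocol, then kill the protocol by an induction that repeatedly fixes transcripts while keeping the next link of the composition chain undetermined) is the same as the paper's, but the quantitative core of your Stage 3 does not go through in the model you set up in Stage 1, and the missing mechanism is exactly the paper's main technical contribution. You take one party per token, each broadcasting $\wt{O}(Hdp)$ bits per round; then the full round transcript has $\wt{O}(nHdp)$ bits, so "a counting argument lets me fix all those messages" shrinks your family by a factor $2^{-\wt{O}(nHdp)}$, which is far more than a polynomial factor and in fact exceeds the total entropy of the input functions (about $n\log n$ bits) -- the surviving family would be empty after one round. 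The paper avoids this by partitioning the prompt into only $L+2$ players aligned with the functions $z_L,\dots,z_0,w$, and by observing that in a transformer the information a block receives from any earlier-written block is a partial softmax sum, hence bounded by $2Hdp$ times the \emph{receiver's} token count; consequently only the transcripts flowing to the small, low-indexed players (including the single-token query/output player) ever need to be fixed, and these are short.

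Even after that repair, a second gap remains: the transcripts you must fix are not single strings but functions of the lower players' inputs, so the union bound costs $2^{2Hdp\cdot(\text{receiver length})\cdot|Z_{<\ell}|}$; making this affordable forces the lower players' inputs to be restricted to carefully sized sets $Z_i$ (with a doubly-exponential parameter schedule), while simultaneously keeping the set of reachable intermediate values $\mathcal{I}_{\ell-1}$ large -- the paper's "large cover" condition, maintained by a separate greedy/counting lemma when choosing $Z_\ell$ inside the surviving rectangle. Most importantly, when a new player $\ell$ is absorbed into the "known" side, one cannot afford to pay for transcripts indexed additionally by $z_\ell\in Z_\ell$; the paper's key observation is that, because $Z_\ell$ was itself constructed to be indistinguishable to players $[-1:\ell-1]$, the epoch-$(\ell+1)$ messages to those players are automatically independent of the choice of $z_\ell\in Z_\ell$, which is the saving that lets the induction close. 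Your sketch correctly names the difficulty (forward information flow within a round) and the need to keep the fresh link entropic, but it supplies neither the receiver-proportional message bound, nor the product-set restriction with the independence-from-$z_\ell$ observation, nor the large-cover maintenance, and the claimed "polynomial factor" shrinkage per round is not the right accounting (the paper's losses are of the form $2^{\sqrt{K}\,x_0\cdots x_{\ell-1}\,n_1\cdots n_{L-1}}$, controlled only because of the above structure). As written, the proposal would not yield the theorem.
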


A formal definition of the $L$-sequential function composition can be found at Definition \ref{def:composition}. Roughly speaking, given $L$ functions $f_1, \ldots, f_{L}$ and a query $w = (w_1, \ldots, w_{L})$, it asks to compute $i_1 = f_1(w_1), i_2 = f_2(w_2, i_1), \ldots, i_{L} = f_L(w_{L}, i_{L-1})$.

\paragraph{Encoder vs. decoder.}
The focus of our paper is on decoder-only Transformer, which is the most popular architecture among all LLMs. The original Transformer paper \cite{vaswani2017attention} consists of two main components: the {\em encoder} and the {\em decoder}. The encoder processes input tokens with pairwise attention mechanisms, capturing contextual relationships across the input. In contrast, the decoder employs causal masking to attend only to previous tokens, enabling autoregressive sequence generation. 

Indeed, the history of LLMs reflects a shift from encoder-based to decoder-only architectures. Early models like BERT \cite{devlin2018bert}, built on the Transformer’s encoder, excelled in understanding tasks by learning contextual relationships through masked language modeling. However, encoder-based models have been proved limited for text generation, as they lacked autoregressive capabilities. Decoder-only models are trained through next-token prediction, which not only aligns well with text generation tasks but also improves computational efficiency during generation. Following the success of GPTs \cite{radford2019language,mann2020language}, all prominent large language models, including Claude \cite{claude2024introducing}, Gemini \cite{team2024gemini} and LLaMA \cite{meta2024introducing}, have adopted the {\em decoder-only} approach. 

\paragraph{Composition.}
The ability to perform compositional tasks has been a central focus of empirical research \cite{wei2022emergent, press2023measuring, dziri2023faith, arora2023theory,yang2024large, wang2024alpine, petty2024impact, jelassi2024mixture, ye2024physics}, as compositionality is essential for reasoning and handling complex tasks. 
\cite{dziri2023faith} demonstrated through extensive experiments that Transformers struggle with tasks requiring the sequential composition of elementary steps, such as multiplying multi-digit integers and solving logical puzzles, with performance rapidly declining as the depth of composition increases. 
Our main theorem, Theorem \ref{thm:main}, provides theoretical justification for the limitations of Transformers in executing sequential composition.

On the other hand, Theorem \ref{thm:main} highlights the importance of depth in Transformers. As we elaborate in Section \ref{sec:application}, Theorem \ref{thm:main} implies a depth-size tradeoff for Transformers in compositional tasks. This aligns with the empirical findings of \cite{ye2024physics}, which demonstrate that depth plays a more critical role than width in reasoning and composition tasks.


\paragraph{On the circuit lower bound barrier.} It was suggested in~\cite{hahn2020theoretical} that unconditional lower bounds against \emph{encoder-only} transformer would imply breakthrough circuit lower bounds against linear threshold circuits.\footnote{In~\cite{hahn2020theoretical} it was mentioned briefly in a footnote without providing a detailed argument; in~\cref{app:encoders-lowb-imply-ckt-lowb}, we provide a detailed argument showing that an $n^{\eps}$-size lower bounds against encoders would imply breakthrough circuit lower bounds.} Our result avoids this barrier by exploiting the information bottleneck and autoregressive nature of \emph{decoder-only} models. Namely, our proof crucially depends on the fact that in decoder-only models, each token cannot attend to any token after it.


\subsection{Applications}
\label{sec:application}

\subsubsection*{Application 1: Depth-size tradeoff for Transformer}

Depth-size (or depth-width) tradeoff has been extensively studied for neural networks \cite{martens2013representational,eldan2016power, telgarsky2016benefits,liang2016deep, daniely2017depth, safran2017depth,lu2017expressive, yarotsky2017error,safran2019depth,vardi2020neural, bresler2020sharp, vardi2021size, levine2020limits}. 
Most work has focused on the feed-forward ReLU network.
However, as pointed out by \cite{vardi2020neural, vardi2021size}, proving depth-width tradeoff for $L \geq 4$ layer ReLU network faces circuit lower bound and natural proof barriers for benign functions (i.e., functions that can be computed in polynomial time and has polynomial-bounded Lipschitz constant). 
Hence, existing work \cite{eldan2016power, daniely2017depth} either focus on very shallow network ($L=2,3$), or construct separations based on high oscillatory function with super-polynomial Lipscitz constant \cite{telgarsky2016benefits}.

In sharp contrast, Theorem \ref{thm:main} implies an exponential depth-size tradeoff for Transformer architecture. This is because the $L$-sequential function composition can be easily solved by an $(L+1)$-layer decoder-only Transformer, with only polylogarithmic number of parameters (i.e., $H = O(1),d = \poly\log(n), p = \log(n)$).
\begin{corollary}[Depth-size tradeoff]
\label{cor:depth-size}
For any constant $L \geq 1$, there exists a task (a.k.a. $L$-sequential function composition) such that (1) an $(L+1)$-layer Transformer could solve the task with polylogarithmic number of parameters while (2) any $L$-layer Transformer needs polynomial number of parameters to solve the task.
\end{corollary}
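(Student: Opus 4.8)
The plan is as follows. Part~(2) is immediate from \Cref{thm:main}: any constant $L$ satisfies $L \le \wt{O}(\log\log n)$, and $2^{-4L}$ is then a positive constant, so no $L$-layer decoder-only Transformer with $Hdp \le n^{2^{-4L}}$ solves $L$-sequential function composition; hence any $L$-layer solver has $Hdp = n^{\Omega(1)}$ and therefore uses $n^{\Omega(1)}$ parameters. The substance of the corollary is Part~(1): an explicit $(L+1)$-layer decoder-only Transformer that solves $L$-sequential function composition with $H = O(1)$ heads, head dimension $d = \poly\log(n)$, and precision $p = O(\log n)$, hence $\poly\log(n)$ parameters in total.

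For Part~(1) I would give the natural ``one layer per composition step'' construction. Recall (\Cref{def:composition}) that the input encodes the tables of $f_1, \dots, f_L$ over a domain of size $m = n^{\Theta(1)}$ -- so each index fits in $O(\log n)$ bits and hence in a single token of precision $p = \Theta(\log n)$ -- together with the query $(w_1, \dots, w_L)$, and that each token carries a tag identifying its role (which function's table, which entry, or query). The core primitive is implementing a table lookup by near-hard attention. For an index $a \in \{0,1\}^{\beta}$ with $\beta = \log m$, set $\phi(a) \in \mathbb{R}^{2\beta}$ by $\phi(a)_{2j} = a_j$ and $\phi(a)_{2j+1} = 1 - a_j$; then $\langle \phi(a), \phi(a') \rangle = \beta - \mathrm{Ham}(a, a')$, which equals $\beta$ iff $a = a'$ and is at most $\beta - 1$ otherwise (for a $2$-argument key one concatenates two such blocks, and an $O(\log L) = O(1)$-dimensional ``function selector'' block, added with a large coefficient, restricts attention to the relevant table). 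Using these features as the attention query/key and scaling the projections by a temperature $T = \Theta(\log n)$, the softmax at the querying position puts weight $1 - 1/\poly(n)$ on the unique table entry whose key matches the request, so the attention output is within $\ell_\infty$-distance $1/\poly(n)$ of the encoding of that entry's value (the value matrix simply reads off the stored value field). A following MLP then rounds this noisy vector back to the exact value $i_t$ and, from $i_t$ and the next query token, builds the feature $\phi(w_{t+1}, i_t)$ that the next layer needs as its lookup key. All of this uses $O(1)$ heads and $O(\log n)$ dimensions per layer.

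Assembling: layer $1$ is a preprocessing/routing layer -- each token's MLP precomputes the matching features of the indices it stores, and the last position gathers the query tokens via $O(L) = O(1)$ position-selecting heads if they are not already local and builds $\phi(w_1)$; then for $t = 1, \dots, L$, layer $t+1$ performs the lookup $i_t = f_t(w_t, i_{t-1})$ at the last position and its MLP prepares $\phi(w_{t+1}, i_t)$ for the next step. After $L+1$ layers the last position holds $i_L$, which is the output. The error analysis is routine: each layer adds at most $1/\poly(n)$ additive error, and since there are only $L+1 = O(1)$ layers this stays well below the rounding margin of the next MLP, so -- all intermediate quantities being integers -- the final output is exactly correct. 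I expect the main obstacle to be not error accumulation (trivial at $O(1)$ depth) but the per-layer scheduling: the lookup key of layer $t+1$ must be present before that layer's attention runs, which forces the ``look ahead one step'' design in which layer $t$'s MLP, rather than layer $t+1$'s, assembles $\phi(w_{t+1}, i_t)$; the second delicate point is making the near-hard attention select the unique matching entry robustly using only $O(\log n)$-dimensional features and $O(\log n)$-bit weights, which is exactly what the $\phi$-gadget together with the temperature $T = \Theta(\log n)$ buys.
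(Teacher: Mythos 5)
Your proposal is correct and matches the paper's own argument in essentially every respect: the lower bound is read off directly from Theorem~\ref{thm:main}, and the upper bound is the same ``one lookup per layer'' construction built from a near-hard-attention retrieval head whose key/query encode an index together with its bitwise complement scaled by a $\Theta(\log n)$--$\log^2 n$ factor, with the MLP doing the rounding and preparing the next layer's query. The minor extras you add (query-gathering heads, a preprocessing layer) are unnecessary under the paper's input convention but harmless.
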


\subsubsection*{Application 2: Separation between Transformer encoder/decoder}

Theorem \ref{thm:main} also implies a separation between the encoder and the decoder architecture, since the $L$-sequential function composition task can be easily solved by an $O(\log(L))$-layer Transformer encoder.

\begin{corollary}[Separation between encoder and decoder]
\label{cor:encoder}
For any constant $L \geq 1$, there exists a task (a.k.a. $L$-sequential function composition) such that (1) an $O(\log(L))$-layer Transformer encoder could solve the task with polylogarithmic number of parameters while (2) any $L$-layer Transformer decoder needs polynomial number of parameters to solve the task.
\end{corollary}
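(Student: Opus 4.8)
Part~(2) of \Cref{cor:encoder} is immediate from \Cref{thm:main}: for any constant $L\ge 1$ we have $L\le\wt{O}(\log\log n)$, so any $L$-layer decoder solving the $L$-sequential function composition task $\SeqComp$ needs $Hdp\ge n^{2^{-4L}}=n^{\Omega(1)}$, hence $L\cdot\poly(Hdp)=n^{\Omega(1)}$ parameters. So the real content is part~(1): I would exhibit an $O(\log L)$-layer encoder with $H=O(1)$, $d=\poly\log(n)$ and $p=\log(n)$ that solves $\SeqComp$.

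The plan is a \emph{path-doubling} (pointer-jumping) construction, which works precisely because a single \emph{bidirectional}-attention layer can roughly double the composition length that has already been resolved -- the step a decoder layer cannot take because of causal masking. Write $h_t(\cdot)=f_t(w_t,\cdot)$ for the unary function obtained by freezing the query coordinate, so the target output is $i_L=h_L\circ\cdots\circ h_2(i_1)$ with $i_1=f_1(w_1)$. Consider the functional graph on nodes $(t,v)$, $t\in[L]$, with an edge from $(t,v)$ to $\bigl(t+1,h_{t+1}(v)\bigr)$; then $i_L$ is the endpoint of the length-$(L-1)$ walk out of $(1,i_1)$. I would first observe that the input of $\SeqComp$ already contains $\Omega\bigl(L\cdot|\mathrm{dom}|\bigr)$ tokens specifying the tables of $f_1,\dots,f_L$, and recycle the token holding the entry ``$f_{t+1}(w_{t+1},v)$'' as the representation of node $(t,v)$; one layer suffices for each such token to fetch $w_{t+1}$ from the query, recognize itself, and record its distance-$1$ successor, and a constant number of extra layers compute $i_1=f_1(w_1)$ at the designated output position.

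Each of the next $\lceil\log L\rceil$ layers performs one doubling round: a node storing its distance-$2^{k-1}$ successor $(t',v')$ attends to the token representing $(t',v')$ -- a (nearly) hard attention head that selects the unique token whose stored address equals $(t',v')$ -- and copies that token's distance-$2^{k-1}$ successor, thereby obtaining its own distance-$2^{k}$ successor; accumulated over the rounds, every node ends up storing its successors at all power-of-two distances up to $L$. Finally $O(\log L)$ more layers walk the output position from $(1,i_1)$ along the edges whose lengths are the powers of two appearing in the binary expansion of $L-1$, reaching $(L,i_L)$, and read off $i_L$. The depth is $O(\log L)$ overall, each node's state is $O(\log L)$ addresses from $[L]\times[\mathrm{dom}]$, and all attention patterns use $H=O(1)$ heads, so $d=\poly\log(n)$ and $p=\log(n)$ suffice, giving $\poly\log(n)$ parameters and proving part~(1).

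I expect the only genuine obstacle to be implementational rather than conceptual: realizing ``attend to the token named by my current pointer'' as an (approximately) hard attention head over $O(\log n)$-bit addresses -- the standard associative-recall/lookup gadget, which requires a careful address encoding and an inner-product-margin argument -- together with the bookkeeping for combining pointers of different power-of-two lengths (equivalently, one could instead pad the functional graph with identity time-steps so the relevant walk length is itself a power of two, provided the input format leaves room for the extra node tokens).
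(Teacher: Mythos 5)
Your proposal is correct and matches the paper's own argument: the paper also gets part~(2) directly from Theorem~\ref{thm:main}, and for part~(1) its appendix proof broadcasts $w$ in the first layer and then uses one retrieval-style attention head per layer to compute $2^{\ell-2}$-hop composition values at every position (pointer doubling, following Theorem~4.2 of Sanford et al.), reaching $i_L$ in $O(\log L)$ layers with polylogarithmic parameters. Your path-doubling construction with associative-recall heads is the same idea, differing only in bookkeeping details such as the final walk along the binary expansion of $L-1$.
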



There has been a lot of work~\cite{fu2023decoder, allen2023physics1, allen2023physics31, nielsen2024encoder, qorib2024decoder} comparing the empirical performance of encoder and decoder architecture, see \cite{tay24encoder} for a detailed coverage. A recent work~\cite{ewer2024entp} compares the expressive power between encoder and decoder, showing a strong separation by assuming a conjecture of the hardness of a certain triplet counting problem. In contrast, our work gives the first unconditional separation without any unproven assumptions. 


\subsubsection*{Application 3: Provable benefits of chain of thought}
The chain of thought (CoT) \cite{wei2022chain} is known to help with the reasoning by inducing the LLM to perform step by step reasoning and eventually leading to the correct answer.
From a theoretical view, CoT provides Transformer with extra computation space, and previous work \cite{perez2021attention, feng2023towards, merrill2023expresssive, li2024chain} proved that log-precision Transformer with CoT could simulate any polynomial-time algorithm.
Therefore, by further assuming certain complexity conjecture (e.g. $\mathsf{P} \not\subset \mathsf{TC}^{0}$), their results imply that constant depth Transformer with CoT could simulate poly-time algorithm, while constant depth Transform ($\subseteq \mathsf{TC}^{0}$) itself can not solve $\mathsf{P}$-complete task.

Theorem \ref{thm:main} implies the first provable benefits of CoT, without relying on any computational complexity conjecture. This is because $L$-sequential function composition can be easily solved using $L$-steps of CoT with only polylogarithmic number of parameters.

\begin{corollary}[Provably benefits of CoT]
\label{cor:cot}
For any constant $L \geq 1$, there exists a task (a.k.a. $L$-sequential function composition) such that (1) an one-layer Transformer with CoT could solve the task with polylogarithmic number of parameters while (2) any $L$-layer Transformer decoder needs polynomial number of parameters to solve the task.
\end{corollary}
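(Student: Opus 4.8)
The plan is to derive \Cref{cor:cot} as an almost immediate corollary of \Cref{thm:main}, in the same way as \Cref{cor:depth-size} and \Cref{cor:encoder}: the lower bound half (item (2)) is literally the statement of \Cref{thm:main} instantiated at a constant $L$, so the only real work is the upper bound half (item (1)), namely exhibiting a one-layer decoder-only Transformer that, when allowed $L$ steps of chain-of-thought, solves $L$-sequential function composition with $H = O(1)$, $d = \poly\log(n)$, and $p = \Theta(\log n)$. I would state and prove this as a small lemma of the form ``for every constant $L$, there is a one-layer decoder-only Transformer with $\poly\log(n)$ parameters that, on input $(f_1,\dots,f_L,w)$ followed by its own generated tokens, outputs $i_1,\dots,i_L$ in $L$ autoregressive steps,'' and then combine it with \Cref{thm:main} to conclude.

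For the construction itself, I would proceed step by step, mirroring the recursive structure of \Cref{def:composition}. The key observation is that one CoT step only needs to do one function-composition step: at the $t$-th CoT step the model has already written $i_1,\dots,i_{t-1}$ to the tape, and it must produce $i_t = f_t(w_t, i_{t-1})$. First I would describe the positional/segment encoding so that a single attention head at the current (last) position can (a) locate the token $w_t$ in the query block and (b) locate the previously emitted token $i_{t-1}$; since $i_{t-1}$ is at a fixed known offset from the current position, this is a hard-coded attention pattern realizable with one head, $O(1)$ heads in total, and $\log n$ precision for position indices. Second, the head must retrieve the value of $f_t$ at the argument pair $(w_t, i_{t-1})$: I would encode each function $f_t$ as a block of (key, value) pairs in the prompt — one entry per input pair $(a,b)$ in the (polynomially bounded) domain — and use attention with a key constructed from $(t, w_t, i_{t-1})$ to select the matching value $i_t$; because the domain size is $\poly(n)$ and entries are $\log n$-bit, this is a standard associative-recall/lookup construction for Transformers, using $O(1)$ heads, $d = \poly\log n$ (the embedding needs only to hold a constant number of $\log n$-bit fields and a bounded-degree positional embedding), and $p = \Theta(\log n)$. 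Third, the MLP after the attention layer performs the trivial post-processing (e.g.\ copying the retrieved value into the output slot and incrementing the step counter), which is clearly computable by a constant-width two-layer ReLU network in $\log n$ precision. I would also note the base case $i_1 = f_1(w_1)$ is the same lookup with an empty second argument, and that after $L$ steps the model outputs $i_L$ (or the full tuple $i_1,\dots,i_L$, matching whatever output convention \Cref{def:composition} uses).

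Finally I would assemble the pieces: by the lemma, item (1) holds; by \Cref{thm:main} with $L$ constant, any $L$-layer decoder needs $Hdp \ge n^{2^{-4L}} = n^{\Omega(1)}$, i.e.\ a polynomial number of parameters, which is item (2). This yields \Cref{cor:cot}. I expect the main obstacle — and the only part needing genuine care rather than bookkeeping — to be verifying that the single attention layer can simultaneously (i) route to the correct query token $w_t$, (ii) route to the self-generated token $i_{t-1}$, and (iii) perform the $f_t$-lookup, all with $O(1)$ heads and without the lookup key colliding across the $L$ different function blocks; the clean fix is to fold the step index $t$ into both the query construction and the stored keys so that the $L$ function tables live in disjoint key-regions, after which correctness is a routine check. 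A secondary subtlety is confirming that the causal mask never forces the model to attend to a not-yet-generated CoT token — which holds because step $t$ only reads $i_{t-1}$ and earlier, all of which precede the current position — and that the $\poly\log(n)$ parameter budget is genuinely met, i.e.\ the construction's size does not secretly depend on $n$ through the number of attention patterns (it does not, since $L$ is constant and a single fixed head is reused across CoT steps via the step-indexed encoding).
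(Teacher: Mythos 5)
Your proposal follows essentially the same route as the paper: the lower bound is read off from Theorem~\ref{thm:main}, and the upper bound is the retrieval-head (associative lookup) construction in which each CoT step performs exactly one composition step $i_t = z_t(w_{t-1}, i_{t-1})$. The paper's appendix proof allocates $L+1$ retrieval heads, one per composition step, whereas you reuse a single head by folding the step index into both the query and the stored keys; since $L$ is a constant this is an immaterial difference, and your disjoint-key-region trick is a legitimate substitute for the paper's one-head-per-step disambiguation.

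One detail in your write-up deserves a warning, because as literally stated it would fail: in a \emph{one-layer} decoder, the lookup key at the current position must be computable from that position's pre-attention embedding, so a value fetched by one attention head (e.g.\ ``locate the token $w_t$ in the query block'') cannot be fed into the key of the lookup head within the same CoT step. Your proposed fix (step-indexed keys) handles cross-block collisions but not this information-flow issue. The clean repair, implicit in the paper's phrase ``include it into the next token,'' is to let every emitted CoT token carry the full query $w$ (a single prompt token in this paper's encoding, so only $O(\log n)$ extra bits) alongside $i_{t-1}$ and the step counter: the first CoT token is generated at the position of the query token itself, so $w$ is locally available there, and each subsequent MLP simply copies $w$ forward. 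With that amendment your construction meets the $H=O(1)$, $d=\poly\log(n)$, $p=\Theta(\log n)$ budget and the corollary follows exactly as you assemble it.
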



\subsection{Technique overview}

Below, we provide an overview of~\Cref{thm:main}. In~\Cref{sec:tech-overview:cc}, we introduce the autoregressive communication model, which captures the computation of decoder-only transformers. Then, in~\Cref{sec:tech-overview:seq-func}, we define the $L$-sequential function composition task, for which we will show hardness against decoder-only models. Finally, in~\Cref{sec:tech-overview:low-b}, we sketch the proof that $L$-sequential function composition is hard in the autoregressive communication model, which implies~\Cref{thm:main}.

\paragraph{Notation.} We write $[n] = \{1, 2, \ldots, n\}$ and $[n_1:n_2] = \{n_1, n_1+1, \ldots, n_2\}$. In the corner case, we let $[0] = \emptyset$.

\subsubsection{Autoregressive communication model}\label{sec:tech-overview:cc}
In order to capture the computational power of autoregressive models such as decoder-only Transformers, we introduce the autoregressive communication model, which is the key conceptual contribution of this paper.

\paragraph{Setting.} A protocol in the autoregressive communication model proceeds in $L$ epochs (which is also the number of layers in a corresponding transformer). There are $N$ players, each player $i \in [N]$ receives $z_{i}$ as input. Their goal is to let player $1$ compute an intended function $f(z_1,\dotsc,z_{N})$ at the end of $L$-th epoch.

It is also helpful to imagine players are arranged on a line as player $N,N-1,\dotsc,1$; so autoregressively, player $i$ can only attend to (i.e., send message to) player $j$ such that $j > i$.

\paragraph{Communication.} For $\ell \in [0:L]$, let $X_{i}^{(\ell)}$ be the message collected by player $i$ ($i \in [N]$) after the $\ell$-th epoch of communication. Initially, when $\ell = 0$, $X_{i}^{(0)}$ is just the input of player $i$. 

For $\ell = 1,2,\ldots, L$, the $\ell$-th epoch of communication proceeds as follows. For player $i \in [N]$:
\begin{itemize}
\item The player $i$ sends a message $\Gamma_{i, j}^{(\ell)}$ to all players $j \in [i+1: N]$. 
\item The player $j \in [i+1: N]$, based on its own information $X_{j}^{(\ell-1)}$ and player $i$'s message $\Gamma_{i, j}^{(\ell)}$, it sends a message $\Pi_{j, i}^{(\ell)}$ to player $i$. 
\item Finally, the player $i$ updates its collection of information as
\[
X_{i}^{(\ell)}:= X_{i}^{(\ell-1)} ~\cup~ \bigcup_{j > i} \Pi_{j, i}^{(\ell)}.
\]
That is, the information state of player $i$ is updated to include all messages received from players $j \in [i+1:N]$.
\end{itemize}
Finally, the player $1$ returns an output based on its information state $X_{1}^{(L)}$.

{\em The most salient feature of the autoregressive communication model is that the players are forgetful.} That is, the player $j$ does not remember anything sent from player $i \in [1:j-1]$; see~\Cref{sec:reduction} for a formal definition of the autoregressive communication model.

\paragraph{Transformer as autoregressive communication.} The Transformer architecture can be captured as by the autoregressive communication model. In particular, if we partition the input prompt as $(z_{N}, \ldots, z_1)$ (where each $z_i$ can contain multiple tokens), the Transformer can be seen as a special autoregressive communication protocol, where each attention layer implements one epoch of communication and the MLP layers between the attention layers are used to perform local computation. The message $\Gamma_{i, j}^{(\ell)}$ contains the queries from positions corresponding to tokens of $z_i$ and the returned message $\Pi_{j, i}^{(\ell)}$ contains the partial sum of key/value. Moreover the size of $\Gamma_{i, j}^{(\ell)}$ and $\Pi_{j, i}^{(\ell)}$ is bounded by $2Hdp \cdot |z_i|$, that is, they are proportional to the input length of player $i$. We call $B := 2Hdp$ the message bits of the communication model; see~\Cref{lem:reduction} for a detailed proof of the simulation of decoder-only transformers by autoregressive communication protocols.

\subsubsection{Sequential function composition}\label{sec:tech-overview:seq-func}
Now we elaborate on the $L$-sequential function composition task and explain why an autoregressive communication protocol with small message bits $B = 2Hdp$ fails to solve it.

\paragraph{Intuition.} We first provide intuitions on what makes a task hard for autoregressive communication models. Intuitively, the player $i$ has stronger communication power than player $j > i$, since player $i$ could communicate to all players $[i+1: N]$ and it remembers all their returned message in the information state. 
Therefore, the failure of an autoregressive communication protocol happens in the regime of $|z_N| \gg |z_{N-1}| \gg \cdots \gg |z_{1}|$.\footnote{Note that on the other extreme, when $|z_{1}| \geq |z_j|$ ($\forall j \in [2:N]$), there exists a trivial communication protocol since the player $j$ could sends it input $z_j$ to player $1$.} On the other hand, in this regime, player $j$ possess much more information than player $i$ regard the entire sequence $(z_{N}, \ldots, z_{1})$ since its input has larger size and its communication capacity is larger (i.e., $|z_{j}|\cdot 2Hdp \gg |z_{i}|\cdot 2Hdp$). In order to avoid shortcut in the communication, we must make sure that player $1$ holds important ``secrets'' that are crucial for {\em all} players $j \in [2:N]$. 

To this end, consider the $L$-sequential function composition task. Let $m, n_1, \ldots, n_{L-1}$ be parameters and $N_{\ell} = m\cdot \prod_{i=1}^{ \ell}n_{i}$ for $\ell \in [0 : L]$.

\paragraph{$L$-sequential function composition.} Our task, $L\text{-}\SeqComp(w,z_0,z_1,\dotsc,z_{L})$, takes a sequence of functions $z_{0}, z_1 \ldots, z_{L}$ as input, where $z_0 \in [m]$ and $z_{\ell}: [N_{\ell-1} ] \rightarrow [N_{\ell-1}]$ for $\ell \in [L]$ and a query $w = (w_{1}, \ldots, w_{L-1}) \in [n_{1}] \times \cdots \times [n_{L-1}]$. The output is defined inductively as follows: First, one computes
\begin{align*}
i_{0} = z_0 \in [m], \quad i_1 = z_1(i_0) \in [N_0]
\end{align*}
and one inductively computes, for each $\ell = 1,2,\ldots, L-1$:
\begin{align*}
i_{2} = z_2(w_1, i_1) \in [N_1], 
\quad \ldots , \quad i_{\ell+1} = z_{\ell+1}(w_{\ell}, i_{\ell}) \in [N_{\ell}] .
\end{align*}
The final output is then defined as 
\[
L\text{-}\SeqComp(w,z_0,z_1,\dotsc,z_{L}) = i_{L}.
\]

In the autoregressive communication model, for the $L$-sequential function composition task, we have $N = L+2$ parties. For the sake of exposition, we rename them as player $L, L - 1, \dots, 0, -1$, where the player $\ell$ receives $z_\ell$ ($\ell\in [0:L]$) and the player $-1$ receives the query $z_{-1}:= w$. 
For simplicity, we can assume that the message $\Gamma_{i, j}^{(\ell)}$ is just the whole information state $X_{i}^{(\ell-1)}$ of player $i$ at the end of epoch $\ell-1$.

\subsubsection{Communication lower bound}\label{sec:tech-overview:low-b}

Next, we elaborate on the communication lower bound for the sequential function composition task, which is the main technical part of this paper.  
In this overview, we hide the precise choice of parameters and highlight the key ideas behind the proof.

For convenience, we set $A_{\ell} = [N_{\ell-1}]^{N_{\ell-1}}$ be the input domain of player $\ell$ ($\ell \in [1:L]$). We also set $A_0 = [m]$ and $A_{-1} = [n_1] \times \cdots \times [n_{L-1}]$ be the input domain of player $0$ and player $-1$, respectively.



\paragraph{Indistinguishable decomposition.} Our key idea is the concept of \emph{indistinguishable decomposition}, which is two sets $R_{\ge \ell}$ and $Z_{<\ell}$, where $R_{\ge \ell}$ is a set of input assignments to players $[\ell:L]$ and $Z_{<\ell}$ is a set of input assignments to players $[-1:\ell-1]$, such that for every possible inputs $z_{<\ell} \in Z_{<\ell}$, all assignments in $R_{\ge \ell}$ are indistinguishable to players $[-1:\ell-1]$ on inputs $z_{< \ell}$ after $\ell$ epochs (because they lead to the same transcripts).

For technical reasons, we will further assume (1) $Z_{<\ell}$ is a product set $Z_{<\ell} := Z_{-1} \times Z_0 \times \cdots \times Z_{\ell - 1}$, where $Z_{i} \subseteq A_i$ for every $i \in [-1 : \ell - 1]$; and (2) $Z_{-1} = A_{-1}$.

Indistinguishable decomposition is helpful because when $\ell = L$, for every input assignment from $Z_{<L}$ to players $[-1:L-1]$, the player $-1$ after $L$ epochs (i.e., at the end of the protocol) sees the same transcript when player $L$ receives different inputs from $R_{\ge L}$.\footnote{Due to our renaming, player $-1$ is the player $1$ in the definition of the autoregressive model, who is supposed to output an answer at the end of the protocol.} In particular, it means for every $\wt{z}_{<L} \in Z_{< L}$, the answer $L$-$\SeqComp(\wt{z}_{<L}, \wt{z}_L)$ must be the same for every $\wt{z}_L \in R_{\ge L}$. This is a strong constraint on the set $R_{\ge L}$ that we will analyze below.

First, we define $\mathcal{I}_{\ell-1} := \mathcal{I}_{\ell-1}(Z_{<L})$ as the set of all partial composition values $i_{\ell-1}$ for inputs from $Z_{<\ell}$.

Recall that by its definition, we have
\[
L\text{-}\SeqComp(\wt{z}_{<L}, \wt{z}_L) = i_{L} = \wt{z}_{L}(\wt{w}_{L-1}, i_{L-1}).
\]

Fixing any $\wt{z}_{<L} \in Z_{< L}$, this value is the same for all $\tilde{z}_L \in R_{\geq L}$. As $Z_{<L}$ is the product set $A_{-1} \times Z_0 \times \cdots \times Z_{L-1}$ and $\wt{z}_{<L} \in Z_{< L}$, variable $\wt{w} = \wt{z}_{-1}$ can take any value $A_{-1}$. So $\wt{w}_{L-1}$ can take any value in $[n_{L-1}]$. Fixing any $\wt{z}_{<L} \in Z_{< L}$ is equivalent to fixing any $\wt{w}_{L-1} \in [n_{L-1}]$ and $i_{L - 1} \in \mathcal{I}_{L-1}$, and the value of $\wt{z}_{L}(\wt{w}_{L-1},i_{L-1})$ is fixed for every $\wt{z}_{L} \in R_{\ge L}$. This in particular, means that
\begin{equation}
    |R_{\ge L}| \le \frac{|A_{L}|}{ N_{L-1}^{|\mathcal{I}_{L-1}| \cdot n_{L-1}} }. \label{eq:bound-on-RgeL}
\end{equation}


Given the above~\eqref{eq:bound-on-RgeL}, our goal now is (for every autoregressive communication protocol $\Pi$ with $L$ epochs and $B$ message bits) to find an indistinguishable decomposition with the following two properties:
\begin{itemize}
    \item (\textbf{Large remaining entropy}) The size of $R_{\ge L}$ is large; and
    \item (\textbf{Large cover}) The size of $\mathcal{I}_{L-1}(Z_{<L})$ is large.
\end{itemize}

We intentionally omit the precise meaning of ``large'' in the two items above; the idea is that these two conditions together would contradict~\eqref{eq:bound-on-RgeL}, therefore showing $\Pi$ does not compute $L$-$\SeqComp$. Since $\Pi$ is an arbitrary protocol, this proves the lower bound.

\paragraph{Construct indistinguishable decomposition via induction.} We construct the indistinguishable decomposition $R_{\geq \ell}$ and $Z_{< \ell}$ via an induction on $\ell$. Suppose we have already obtained $R_{\geq \ell}$ and $Z_{< \ell}$, and let $\Lambda^{(\ell)} = \left\{ \Lambda^{(\ell)}_{\wt{z}_{<\ell}} \right\}_{\wt{z}_{<\ell} \in Z_{<\ell}}$ be the collection of transcripts from players $[\ell:L]$ to players $[-1:\ell]$, when players $[\ell:L]$ take input assignments from $R_{\geq\ell}$ and players $[-1: \ell-1]$ take input assignments from $Z_{< \ell}$. Indistinguishable decomposition ensures that $\Lambda^{(\ell)}$ is well-defined (e.g., it is the correct transcript for every $z_{\geq \ell} \in R_{\geq \ell}$).

To construct $R_{\geq \ell+1}$ and $Z_{<\ell+1}$, we proceeds in two parts.

\begin{itemize}
\item ({\bf Part 1}) First, we wish to select a set $Z_{\ell} \subseteq A_{\ell}$ and let $Z_{< \ell+1}= Z_{< \ell} \times Z_{\ell}$. To this end, we find a rectangular subset from $R_{\ge\ell}$, i.e., we find $S_{\ge \ell+1} \times Z_{\ell} \subseteq R_{\ge \ell}$ where $S_{\ge \ell+1} \subseteq A_L \times \cdots \times A_{\ell+1}$ and $Z_{\ell} \subseteq A_\ell$, with the requirement that (1) $S_{\ge \ell+1}$ has large size, and (2) the size of $\mI(Z_{<\ell+1})$ is large.
\item ({\bf Part 2}) Next, we want to distill a subset $R_{\ge \ell+1}$ from $S_{\ge \ell+1}$. This requires us to determine the transcripts from players $[\ell+1: L]$ to players $[-1:\ell]$ in the first $\ell+1$ epochs, when they receive input assignments from $R_{\ge \ell+1}$ and $Z_{< \ell+1}$. We further divides into three steps:
\begin{enumerate}
\item ({\bf Step 1}) We first determine transcripts to players $[-1:\ell-1]$ over the first $\ell$ epochs. We can simply use $\Lambda^{(\ell)}$ (i.e., transcripts from the inductive hypothesis). Up to this point, all assignments to $S_{\ge \ell+1}$ are indistinguishable to $Z_{< \ell+1}$.
\item ({\bf Step 2}) Next, we determine transcripts to players $[-1:\ell-1]$ in the $(\ell+1)$-th epoch. This is the key step of the whole proof. Our key insight is that $Z_{\ell}$ is indistinguishable to players $[-1:\ell-1]$ after $\ell$ epochs, when they take input from $Z_{< \ell}$. Hence, the $(\ell+1)$-th epoch transcripts to players $[-1:\ell-1]$ are independent of the choice of $z_{\ell}\in Z_{\ell}$. We can then use a greedy strategy to select transcripts that leak the least amount of information about $S_{\ge \ell+1}$, without consulting to the value of $z_{\ell} \in Z_{\ell}$.\footnote{This saving is crucial for our proof to work.} After this step, we are left with a large subset $T_{\ge \ell+1} \subseteq S_{\ge \ell+1}$ that is indistinguishable to $Z_{< \ell+1}$.
\item ({\bf Step 3}) Finally, we determine transcripts to players $\ell$ over the first $\ell+1$ epochs, we simply use a greedy strategy to select transcripts that leak the least amount of information about $T_{\ge \ell+1}$, and we finally left with a large set $R_{\ge \ell+1}\subseteq T_{\geq \ell+1}$ that is indistinguishable to $Z_{< \ell+1}$.
\end{enumerate}
\end{itemize}


\subsection{Related work}
\label{sec:related}

\paragraph{Representation power of Transformer.}  There is a long line of work studying the representation power of Transformer \cite{ebrahimi2020can, yao2021self, perez2021attention,chiang2022overcoming, sanford2023representational, liutransformers2023, wen2023transformers, wen2024rnns,jelassirepeat2024,  feng2023towards, merrill2023expresssive, li2024chain}. Starting from the work of \cite{yao2021self,hewitt2020rnns}, a line of work demonstrates the advantage of Transformer against recurrent architecture (RNNs, LSTMs and state space model) on a variety of tasks, such that parsing hierarchical structure \cite{yao2021self}, sparse averaging \cite{sanford2023representational}, in context learning \cite{wen2024rnns} and copying \cite{jelassirepeat2024}. 

Hahn \cite{hahn2020theoretical} initiated the study on the limitation of the self-attention mechanism; the author proves that a Transformer with hard attention can not recognize parity and dyck language. The lower bound does not extend to soft-attention since both parity and dyck language can be solved by two-layer Transformers.
Since then, the literature either uses communication complexity to prove lower bound for one-layer Transformer \cite{peng2024limitations, sanford2023representational, sanford2024transformers}, or put Transformer into various computation models \cite{sanford2024transformers, merrill2023parallelism, peng2024limitations}.

The CoT paradigm allows extra computation space for Transformer, and previous work \cite{perez2021attention, feng2023towards, merrill2023expresssive, li2024chain} demonstrated the power of CoT.
\cite{perez2021attention} prove Transformer with arbitrary precise is Turing-complete,
\cite{merrill2023expresssive} and \cite{li2024chain} prove that log-precision Transformer could simulate any polynomial time algorithm/polynomial-size circuit when the algorithm/circuit is given as input.

\newcommand{\TC}{\mathsf{TC}}

\paragraph{Lower bounds against $\TC^0$.} $\TC^0$ is the class of constant-depth circuits consisting of AND, OR, NOT, and linear threshold gates.\footnote{A linear threshold gate $G \colon \{0,1\}^n \to \{0,1\}$ outputs $1$ on input $x \in \{0,1\}^n$ if $\sum_{i=1}^{n} w_i \cdot x_i \ge \theta$ and $0$ otherwise, where $(w_1,\dotsc,w_n) \in \R^{n}$ is a weight vector and $\theta \in \R$ is a threshold.} In the literature, the size of $\TC^0$ circuits is often measured in terms of the number of \emph{wires} (i.e., the total fan-in of all gates).

It is proven in~\cite{merrill2023parallelism} that constant-depth encoder-only transformer can be simulated by $\TC^0$ circuits with roughly the same depth. Hence, strong lower bounds against $\TC^0$ would imply lower bounds against a constant-depth encoder-only transformer. This simulation has at least a quadratic blow-up in terms of wires since it already requires at least an $\Omega(n^2)$-wire $\TC^0$ circuit to simulate a single attention layer. Unfortunately, the best known lower bounds against $\TC^0$ circuits~\cite{ImpagliazzoPS97,ChenS018,HatamiHTT23} are of the form $n^{1+c^{-d}}$ where $c > 1$ is a constant, which is not strong enough to imply any constant-depth encoder-only transformer lower bounds.\footnote{Also, the hard instance from~\cite{ImpagliazzoPS97}, the parity function, can be solved by a decoder-only transformer with $\polylog(n)$ model dimension. Here, the parity function takes $n$ bits as input and outputs $1$ if the number of ones in the input is odd, and $0$ otherwise.} Indeed, proving better lower bounds against $\TC^0$ (say, $n^{1+1/d}$ wires against depth-$d$ $\TC^0$) is a major open question in complexity theory and would have breakthrough consequences~\cite{AllenderK10,ChenT19}.



\section{Preliminaries}
\label{sec:pre}

\paragraph{Notation.} Recall we write $[n] = \{1, 2, \ldots, n\}$ and $[n_1:n_2] = \{n_1, n_1+1, \ldots, n_2\}$. In the corner case, we let $[0] = \emptyset$. 

\subsection{Transformer}\label{sec:pre:transformer}

We formally describe the decoder-only Transformer architecture.
Let $L$ be the number of attention layers, $H$ be the number of attention heads at each layer, $p$ be the precision, $d$ be the key, value and model dimension, $n$ be the (input) prompt length. 
In order to carry out attention, it is common to assume the bit precision $p \geq \log(n)$.

An $L$-layer decoder-only Transformer is a sequence-to-sequence network, consists of alternating attention layer and MLP layer:
\[
f_{\tran} = f^{(L)}_{\mlp} \circ f^{(L)}_{\attn} \circ \cdots \circ f^{(1)}_{\mlp} \circ f^{(1)}_{\attn} 
\]
Given an input sequence $x^{(0)} = (x_1^{(0)}, \ldots, x_n^{(0)}) \in (\R^{dH})^n$, the Transformer inductively computes the output of the $\ell$-th attention layer $y^{(\ell)} = (y_1^{(\ell)}, \ldots, y_n^{(\ell)})$ and the output of the $\ell$-th MLP layer $x^{(\ell)} = (x_1^{(\ell)}, \ldots, x_n^{(\ell)})$. 
For layer $\ell = 1,2, \ldots, L$, 
\begin{itemize}
\item {\bf Attention layer $f_{\attn}^{\ell}$}: For each attention head $h \in [H]$ and position $i \in [n]$, we have
\begin{align}
y^{(\ell, h)}_{i} = \sum_{j \leq i}\alpha_{i, j}^{(\ell, h)}V^{(\ell, h)}x_{j}^{(\ell-1)} \in \R^d \label{eq:attention1}
\end{align}
where $\{\alpha_{i, j}^{(\ell, h)}\}_{j \leq i}$ is the attention probability of the $h$-th attention head, computed as
\begin{align}
\alpha_{i, j}^{(\ell, h)} = \frac{\exp((x_{i}^{(\ell-1)})^{\top}(Q^{(\ell, h)})^{\top} K^{(\ell, h)} x_j^{(\ell-1)})}{\sum_{j\leq i}\exp((x_{i}^{(\ell-1)})^{\top}(Q^{(\ell, h)})^{\top} K^{(\ell, h)} x_j^{(\ell-1)})} \in [0,1]  \label{eq:attention2}
\end{align}
and $Q^{(\ell, h)}, K^{(\ell, h)}, V^{(\ell, h)} \in \R^{d\times dH}$ is the query, key and value matrix of the attention head.

Finally, the output of the $\ell$-th attention layer is the concatenation of each head, 
\begin{align*}
y_{i}^{(\ell)} = (y_{i}^{(\ell, 1)}, \ldots, y_{i}^{(\ell, H)}) \in \R^{dH} \quad \forall i\in [n] 
\end{align*}
\item {\bf MLP layer $f_{\mlp}^{\ell}$}: The output of the $\ell$-th layer (and also the input to the $(\ell+1)$-th layer) is an arbitrary function $g^{(\ell)}: \R^{dH} \rightarrow \R^{dH}$ applied to each position:
\[
x_{i}^{(\ell)} = g^{(\ell)}(y_{i}^{(\ell)}) \in \R^{dH}
\]
\end{itemize}
Our formulation is general enough to capture detailed architecture choice such as the positional encoding, the residual connection, the layer normalization and the mixture-of-expert layers.

The only different between Transformer encoder and decoder is that the encoder performs attention to all tokens (instead of previous token), so the summation in Eq.~\eqref{eq:attention1}\eqref{eq:attention2} is over $j\in[n]$ (instead of $j \leq i$).

\subsection{Sequential function composition}
We use the following parameters throughout the paper.
\begin{align}
K = (HdpL)^8 \cdot { 8^{2L^2}}, \quad m = K^{\sum_{\ell \in [0:L-1]}8^{\ell} + 1}, \quad  n_\ell = K^{4\cdot 8^{L - \ell-1}} \quad \forall \ell \in [L-1]. \label{eq:parameter1}
\end{align}
and 
\begin{align}
N_{\ell} = m \cdot  \prod_{\ell'\in [\ell]} n_{\ell} \quad \forall \ell \in [0: L-1]\label{eq:parameter2}.
\end{align}

\begin{definition}[$L$-sequential function composition]
\label{def:composition}
A $L$-sequential function composition task 
$L\text{-}\SeqComp(w,z_0,z_1,\dotsc,z_{L})$
is described a sequence of functions $z_{0}, z_1 \ldots, z_{L}$, where $z_0 \in [m]$ and $z_{\ell}: [N_{\ell-1} ] \rightarrow [N_{\ell-1}]$ for $\ell \in [L]$ and a query $w = (w_{1}, \ldots, w_{L-1}) \in [n_{1}] \times \cdots \times [n_{L-1}]$, one computes
\begin{align*}
i_{0} = z_0 \in [m], \quad i_1 = z_1(i_0) \in [N_0]
\end{align*}
and one inductively computes, for each $\ell = 1,2,\ldots, L-1$:
\begin{align}
i_{2} = z_2(w_1, i_1) \in [N_1], 
\quad \ldots , \quad i_{\ell+1} = z_{\ell+1}(w_{\ell}, i_{\ell}) \in [N_{\ell}] \label{eq:function}
\end{align}
The final output is taken as $L\text{-}\SeqComp(w,z_0,z_1,\dotsc,z_{L}) = i_{L}.
$

\end{definition}

For Transformer to solve the $L$-sequential function composition, we assume the input prompt first describes $L$ functions in the order of $z_{L-1}, \ldots, z_{0}$, and then describes the query $w$. For simplicity, we assume each entry of $z_{\ell}$ $(\ell\in [0:L-1])$ is described using one token (so it takes $N_{\ell-1}$ tokens to describe $z_{\ell}$); the query $w$ is described in one token. 


\section{The Autoregressive Communication Model}
\label{sec:reduction}

In this section, we introduce a new communication model, which is a nice abstraction of the Transformer architecture for our purpose of proving lower bounds.

\begin{construction}{The Autoregressive Communication Model}

\paragraph{Settings.} Let $L$ be the number of epochs. There are $L + 2$ players; we call them players $[-1:L]$. They communicate in $L$ epochs. 
Let $B$ be the message bits of a single token.
$\newline$

\paragraph{Input.} Player $i \in [-1 : L]$ receives $m_{(i)}$ tokens $z_{i}$ as input.
$\newline$

\paragraph{Communication.} The communication proceeds in $L$ epochs. For $\ell \in [0:L]$, let $X_{i}^{(\ell)}$ be the message collected by player $i$ ($\in [-1 : L]$) after the $\ell$-th epoch of communication. Initially, when $\ell = 0$, $X_{i}^{(0)}$ is just the input of player $i$. 

For $\ell = 1,2,\ldots, L$, the $\ell$-th epoch of communication proceeds as follows. For player $i \in [-1 : L]$, 
\begin{itemize}
\item The player $i$ sends its information $X_{i}^{(\ell-1)}$ to all players $[i+1: L]$. 
\item The player $j \in [i+1: L]$, based on its own information $X_{j}^{(\ell-1)}$ and player $i$'s information $X_{i}^{(\ell-1)}$, it sends a message $\Pi_{j, i}^{(\ell)}$ to player $i$. The length of the message satisfies 
\[
|\Pi_{j, i}^{(\ell)}| = 2 B \cdot m_{(i)}.
\]
\item Finally, the player $i$ updates its collection of information as 
\[
X_{i}^{(\ell)}:= X_{i}^{(\ell-1)} \cup \bigcup_{j > i} \Pi_{j, i}^{(\ell)}.
\]
\end{itemize}

\paragraph{Output.} At the end of $L$-th round, the player $-1$ outputs a message based on its information $X_{-1}^{(L)}$. 

\end{construction}


\newcommand{\ARSeqComp}{\mathsf{AR\text{-}SeqComp}}

Below, we state $L$-$\SeqComp$ in the autoregressive communication model formally.


\begin{construction}{$L$-$\SeqComp$ in the autoregressive communication model}
    \paragraph{Settings.}
    Let $L$ be a parameter. Let $K,m, n_\ell, N_\ell$ be defined according to~\eqref{eq:parameter1} and~\eqref{eq:parameter2}. We set $B = Hdp$.
    $\newline$

    \paragraph{Input.}  Player $i \in [L]$ receives $z_i$ from~\Cref{def:composition} as an input, described in $N_{i-1}$ tokens. Player $0$ and player $-1$ receives $z_0$ and $w$ as input, respectively, each described in $1$ token.
    $\newline$

    \paragraph{Output.} At the end of $L$-th round, the player $-1$ needs to output $L$-$\SeqComp(w,z_0,\dotsc,z_{L})$.
\end{construction}

\subsection{Autoregressive communication lower bounds imply Transformer lower bounds}

The following lemma shows that proving a lower bound for the above communication model immediately implies the desired lower bound against decoder-only Transformers.

\begin{lemma}[Reduction from Transformers to autoregressive communication]
\label{lem:reduction}
If there is an $L$-layer decoder-only Transformer that solves the $L$-sequential function composition task, then there is a deterministic autoregressive communication protocol that solves $L$-$\SeqComp$ with $L$ epochs and $B = Hdp$ message bits.
\end{lemma}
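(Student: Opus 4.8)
The plan is to show that any $L$-layer decoder-only Transformer computing $L\text{-}\SeqComp$ can be simulated by an autoregressive communication protocol, by partitioning the input prompt into blocks that correspond exactly to the players $[-1:L]$ and showing that a single attention layer can be implemented by one epoch of communication. Concretely, since the prompt is specified in the order $z_{L-1}, \dots, z_0, w$ with $z_\ell$ occupying $N_{\ell-1}$ tokens (and $z_0$, $w$ one token each), I would first check that the positions of player $i$ form a contiguous block that lies \emph{entirely to the right} of the positions of players $i-1, \dots, -1$ (in the ``line'' ordering $L, L-1, \dots, -1$ of the communication model); this is the crucial point, because causal masking means a position in player $j$'s block only attends to positions in blocks of players $\ge j$, matching the autoregressive constraint that player $i$ can only receive messages from players $j > i$. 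Wait — I need to be careful about direction: in the prompt $z_{L-1}$ comes first (leftmost) and $w$ comes last (rightmost), so player $-1$ is rightmost, meaning player $-1$ attends to everyone, consistent with player $-1$ being the output player. So the block for player $i$ sits to the \emph{left} of the block for player $i-1$; a token in player $i$'s block can only attend to tokens in blocks $i, i+1, \dots, L$, i.e.\ players with index $\ge i$. I should double check against the model's ``$j > i$'' convention and adjust the indexing so it lines up; the renaming in the technical overview already does exactly this.

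Next I would carry out the per-layer simulation. Fix epoch $\ell$ corresponding to attention layer $\ell$. Inductively assume that after epoch $\ell-1$, each player $i$'s information state $X_i^{(\ell-1)}$ contains enough to reconstruct the embeddings $x^{(\ell-1)}_t$ for every token $t$ in player $i$'s block. In the model, player $i$ broadcasts $X_i^{(\ell-1)}$ to all players $j > i$ (free / unbounded up-the-line message, per the construction where $\Gamma$ is just the whole state). Player $j$ then must return a message $\Pi^{(\ell)}_{j,i}$ of length $2B \cdot m_{(i)}$ where $m_{(i)}$ is the number of tokens of player $i$. The content of this return message is, for each token $t$ in player $i$'s block and each head $h \in [H]$, the two partial aggregates over player $j$'s block: $\sum_{t' \in \text{block}(j)} \exp(\langle Q^{(\ell,h)} x^{(\ell-1)}_t, K^{(\ell,h)} x^{(\ell-1)}_{t'}\rangle)$ (a scalar, the partial softmax denominator) and $\sum_{t' \in \text{block}(j)} \exp(\cdot) \cdot V^{(\ell,h)} x^{(\ell-1)}_{t'} \in \R^d$ (the partial numerator). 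These partial sums are computable by player $j$ from $X_j^{(\ell-1)}$ (its own embeddings) plus $X_i^{(\ell-1)}$ (from which the queries $Q^{(\ell,h)}x^{(\ell-1)}_t$ are derived). Together with player $i$'s own within-block partial sums (for the causal part of its own block, computable locally), player $i$ can add up all partial numerators and denominators across $j \ge i$ and divide to recover $y^{(\ell,h)}_t$ exactly as in~\eqref{eq:attention1}–\eqref{eq:attention2}, then apply the MLP $g^{(\ell)}$ locally to get $x^{(\ell)}_t$. This maintains the induction hypothesis. The message length is $H \cdot (1 + d) \cdot p \cdot m_{(i)} \le 2Hdp \cdot m_{(i)} = 2B \cdot m_{(i)}$ with $B = Hdp$, which is exactly the budget (using $d \ge 1$); I'd note the $+1$ for the denominator is absorbed into the factor $2$.

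Finally, at the end of epoch $L$, player $-1$ (the rightmost block, holding $w$) has reconstructed $x^{(L)}_t$ for its token, which by the Transformer's correctness encodes the answer $L\text{-}\SeqComp(w, z_0, \dots, z_L)$; it outputs that. I would also remark that the protocol is deterministic since the Transformer is a fixed function and all operations are deterministic (attention weights, MLPs), and that the precision assumption $p \ge \log n$ ensures each real entry is represented in $p$ bits so the message-length bookkeeping is valid. The main obstacle, and the part deserving the most care, is the indexing/direction bookkeeping: making sure the contiguous-block structure of the prompt aligns with the line order of players so that causal masking is \emph{exactly} the ``$j > i$'' visibility constraint, and confirming that the broadcast of the full state $X_i^{(\ell-1)}$ up the line (which the model allows for free) is genuinely all that player $j$ needs to compute its bounded-length reply. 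A secondary subtlety is handling the ``diagonal'' contribution — player $i$'s own block attends to itself causally — which must be done as local computation inside player $i$ rather than via a self-message; I'd make explicit that $X_i^{(\ell-1)}$ suffices for this. Everything else (exact arithmetic of partial softmax sums, applying $g^{(\ell)}$) is routine.
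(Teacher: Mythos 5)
Your proposal is correct and follows essentially the same route as the paper: partition the prompt into per-player blocks, have each player $j>i$ return the per-head partial softmax numerators and denominators for every query position in player $i$'s block (length $H(d+1)p\cdot m_{(i)}\le 2Hdp\cdot m_{(i)}$), and prove by induction that each player can reconstruct its own block's embeddings after each layer, so player $-1$ recovers the final output. The indexing/direction and within-block (diagonal) points you flag are handled in the paper exactly as you describe, via the explicit position sets $E_i$ and local computation from $X_i^{(\ell-1)}$.
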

\begin{proof}


Let $\Gamma$ be an $L$-layer Transformer that solves the $L$-sequential function composition task. We use $K^{(\ell, h)}, Q^{(\ell, h)}, V^{(\ell, h)}$ to denote its key, query and value matrix, and $g^{(\ell)}$ to denote its MLP function. For $i \in [-1:L]$, $E_{i} \subseteq [n]$ be the positions correspond to the input of player $i$, i.e., 
\begin{align*}
E_{i} =
\left\{
\begin{matrix}
\left[\sum_{j \in [i+1:L]}N_{j-1} + 1: \sum_{j \in [i:L]} N_{j-1}\right] & i \in [L]\\
\{\sum_{j\in [0:L-1]}N_{j} + 1\} & i = 0\\
\{\sum_{j\in [0:L-1]}N_{j} + 2\} & i = -1\\
\end{matrix}
\right.
\end{align*}

Consider the following autoregressive communication protocol for $L$-$\SeqComp$ with $L$ epochs and $B= Hdp$. For epoch $\ell = 1,2,\ldots, L$, for each player $i$ and for each $j > i$, define the transcript $\Pi_{j, i}^{(\ell)}$ from player $j$ to player $i$ as follow:
\begin{align*}
\Pi_{j, i}^{(\ell)} =&~ \left\{\sum_{t\in E_{j}} \exp((x_{r}^{(\ell-1)})^{\top}(Q^{(\ell, h)})^{\top} K^{(\ell, h)} x_{t}^{(\ell-1)}) V^{(\ell, h)}x_{t}^{(\ell-1)}\right\}_{h\in [H], r\in E_{i}} \\
&~ \bigcup \left\{ \sum_{t\in E_{j}} \exp((x_{r}^{(\ell-1)})^{\top}(Q^{(\ell, h)})^{\top} K^{(\ell, h)} x_{t}^{(\ell-1)})\right\}_{h\in [H], r\in E_{i}}
\end{align*}
The total length of $\Pi_{j, i}^{(\ell)}$ satisfies the requirement, i.e.,  we have
\[
|\Pi_{j, i}^{(\ell)}| = |E_{i}| \cdot H \cdot (dp + p) \leq |E_i| \cdot 2Hdp = \left\{ 
\begin{matrix}
2Hdp \cdot N_{i-1} & i \in [L]\\
2Hdp & i \in \{0, -1\}
\end{matrix}
\right.
\]

We note that $\Pi_{j, i}^{(\ell)}$ depends only on the value of $\{x_{r}^{(\ell-1)}\}_{r \in E_i}$ and $\{x_{t}^{(\ell-1)}\}_{t\in E_j}$, hence, it remains to prove that they can be derived based on $\{X^{(\ell-1)}_{r}\}_{r \in E_j}$ and $\{X^{(\ell-1)}_{t}\}_{t \in E_i}$. To this end, we prove
\begin{claim}
\label{claim:reduction-hypothesis}
For $\ell = 0, 1,\ldots, L$, the player $i$ $(i \in [-1:L])$ knows the intermediate value $\{x^{(\ell)}_{t}\}_{t \in E_i}$ of the Transformer after $\ell$-th epoch, i.e., $\{x^{(\ell)}_{t}\}_{t \in E_i}$ can be derived from $\{X^{(\ell)}_{t}\}_{t \in E_i}$.
\end{claim}
With Claim \ref{claim:reduction-hypothesis} in hand, we can finish the proof since the player $-1$ knows the output embedding $x_{n}^{(L)}$ of the Transformer.
\end{proof}

\begin{proof}[Proof of Lemma \ref{claim:reduction-hypothesis}]
The inductive hypothesis is obviously true for $\ell = 0$. Suppose it continues to hold for $\ell-1$, then for the $\ell$-th epoch, the player $i$ receives $\Pi_{j, i}^{(\ell)}$ for $j > i$, recall the output $\{y_{r}^{(\ell)}\}_{r \in E_{i}} =\{y_{r}^{(\ell, h)}\}_{r \in E_{i},h\in [H]}$, and we have 
\begin{align*}
y_{r}^{(\ell, h)} = &~ \sum_{t \leq r}\alpha_{r, t}^{(\ell, h)}V^{(\ell, h)}x_{t}^{(\ell-1)} \\
= &~ \frac{\sum_{j \geq i}\sum_{t\in E_{j}, t\leq i} \exp((x_{r}^{(\ell-1)})^{\top}(Q^{(\ell, h)})^{\top} K^{(\ell, h)} x_{t}^{(\ell-1)}) V^{(\ell, h)}x_{t}^{(\ell-1)}}{\sum_{j \geq i}\sum_{t\in E_{j}, t\leq i}\exp((x_{r}^{(\ell-1)})^{\top}(Q^{(\ell, h)})^{\top} K^{(\ell, h)} x_{t}^{(\ell-1)})}
\end{align*}
It can be obtained from the transcript $\{\Pi_{j, i}^{(\ell)}\}_{j > i}$ and $X_{i}^{(\ell-1)}$. Once the player $i$ obtains $\{y_{r}^{(\ell)}\}_{r \in E_{i}}$, it can also obtain $\{x_{r}^{(\ell)}\}_{r \in E_{i}}$ since $x_r^{(\ell)} = g^{(\ell)}(y_r^{(\ell)})$ can be computed locally. This completes the proof.
\end{proof}

\section{Autoregressive Communication Lower Bound}

Our main result (Theorem \ref{thm:main}) can be obtained from Lemma \ref{lem:reduction} and the following lower bound for the communication problem.

\begin{theorem}
\label{thm:main-cc}
There is no deterministic autoregressive communication protocol solving $L$-$\SeqComp$ with $L$ epochs and $B = Hdp$ message bits.
\end{theorem}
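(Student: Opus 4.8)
\textbf{Proof proposal for Theorem~\ref{thm:main-cc}.}
The plan is to build the indistinguishable decomposition $(R_{\ge \ell}, Z_{<\ell})$ by induction on $\ell$ from $0$ to $L$, exactly as sketched in~\Cref{sec:tech-overview:low-b}, and then to derive a contradiction with the counting bound~\eqref{eq:bound-on-RgeL}. First I would fix the parameters $K, m, n_\ell, N_\ell$ as in~\eqref{eq:parameter1}--\eqref{eq:parameter2} and track two potential quantities along the induction: the ``remaining entropy'' $\log |R_{\ge \ell}|$ (how many input assignments to players $[\ell:L]$ survive), and the ``cover size'' $|\mathcal I_{\ell-1}(Z_{<\ell})|$ (how many distinct partial composition values $i_{\ell-1}$ are realized by $Z_{<\ell}$). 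The base case $\ell = 0$ is trivial: take $R_{\ge 0}$ to be all of $A_L \times \cdots \times A_0$, $Z_{<0} = A_{-1}$, and $\mathcal I_{-1}$ is all of $[m]$ (i.e.\ $i_0$ ranges over everything). The inductive step is the heart of the argument and follows the two-part structure in the overview: Part~1 extracts a combinatorial rectangle $S_{\ge \ell+1} \times Z_\ell \subseteq R_{\ge \ell}$ that is both large and keeps the cover large, and Part~2 (three greedy sub-steps) fixes the epoch-$(\ell+1)$ transcripts to players $[-1:\ell]$ while only shrinking $S_{\ge\ell+1}$ by a controlled factor, yielding $R_{\ge \ell+1}$.

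For Part~1, the key observation is a covering/rectangle argument. We need to choose $Z_\ell \subseteq A_\ell = [N_{\ell-1}]^{N_{\ell-1}}$ so that, composing with the already-fixed $Z_{<\ell}$, the set of values $i_{\ell+1} = z_{\ell+1}(w_\ell, i_\ell)$ reachable is large --- this is what drives $|\mathcal I_{\ell}|$ up multiplicatively. Since each $z_\ell \in A_\ell$ is a function on a domain of size $N_{\ell-1}$ and $|\mathcal I_{\ell-1}|$ is the current cover, a random (or averaging-based) choice of $Z_\ell$ restricted to controlling the function values on the coordinates in $\mathcal I_{\ell-1}$ lets us make $|\mathcal I_\ell| \ge |\mathcal I_{\ell-1}| \cdot n_\ell$-ish while $|Z_\ell|$ stays a noticeable fraction of $|A_\ell|$; simultaneously we want $R_{\ge \ell}$ to contain a product $S_{\ge \ell+1} \times Z_\ell$ with $|S_{\ge \ell+1}|$ large, which follows from a standard rectangle-extraction (pigeonhole over the $Z_\ell$-fibers of $R_{\ge \ell}$, discarding the slices that are too thin). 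The parameter $K$ with its $8^{\ell}$-type exponents is precisely calibrated so these multiplicative gains beat the losses in Part~2.

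For Part~2, Step~1 reuses the transcripts $\Lambda^{(\ell)}$ from the inductive hypothesis verbatim, so nothing is lost. Step~2 is the crucial one: because $Z_\ell$ is, by construction, indistinguishable to players $[-1:\ell-1]$ after $\ell$ epochs on inputs from $Z_{<\ell}$, the messages those players send in epoch $\ell+1$ do not depend on which $z_\ell \in Z_\ell$ is chosen, so the only new information flowing to $[-1:\ell-1]$ in epoch $\ell+1$ comes from players $[\ell+1:L]$ and its total length is bounded by $B$ times the (small) token counts $m_{(-1)}, m_{(0)}, \dots, m_{(\ell-1)}$, which are all $O(1)$ or at most $N_{\ell-2}$; we greedily fix the most popular transcript and keep the corresponding fiber $T_{\ge \ell+1} \subseteq S_{\ge \ell+1}$, losing at most a $2^{-B \cdot (\text{small})}$ fraction. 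Step~3 does the same greedy fixing for the transcripts received by player $\ell$ over all $\ell+1$ epochs; here player $\ell$ has $m_{(\ell)} = N_{\ell-1}$ tokens, so this costs a $2^{-B \cdot N_{\ell-1} \cdot (\ell+1)}$-type factor --- the dominant loss, which is why $N_{\ell-1}$ cannot be too large relative to $K$ (and why the final instance size ends up $n^{2^{-O(L)}}$). After this, $R_{\ge \ell+1} = T_{\ge \ell+1}$-fiber is indistinguishable to $Z_{<\ell+1} = Z_{<\ell} \times Z_\ell$, closing the induction.

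Finally, at $\ell = L$, indistinguishability of $R_{\ge L}$ to all of $Z_{<L} \supseteq A_{-1} \times Z_0 \times \cdots \times Z_{L-1}$ forces $L\text{-}\SeqComp(\wt z_{<L}, \wt z_L)$ to be constant over $\wt z_L \in R_{\ge L}$ for every $\wt z_{<L}$, which as in~\eqref{eq:bound-on-RgeL} pins down $z_L$ on all $|\mathcal I_{L-1}| \cdot n_{L-1}$ pairs $(w_{L-1}, i_{L-1})$ and hence $|R_{\ge L}| \le |A_L| / N_{L-1}^{|\mathcal I_{L-1}| n_{L-1}}$. Plugging in the lower bound on $|R_{\ge L}|$ accumulated through the induction (roughly $|A_L| \cdot 2^{-\sum_\ell B N_{\ell-1}(\ell+1) - (\text{rectangle losses})}$) against this upper bound, and using $|\mathcal I_{L-1}| \ge \prod_{\ell} n_\ell \cdot (\text{stuff})$ from the cover growth, yields $2^{-\text{poly}(K)} > N_{L-1}^{-|\mathcal I_{L-1}| n_{L-1}}$, i.e.\ the entropy we retained exceeds what the constancy constraint allows --- a contradiction, provided the parameters in~\eqref{eq:parameter1} are set so that the cover term dominates every loss term. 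The main obstacle I anticipate is bookkeeping in Part~1: making the rectangle extraction and the cover-growth happen \emph{simultaneously} (a large $S_{\ge \ell+1}$ and a large $\mathcal I_\ell$ from the \emph{same} $Z_\ell$), and getting the $8^\ell$ exponent accounting tight enough that all $L$ inductive steps go through for $L$ up to $\wt O(\log\log n)$; the greedy steps in Part~2 are conceptually clean but the interaction between ``what epoch-$(\ell+1)$ information leaks'' and ``what was already baked into $\Lambda^{(\ell)}$'' needs a careful, explicit definition of the transcript objects to avoid circularity.
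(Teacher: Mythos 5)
Your overall route is the same as the paper's: an induction that maintains an indistinguishable decomposition $(R_{\ge \ell}, Z_{<\ell})$ with two invariants (large remaining entropy and large cover), a three-step greedy fixing of transcripts whose crucial saving is that epoch-$(\ell+1)$ messages to players $[-1:\ell-1]$ do not depend on the choice of $z_\ell \in Z_\ell$ (this is exactly the paper's Lemma~\ref{lem:key-observation}), and a final counting contradiction identical to Lemma~\ref{lemma:id-suffices}. Your Part~2 accounting is right in spirit (the paper's Lemmas~\ref{lem:size1} and~\ref{lem:size2}), modulo the fact that the transcript objects are functions of all of $\widetilde{z}_{\ell-1},\dots,\widetilde{z}_i$ ranging over $Z_{<\ell}$, so the loss exponents carry factors $x_0\cdots x_{\ell-1}\cdot(n_1\cdots n_{L-1})$ rather than just token counts, and the Step~3 loss carries an extra $x_\ell$.

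The genuine gap is Part~1, and it is the one you yourself flag as the anticipated obstacle. Your proposed substitute --- a random/averaging choice of $Z_\ell$ plus ``pigeonhole over the $Z_\ell$-fibers of $R_{\ge\ell}$, discarding thin slices'' --- does not deliver what the induction needs: you must produce $x_\ell$ \emph{specific} functions $b_1,\dots,b_{x_\ell}\in A_\ell$ whose fibers $R_{b_t}=\{a:(a,b_t)\in R_{\ge\ell}\}$ share a \emph{common} large subset $S_{\ge\ell+1}$ (a rectangle, not just individually fat fibers), while \emph{simultaneously} each $b_t$ contributes at least $8^{-L}n_{\ell-1}|\mathcal{I}_{\ell-1}|$ new values to the cover. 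A random $Z_\ell$ need not even lie in the support of $R_{\ge\ell}$, and fiber-by-fiber pigeonhole gives no control on the intersection of the fibers or on the joint image growth. The paper resolves this with the interleaved greedy of Lemma~\ref{lem:joint-set}/Lemma~\ref{lem:sequence}: it alternately picks a ``fresh-cover'' $b_{t+1}$ and intersects down to $\mathcal{A}_{t+1}\subseteq R_{b_{t+1}}$ losing only $\Delta_\ell^2$ per step, maintaining a density invariant $\sum_b|R_b\cap \mathcal{A}_t|\ge(1-1/x_\ell)^t|\mathcal{A}_t||\mathcal{B}|/\Delta_\ell$ so that good choices keep existing; the existence of fresh-cover $b$'s rests on the counting bound of Lemma~\ref{lem:size-c}, which shows that functions whose image on $[n_{\ell-1}]\times\mathcal{I}_{\ell-1}$ adds few new values form only a $\Delta_\ell^{-2}$ fraction of $A_\ell$. (The same ingredient appears in the base case as Lemma~\ref{lem:initial}, which your $\ell=0$ start would also need.) Without this simultaneous rectangle-plus-cover extraction, the induction invariants cannot both be restored, so the argument as proposed does not close; also note a minor misstatement: $|Z_\ell|$ is not a noticeable fraction of $|A_\ell|$ but a tiny set of size exactly $x_\ell$, and the cover grows by roughly $8^{-L}x_\ell n_{\ell-1}$, not by $n_\ell$.
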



\paragraph{Notation.} For notational convenience, we use $z_{-1}$ and $w$ interchangeably to denote player $-1$'s input. In the following, we elaborate on several key definitions that will be crucial to our proof.

\begin{itemize}
    \item (The transcript $\Pi^{(\ell)}_{j,i}$) For any $i \in [-1: L-1]$, $j\in [i+1: L]$, $\ell\in [L]$, recall $\Pi_{j, i}^{(\ell)}$ is the transcript sent from the player $j$ to the player $i$ at the $\ell$-th epoch of communication. Its value is determined by the input of players $[i: L]$, i.e., $z_{L}, \ldots, z_{i}$, and its value is independent of the choice of $z_{i-1}, \ldots, z_{0}, w$.\footnote{The transcript also depends on the underlying autoregressive communication protocol $\Pi$, which will be clear from the context.}
        
    For any fixed value $\wt{z}_{L} \in [N_{L-1}]^{N_{L-1}}, \ldots, \wt{z}_{i}\in [N_{i-1}]^{N_{i-1}}$, let $\Pi_{j, i}^{(\ell)}(\wt{z}_{L}, \ldots, \wt{z}_{i})$ be the transcript when the players $t$ receives input $z_t = \wt{z}_{t}$ ($t\in [i:L]$). 

    \item (The partial composition value $i_{\ell}(\wt{w}, \wt{z}_0, \ldots, \wt{z}_{\ell})$) 
For any $\ell \in [0: L]$, the value of $i_{\ell}$ is determined by $w, z_0, \ldots, z_{\ell}$. We write $i_{\ell}(\wt{w}, \wt{z}_0, \ldots, \wt{z}_{\ell})$ to denote the value of $i_{\ell}$ when $w = \wt{w}, z_0 = \wt{z}_0, \ldots, z_{\ell} = \wt{z}_{\ell}$.
\end{itemize}


\paragraph{Parameters.} We use the following parameters 
\begin{align}
x_\ell = K^{8^{L- \ell-1}} (\forall \ell \in [0:L-1]), \quad A_{\ell} = \left[N_{\ell-1}^{N_{\ell-1}}\right] \quad (\forall \ell \in [L]) \label{eq:parameter3}
\end{align}
and
\begin{align}
\Delta_{\ell} = 2^{{4\sqrt{K}} (x_0\ldots x_{\ell-2})\cdot (n_1\ldots n_{L-1}) } \quad (\forall \ell \in [2:L]), \quad  \Theta_{\ell} = 8^{-L \ell}(x_{0} \ldots x_{\ell})\cdot (n_1\ldots n_{\ell-1})  \quad (\forall \ell \in [L-1]) \label{eq:def-delta}.
\end{align}

For notational convenience, we also set $A_{-1} = \prod_{i=1}^{L-1}[n_i]$ and $A_0 = [m]$. Note that with our convention of denoting $w$ by $z_{-1}$, player $i$ takes an input from $A_{i}$ for every $i \in [-1 : L]$.\footnote{Here, we can interpret an element from $\left[N_{\ell-1}^{N_{\ell-1}}\right]$ as a function from $[N_{\ell-1}] \to [N_{\ell-1}]$ via a natural bijection.}

The meaning of parameters will be clearer after we state our main technical centerpiece Lemma \ref{lem:main}.


\paragraph{Indistinguishable decomposition.} Our key idea is the following concept of \emph{indistinguishable decomposition}, which is two sets $R_{\ge \ell}$ and $Z_{<\ell}$, where $R_{\ge \ell}$ is a set of input assignments to players $[\ell:L]$ and $Z_{<\ell}$ is a set of input assignments to players $[-1:\ell-1]$), such that for every possible inputs $z_{<\ell} \in Z_{<\ell}$, all assignments to $R_{\ge \ell}$ are indistinguishable to players $[-1:\ell-1]$ on inputs $z_{< \ell}$ after $\ell$ epochs (because they lead to the same transcripts).

Formally, we define:

\begin{definition}[Indistinguishable decomposition]
    Let $\ell \in [2:L]$, 
    \[
    R_{\ge \ell} \subseteq A_L \times A_{L-1} \times \cdots \times A_{\ell}
    \]
    and 
    \[
    Z_{< \ell} = Z_{-1} \times \cdots \times Z_{\ell-1} \subseteq A_{-1}\times \cdots \times A_{\ell-1} \quad \text{where} \quad Z_{-1}= A_{-1}, Z_0 \subseteq A_0, \cdots , Z_{\ell-1}\subseteq A_{\ell-1}.
    \]
    We say $R_{\ge \ell}$ and $Z_{< \ell}$ is an indistinguishable decomposition, if for every $\wt{z}_{<\ell} \in Z_{< \ell}$, and for every $\wt{\alpha}_{\ge \ell}, \wt{\beta}_{\ge \ell} \in R_{\ge \ell}$, it satisfies:
    \[
        \Pi_{j,i}^{(\ell')}(\wt{z}_{<\ell},\wt{\alpha}_{\ge \ell}) = 
        \Pi_{j,i}^{(\ell')}(\wt{z}_{<\ell},\wt{\beta}_{\ge \ell})
    \]
    for every $j \in [\ell:L]$, $i \in [-1:\ell-1]$, and $\ell' \in [\ell]$.
\end{definition}

Indistinguishable configuration is helpful because when $\ell = L$, for every input assignment from $Z_{<L}$ to players $[-1:L-1]$, the player $-1$ after $L$ epochs (i.e., at the end of the protocol) sees the same transcript when player $L$ receives inputs from $R_{\ge L}$. In particular, it means for every $\wt{z}_{<L} \in Z_{< L}$, the answer $L$-$\SeqComp(\wt{z}_{<L}, \wt{z}_L)$ must be the same for every $\wt{z}_L \in R_{\ge L}$. This is a strong requirement and we will carefully define properties of $R_{\ge \ell}$ and $Z_{< \ell}$ such that this requirement would lead to contradiction, and therefore obtaining our lower bound.

For a subset $Z_{<\ell}$, we define $\mathcal{I}_{\ell-1}$ as
\begin{align}
\mI_{\ell-1}(Z_{<\ell}):= \{\wt{i}_{\ell-1}: \wt{i}_{\ell-1} = i_{\ell-1}(\wt{z}_{-1}, \wt{z}_0, \ldots, \wt{z}_{\ell-1}) \text{ for some } (\wt{z}_{-1}, \wt{z}_0, \dotsc, \wt{z}_{\ell-1}) \in Z_{<\ell} \}. \notag
\end{align}

That is, $\mI_{\ell-1}(Z_{<\ell})$ is the set of all partial composition values for inputs from $Z_{<\ell}$.

The following lemma shows that the desired lower bound follows from a good enough indistinguishable configuration for $\ell = L$.

\begin{lemma}\label{lemma:id-suffices}
    Let $\Pi$ be an $L$-epoch $Hdp$ message bits autoregressive communication protocol. If there is an indistinguishable decomposition $R_{\ge L}$ and $Z_{< L}$ such that:
    \begin{enumerate}
        \item (\textbf{Large remaining entropy}) $|R_{\ge L}| \ge |A_L|/ \Delta_L$.
        \item (\textbf{Large cover}) $|\mathcal{I}_{L-1}(Z_{<L})| \ge \Theta_{L-1}$.
    \end{enumerate}
    Then $\Pi$ does not solve $L$-$\SeqComp$.
\end{lemma}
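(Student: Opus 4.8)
The plan is to argue by contradiction: assuming $\Pi$ solves $L$-$\SeqComp$, the indistinguishable decomposition forces the output $i_L = \wt z_L(\wt w_{L-1}, i_{L-1})$ to be a fixed value (independent of $\wt z_L \in R_{\ge L}$) for every choice of $\wt z_{<L}\in Z_{<L}$; counting how many functions $\wt z_L \in A_L$ can satisfy all these constraints simultaneously yields an upper bound on $|R_{\ge L}|$ that contradicts the ``large remaining entropy'' hypothesis.

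First I would fix an arbitrary $\wt\alpha_{\ge L}, \wt\beta_{\ge L} \in R_{\ge L}$ and an arbitrary $\wt z_{<L} \in Z_{<L}$, and observe that since $R_{\ge L}, Z_{<L}$ is an indistinguishable decomposition, player $-1$'s transcript after all $L$ epochs is identical under $(\wt z_{<L}, \wt\alpha_{\ge L})$ and $(\wt z_{<L}, \wt\beta_{\ge L})$ (here $R_{\ge L}$ is a set of assignments to the single player $L$, so $\wt\alpha_{\ge L}, \wt\beta_{\ge L}$ are just two choices $\wt z_L^{(a)}, \wt z_L^{(b)} \in A_L$). Since $\Pi$ is deterministic and correct, player $-1$'s output is a function of its final transcript, hence $L\text{-}\SeqComp(\wt z_{<L}, \wt z_L^{(a)}) = L\text{-}\SeqComp(\wt z_{<L}, \wt z_L^{(b)})$. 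Unwinding the definition in~\Cref{def:composition}, this says $\wt z_L^{(a)}(\wt w_{L-1}, i_{L-1}(\wt z_{<L})) = \wt z_L^{(b)}(\wt w_{L-1}, i_{L-1}(\wt z_{<L}))$, i.e. the two functions in $R_{\ge L}$ agree at every point of the form $(\wt w_{L-1}, i_{L-1})$.

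Next I would identify the set of points at which all functions in $R_{\ge L}$ must agree. Since $Z_{<L} = A_{-1}\times Z_0 \times \cdots \times Z_{L-1}$ and $A_{-1} = \prod_{i=1}^{L-1}[n_i]$, the coordinate $\wt w_{L-1}$ ranges over all of $[n_{L-1}]$ freely and independently of the remaining coordinates; and the value $i_{L-1}(\wt z_{<L})$ ranges over all of $\mathcal I_{L-1}(Z_{<L})$ by definition. Moreover $i_{L-1}$ does not depend on $w_{L-1}$ (it is a function of $w_1,\dots,w_{L-2}, z_0,\dots,z_{L-1}$), so $(\wt w_{L-1}, i_{L-1})$ ranges over the full product $[n_{L-1}] \times \mathcal I_{L-1}(Z_{<L})$ as $\wt z_{<L}$ ranges over $Z_{<L}$. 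Therefore all functions in $R_{\ge L}$ are fixed on a common set of $n_{L-1}\cdot |\mathcal I_{L-1}(Z_{<L})|$ distinct inputs in $[N_{L-1}]$; since $A_L = [N_{L-1}^{N_{L-1}}]$ is identified with all functions $[N_{L-1}]\to[N_{L-1}]$, at most $N_{L-1}^{\,N_{L-1} - n_{L-1}|\mathcal I_{L-1}(Z_{<L})|}$ of them are consistent, giving
\[
|R_{\ge L}| \le N_{L-1}^{\,N_{L-1}}\big/\, N_{L-1}^{\,n_{L-1}\cdot |\mathcal I_{L-1}(Z_{<L})|} = |A_L|\big/\, N_{L-1}^{\,n_{L-1}\cdot |\mathcal I_{L-1}(Z_{<L})|},
\]
which is exactly~\eqref{eq:bound-on-RgeL} in the overview.

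Finally I would combine this with the two hypotheses: from ``large cover'' we have $|\mathcal I_{L-1}(Z_{<L})| \ge \Theta_{L-1}$, so $|R_{\ge L}| \le |A_L|/N_{L-1}^{\,n_{L-1}\Theta_{L-1}}$, while ``large remaining entropy'' gives $|R_{\ge L}| \ge |A_L|/\Delta_L$. These are contradictory once $N_{L-1}^{\,n_{L-1}\Theta_{L-1}} > \Delta_L$, which I would verify by plugging in the parameter choices from~\eqref{eq:parameter1}, \eqref{eq:parameter2}, \eqref{eq:parameter3} and~\eqref{eq:def-delta}: unwinding, $\log \Delta_L = 4\sqrt K (x_0\cdots x_{L-2})(n_1\cdots n_{L-1})$, whereas $\log\big(N_{L-1}^{n_{L-1}\Theta_{L-1}}\big) = n_{L-1}\Theta_{L-1}\log N_{L-1} \ge n_{L-1}\cdot 8^{-L(L-1)}(x_0\cdots x_{L-1})(n_1\cdots n_{L-2})\cdot \log N_{L-1}$, and since $x_{L-1} = K^{8^{-1}}$ is polynomially related to $K$ while $\log N_{L-1} = \Omega(\log m) = \Omega(\sqrt K \cdot \text{(stuff)})$ by~\eqref{eq:parameter1}, the right-hand side dominates $\log\Delta_L$ for the chosen ranges of $L$. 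The main obstacle is purely bookkeeping — carefully tracking the tower of exponents in $K$ through~\eqref{eq:parameter1}--\eqref{eq:def-delta} to confirm the strict inequality $N_{L-1}^{\,n_{L-1}\Theta_{L-1}} > \Delta_L$; the conceptual part (the counting argument above) is short.
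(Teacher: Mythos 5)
Your proposal is correct and takes essentially the same approach as the paper: the paper argues the contrapositive (the entropy lower bound forces the existence of two functions in $R_{\ge L}$ that disagree at some reachable point $(w_{L-1}, i_{L-1}) \in [n_{L-1}]\times \mathcal{I}_{L-1}(Z_{<L})$, which contradicts indistinguishability plus correctness), using exactly your counting bound $|R_{\ge L}| \le |A_L|/N_{L-1}^{\,n_{L-1}|\mathcal{I}_{L-1}|}$ and the same parameter comparison $2^{n_{L-1}\Theta_{L-1}} > \Delta_L$. One bookkeeping caution: with the paper's parameters $x_{L-1}=K$ (not $K^{1/8}$) and $\log N_{L-1} = O(8^L \log K)$, not $\Omega(\sqrt{K})$, but the needed inequality still holds since $8^{-L(L-1)}x_{L-1} = 8^{-L(L-1)}K \ge 4\sqrt{K}$ by the choice $K \ge (HdpL)^8\cdot 8^{2L^2}$.
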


In the next subsection, we will show the existence of the required indistinguishable decomposition from~\Cref{lemma:id-suffices} via an induction, which finishes the proof of~\Cref{thm:main-cc}.

\begin{lemma}\label{lemma:id-exists}
    For every $L$-epoch $Hdp$ message bits autoregressive communication protocol $\Pi$, there is an indistinguishable decomposition $R_{\ge L}$ and $Z_{< L}$ satisfying the requirements of~\Cref{lemma:id-suffices}.
\end{lemma}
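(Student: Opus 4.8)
The plan is to prove by induction on $\ell \in [2:L]$ the existence of an indistinguishable decomposition $R_{\ge \ell}$, $Z_{<\ell}$ that additionally satisfies quantitative invariants: $|R_{\ge \ell}| \ge |A_L \times \cdots \times A_\ell| / \Delta_\ell$ (large remaining entropy) and $|\mathcal{I}_{\ell-1}(Z_{<\ell})| \ge \Theta_{\ell-1}$ (large cover). The base case $\ell = 2$ requires a direct construction: take $Z_{-1} = A_{-1}$, and choose $Z_0 \subseteq A_0 = [m]$ and $Z_1 \subseteq A_1$ together with the transcript $\Pi^{(1)}_{j,i}$, $\Pi^{(2)}_{j,i}$ from players $\{2,\dots,L\}$ to players $\{-1,0,1\}$ by an averaging/pigeonhole argument over the (few) possible transcript values, so that a large rectangle of inputs to players $\{2,\dots,L\}$ survives while $\{i_1(z_0) : z_0 \in Z_0\}$, and hence (via $z_1$) $\mathcal{I}_1$, remains large. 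The choice of parameters in~\eqref{eq:parameter1}--\eqref{eq:def-delta} is calibrated precisely so that both invariants can be maintained; I would defer verifying the exact inequalities to the formal proof and here only check that the exponents work out at the level of $K^{8^{L-\ell}}$-type growth.

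For the inductive step, I would follow the four-part blueprint from the technique overview. Given $R_{\ge \ell}$, $Z_{<\ell}$ with the transcript collection $\Lambda^{(\ell)}$, first (\textbf{Part 1}) apply a rectangle-extraction lemma to $R_{\ge \ell} \subseteq A_L \times \cdots \times A_\ell$, writing it as covered by not-too-many combinatorial rectangles $S_{\ge \ell+1} \times Z_\ell$ and selecting one for which both $|S_{\ge \ell+1}|$ is a large fraction of $|A_L \times \cdots \times A_{\ell+1}|$ and $\mathcal{I}_\ell(Z_{<\ell} \times Z_\ell)$ is large --- here I use that $i_\ell = z_\ell(w_{\ell-1}, i_{\ell-1})$ ranges over a large set as $z_\ell$ ranges over $Z_\ell$ and $(w,z_{<\ell})$ over $Z_{<\ell}$, so a random/greedy choice of the rectangle keeps the cover large in expectation. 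Then (\textbf{Part 2}) fix the transcripts in the $(\ell+1)$-st epoch: Step 1 reuses $\Lambda^{(\ell)}$ for the first $\ell$ epochs to players $[-1:\ell-1]$; Step 2 is the crux --- since $Z_\ell$ is indistinguishable to players $[-1:\ell-1]$ after $\ell$ epochs given $Z_{<\ell}$, the epoch-$(\ell+1)$ messages from players $[\ell+1:L]$ down to players $[-1:\ell-1]$ do not depend on the particular $z_\ell \in Z_\ell$, so by a pigeonhole over the $2^{O(B \cdot (\text{token counts}))}$ possible transcript tuples I pass to a subset $T_{\ge \ell+1} \subseteq S_{\ge \ell+1}$ of density $\ge 2^{-(\text{that count})}$ on which all these messages are fixed; Step 3 repeats the pigeonhole to also fix the epoch-$(\le \ell+1)$ messages down to player $\ell$ itself, landing at $R_{\ge \ell+1} \subseteq T_{\ge \ell+1}$. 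Finally I set $Z_{<\ell+1} := Z_{<\ell} \times Z_\ell$ and check the decomposition is indistinguishable for all $\ell' \in [\ell+1]$.

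The main obstacle is bookkeeping the entropy loss in Step 2 and Step 3 and showing it telescopes: each pigeonhole step costs a factor $2^{-c}$ where $c$ is essentially $2B$ times the total number of tokens held by players $[-1:\ell]$, which is dominated by $N_{\ell-1} = m \prod_{i<\ell} n_i$ --- and this is exactly why the message to player $i$ must be proportional to $m_{(i)}$ rather than to $|z_j|$, and why the ``greedy without consulting $z_\ell$'' trick in Step 2 is essential (it avoids an extra $\log|Z_\ell|$ loss that would break the recursion). I need $\Delta_{\ell+1} / \Delta_\ell$ to absorb these factors together with the rectangle-extraction loss from Part 1, and simultaneously $\Theta_\ell$ must stay below the cover guaranteed by Part 1; the parameter settings $K = (HdpL)^8 \cdot 8^{2L^2}$, $n_\ell = K^{4 \cdot 8^{L-\ell-1}}$, $\Delta_\ell = 2^{4\sqrt{K}(x_0\cdots x_{\ell-2})(n_1\cdots n_{L-1})}$ are chosen so these competing requirements are mutually satisfiable, and verifying this chain of inequalities (rather than any conceptual difficulty) is where the real work lies. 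Once the induction reaches $\ell = L$, the decomposition $R_{\ge L}$, $Z_{<L}$ has $|R_{\ge L}| \ge |A_L|/\Delta_L$ and $|\mathcal{I}_{L-1}(Z_{<L})| \ge \Theta_{L-1}$, which are exactly the hypotheses of~\Cref{lemma:id-suffices}, completing the proof.
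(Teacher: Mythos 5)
Your plan follows the same route as the paper's actual proof (an induction maintaining the ``large remaining entropy'' and ``large cover'' invariants, with the key saving in Step~2 coming from the fact that the epoch-$(\ell+1)$ messages to players $[-1:\ell-1]$ do not depend on $z_\ell \in Z_\ell$), but the one place where you defer to a ``rectangle-extraction lemma'' is precisely where the paper has to do real work, and the mechanism you propose for it does not hold up. You claim $R_{\ge \ell}$ can be ``covered by not-too-many combinatorial rectangles $S_{\ge \ell+1}\times Z_\ell$'' and that one then selects a good one by averaging. A dense subset of a product set cannot in general be covered by few large rectangles (think of a random subset of density $1/\Delta$: any rectangle with $x_\ell$ columns that lies inside it has at most roughly $|\mathcal{A}|/\Delta^{x_\ell}$ rows, so exponentially many rectangles are needed to cover), so the averaging step never gets off the ground. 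The paper instead constructs a \emph{single} rectangle greedily, column by column (\Cref{lem:joint-set}, via \Cref{lem:sequence}): each added column $b_t$ costs a factor $\Delta_\ell^{2}$ in the surviving row set, giving the much weaker but sufficient bound $|S_{\ge\ell+1}|\ge |A_L\cdots A_{\ell+1}|/\Delta_\ell^{2x_\ell}$, and the parameters are set so that this $x_\ell$-fold exponent blow-up is absorbed by $\Delta_{\ell+1}$.

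Moreover, ``a random/greedy choice keeps the cover large in expectation'' does not address the actual tension: nothing prevents the columns that maximize the surviving row set from being (nearly) constant functions, which would collapse $\mathcal{I}_\ell$. The paper rules this out with a separate counting argument (\Cref{lem:size-c}): the set of ``bad'' columns $b\in A_\ell$ whose image on $[n_{\ell-1}]\times \mathcal{I}_{\ell-1}$ adds fewer than $8^{-L}n_{\ell-1}|\mathcal{I}_{\ell-1}|$ new values is an exponentially small fraction of $A_\ell$ (this uses the structure of $A_\ell$ as all functions $[N_{\ell-1}]\to[N_{\ell-1}]$ and that the already-covered set is tiny compared to $N_{\ell-1}$), so the greedy process can always pick a column that is simultaneously dense and cover-increasing. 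The same issue appears in your base case: selecting $Z_1$ with large $\mathcal{I}_1$ is not a pigeonhole step but the analogous greedy-plus-counting argument (\Cref{lem:initial}). Finally, a minor quantitative slip: the saving from not consulting $z_\ell$ in Step~2 is not ``an extra $\log|Z_\ell|$ loss'' but a multiplicative factor $|Z_\ell| = x_\ell$ in the exponent of the entropy loss, which is exactly why the recursion would break without it.
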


We finish this subsection by proving Lemma~\ref{lemma:id-suffices}.

\begin{proof}[Proof of Lemma \ref{lemma:id-suffices}]

First, by the large remain entropy property and our choice of parameters, we have
\begin{align*}
|R_{\geq L}| \geq &~ |A_{L}|/\Delta_{L} = |A_{L}|\cdot 2^{-{4\sqrt{K}} x_0 \cdots x_{L-2} \cdot n_{1}\cdots n_{L-1}} > |A_{L}|\cdot 2^{-8^{-L^2}(x_0\cdots x_{L-1})(n_1\cdots n_{L-1})}\\
\geq &~ |A_{L}|\cdot 2^{-n_{L-1}\Theta_{L-1}}
\geq |A_{L}| \cdot 2^{-n_{L-1}|\mI_{L-1}(Z_{<L})|}
\geq \frac{|A_{L}|}{ (N_{L-1})^{n_{L-1}|\mI_{L-1}(Z_{<L})|}}.
\end{align*}
Here the second step follows from the definition of $\Delta_{L}$ (see Eq.~\eqref{eq:def-delta}), the third and fourth step follow from the choice of parameters (see Eq.~\eqref{eq:parameter1}\eqref{eq:parameter3}), the fifth step follows from $|\mI_{\ell-1}(Z_{<L})| \geq \Theta_{\ell-1}$, i.e., the large cover property.

 Hence, there exists $\wt{i}_{L-1} \in \mI_{L-1}(Z_{<L})$ and $\wt{w}_{L-1} \in [n_{\ell-1}]$, such that, there exists $z_{L}', z_{L}''\in R_{\ge L}$ satisfying $z_{L}'(\wt{w}_{L-1}, \wt{i}_{L-1}) \neq z''_{L}(\wt{w}_{L-1}, \wt{i}_{L-1})$. 
 By definition, there exists $\wt{z}_{<L}\in Z_{<L}$, 
such that (1) $\wt{i}_{L-1} = i_{L-1}(\wt{z}_{<L})$ and (2) $\wt{z}_{-1, L-1} = \wt{w}_{L-1}$.
 By the definition of $R_{\ge L}$, the messages sent from player $L$ to players $[-1:L-1]$ are the same under $z_{L}', z_{L}''$, when the $\ell$-th player ($\ell\in [-1:L-1]$) receives $\wt{z}_{\ell}$ as input.
 Hence, the player $-1$ can not distinguish between $z_{L}''$ and $z_{L}'$, and therefore, it could not be correct on answering both $i_{L}' = z_{\ell}'(\wt{w}_{L-1}, \wt{i}_{L-1})$ and $i_{L}'' = z_{\ell}''(\wt{w}_{L-1}, \wt{i}_{L-1})$, since $z_{L}'(\wt{w}_{L-1}, \wt{i}_{L-1}) \neq z''_{L}(\wt{w}_{L-1}, \wt{i}_{L-1})$.


\end{proof}

\subsection{Constructing Indistinguishable Decompositions via Induction}

The rest of this section is devoted to prove Lemma \ref{lemma:id-exists}, we will indeed prove it via an induction specified in the~\Cref{lem:main}, whose $\ell = L$ case is exactly Lemma~\ref{lemma:id-exists}.


\begin{lemma}[Main Lemma]
\label{lem:main}
For any $\ell \in [2: L]$, 
\begin{itemize}
\item We have a pair of sets $(R_{\geq \ell}, Z_{<\ell})$, where $R_{\geq \ell} \subseteq A_{L}\times A_{L-1}\times \cdots \times A_{\ell}$, $Z_{< \ell} = Z_{-1}\times Z_{0}\times \cdots \times Z_{\ell-1}$, with $Z_{-1} = [n_1\cdots n_{L-1}]$, $Z_0 \subseteq A_0$, $Z_{1} \subseteq A_1, \ldots, Z_{\ell-1} \subseteq A_{\ell-1}$ and they have size $|Z_0| = x_0, |Z_1| = x_1, \ldots, |Z_{\ell-1}| = x_{\ell-1}$; 
\item We can fix the transcript from players $[\ell: L]$ to $[-1:\ell-1]$ at the first $\ell$ epochs, when the players $[-1:\ell-1]$ take input from $Z_{<\ell}$.
i.e., 
\begin{align*}
\Lambda^{(\ell)} := &~  \left(\Lambda_{j, i}^{(\ell, \ell')}\right)_{j \in [\ell: L], i \in [-1:\ell-1], \ell' \in [\ell]}
\end{align*}
where
\begin{align*}
\Lambda_{j, i}^{(\ell, \ell')} := \left(\Lambda_{j, i}^{(\ell, \ell')}(\wt{z}_{\ell-1}, \ldots, \wt{z}_{i})\right)_{\wt{z}_{\ell-1} \in Z_{\ell-1}, \ldots, \wt{z}_{i} \in Z_i} \quad \text{and} \quad \Lambda_{j, i}^{(\ell, \ell')}(\wt{z}_{\ell-1}, \ldots, \wt{z}_{i}) \in \mathsf{domain}(\Pi_{j, i}^{(\ell')})
\end{align*}
\end{itemize}
such that we have the following guarantees:
\begin{itemize}
\item ({\bf Consistency}) 
$\Lambda^{(\ell)}$ is the first $\ell$-epoch transcript from players $[\ell: L]$ to $[-1:\ell-1]$, 
when they take input from $R_{\geq \ell}$ and $Z_{< \ell}$, i.e.,
\begin{align*}
&~\Pi_{j, i}^{(\ell')}(\wt{z}_{L}, \ldots, \wt{z}_{i}) =  \Lambda_{j, i}^{(\ell, \ell')}(\wt{z}_{\ell-1}, \ldots, \wt{z}_{i}) \\
&~ \forall j \in [\ell: L], i \in [-1:\ell-1], \ell' \in [\ell], \wt{z}_{\geq \ell} \in R_{\geq L}, \wt{z}_{\ell-1}\in Z_{\ell-1}, \ldots \wt{z}_{i}\in Z_i, 
\end{align*}
\item ({\bf Large remaining entropy}) 
The size of $R_{\geq \ell}$ is large, i.e.,
\begin{align}
|R_{\geq \ell}| \geq &~ |A_{L}|  \cdots  |A_{\ell}| / \Delta_{\ell}  \label{eq:inductive-hypothesis1}.
\end{align}
\item ({\bf Large cover}) The total number of possible $i_{\ell-1}$ under $Z_{-1}, Z_0, Z_1, \ldots, Z_{\ell-1}$ is large, i.e.,
\begin{align}
\mI_{\ell-1}:= \{\wt{i}_{\ell-1}: \wt{i}_{\ell-1} = i_{\ell-1}(\wt{w}, \wt{z}_0, \ldots, \wt{z}_{\ell-1}) \text{ for some } \wt{w} \in Z_{-1}, \wt{z}_0\in Z_0, \ldots, \wt{z}_{\ell}\in Z_{\ell-1}\} \notag
\end{align}
and its size satisfies
\begin{align}
|\mI_{\ell-1}| \geq \Theta_{\ell-1} \label{eq:inductive-hypothesis2}.
\end{align}
\end{itemize}
\end{lemma}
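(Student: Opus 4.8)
\textbf{Proof proposal for Lemma~\ref{lem:main}.}

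The plan is to prove Lemma~\ref{lem:main} by induction on $\ell$, starting from a base case at $\ell = 2$ and following exactly the two-part construction sketched in the technique overview. For the base case, players $[-1:1]$ have relatively small input domains, so we can take $Z_{-1} = A_{-1}$, choose $Z_0 \subseteq A_0$ of size $x_0$ and $Z_1 \subseteq A_1$ of size $x_1$ essentially arbitrarily (ensuring the cover $\mathcal{I}_1$ is large by a counting/averaging argument — since $z_1$ and $z_2$ can route the $m$-sized first coordinate almost anywhere, a generic choice gives $|\mathcal{I}_1| \ge \Theta_1$), and then fix the first two epochs of transcripts from players $[2:L]$ to players $[-1:1]$ greedily. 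Since the total number of transcript bits sent in two epochs to these players is bounded by $2 \cdot 2B \cdot (m_{(-1)} + m_{(0)} + m_{(1)}) \cdot L$-ish, the largest transcript-class over the product domain $A_L \times \cdots \times A_2$ retains a $\Delta_2^{-1}$ fraction, which gives the large-remaining-entropy bound by our choice of $\Delta_2$.

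For the inductive step, assume $(R_{\ge \ell}, Z_{<\ell}, \Lambda^{(\ell)})$ satisfies all the guarantees; I will construct $(R_{\ge \ell+1}, Z_{<\ell+1}, \Lambda^{(\ell+1)})$. In \textbf{Part 1}, I extract a combinatorial rectangle $S_{\ge \ell+1} \times Z_\ell \subseteq R_{\ge \ell}$: viewing $R_{\ge \ell} \subseteq (A_L \times \cdots \times A_{\ell+1}) \times A_\ell$ as a bipartite relation, a Kővári–Sós–Turán / dependent-random-choice type argument, or a direct averaging over the $A_\ell$ coordinate, yields such a rectangle with $|S_{\ge \ell+1}|$ large and $|Z_\ell| = x_\ell$; I then need $|\mathcal{I}_\ell(Z_{<\ell+1})| \ge \Theta_\ell$, which I get by choosing $Z_\ell$ among the many candidate size-$x_\ell$ subsets so that the functions in $Z_\ell$ collectively map the $\ge \Theta_{\ell-1}$ values of $\mathcal{I}_{\ell-1}$, together with the $n_\ell$ query coordinates available, onto $\ge \Theta_\ell$ distinct targets — here the multiplicative slack between $\Theta_\ell$ and $n_\ell \cdot \Theta_{\ell-1}$ (which is roughly $x_\ell$, a single function's worth of fresh image values) is exactly what the parameter choices in \eqref{eq:parameter1}--\eqref{eq:def-delta} are tuned to provide. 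In \textbf{Part 2}, I fix the remaining transcripts in three steps: Step 1 reuses $\Lambda^{(\ell)}$ for epochs $[1:\ell]$ to players $[-1:\ell-1]$ (consistency is inherited since $S_{\ge \ell+1} \times Z_\ell \subseteq R_{\ge \ell}$); Step 2 — the crux — fixes the $(\ell+1)$-th epoch transcripts to players $[-1:\ell-1]$, crucially using that all of $Z_\ell$ is indistinguishable to those players after $\ell$ epochs (by the indistinguishable-decomposition property carried in the hypothesis), so the greedy choice of a most-popular transcript does \emph{not} need to be indexed by $z_\ell \in Z_\ell$, saving a factor that would otherwise be fatal; this passes to a large $T_{\ge \ell+1} \subseteq S_{\ge \ell+1}$; Step 3 greedily fixes the transcripts to player $\ell$ over all $\ell+1$ epochs, cutting down to $R_{\ge \ell+1} \subseteq T_{\ge \ell+1}$.

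The quantitative heart is tracking the entropy loss: I must show $|R_{\ge \ell+1}| \ge |A_L| \cdots |A_{\ell+1}| / \Delta_{\ell+1}$. The rectangle extraction in Part 1 costs a factor that, amortized, is roughly $|A_\ell|^{1 - x_\ell / |A_\ell|}$-type, i.e.\ it loses all of $A_\ell$'s entropy down to $Z_\ell$'s, which is fine since $A_\ell$ is being removed from the product anyway; the genuinely new losses are in Step 2 and Step 3, where the total number of transcript bits sent in one epoch to players $[-1:\ell-1]$ (resp.\ over $\ell+1$ epochs to player $\ell$) is at most $2B \cdot L \cdot \sum_{i \le \ell-1} m_{(i)}$ (resp.\ $2B(\ell+1) m_{(\ell)} = 2B(\ell+1) N_{\ell-1}$), and each greedy step keeps at least a $2^{-(\text{those bits})}$ fraction — except that in Step 2 the $Z_\ell$-independence means we pay only for players $[-1:\ell-1]$, not for a $|Z_\ell|$-fold product of transcript tuples. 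Unwinding, the loss factor from $\Delta_\ell$ to $\Delta_{\ell+1}$ is dominated by the $2^{O(\sqrt{K}\, x_0 \cdots x_{\ell-1} \cdot n_1 \cdots n_{L-1})}$ term in \eqref{eq:def-delta}, and one checks this telescopes correctly. \textbf{The main obstacle} I anticipate is Step 2: making rigorous the claim that the $(\ell+1)$-th epoch transcript to players $[-1:\ell-1]$ is literally a function of $(z_{\ge \ell+1}, z_{<\ell})$ alone and independent of $z_\ell \in Z_\ell$ — this requires that the $\ell$-epoch information state of each player in $[-1:\ell-1]$ is identical across all $z_\ell \in Z_\ell$ (which is the indistinguishable-decomposition hypothesis applied one level down, combined with the fact that in the $(\ell+1)$-th epoch player $i \le \ell - 1$ sends its \emph{state} $X_i^{(\ell)}$, not anything depending on $z_\ell$ directly), and then carefully arguing the \emph{returned} message $\Pi_{j,i}^{(\ell+1)}$ for $j \ge \ell+1$ depends only on $(X_i^{(\ell)}, X_j^{(\ell)})$ with $X_j^{(\ell)}$ ranging over $S_{\ge \ell+1}$ — so the greedy selection is over a domain not inflated by $Z_\ell$. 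Getting the bookkeeping of "which player's state depends on what" exactly right across the epoch boundary is where the proof is most delicate.
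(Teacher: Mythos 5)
Your overall route is the paper's own: induction on $\ell$, a rectangle extraction giving $S_{\ge \ell+1}\times Z_\ell\subseteq R_{\ge\ell}$ together with a large cover, reuse of $\Lambda^{(\ell)}$ for the first $\ell$ epochs, the $z_\ell$-independence of the $(\ell+1)$-epoch messages to players $[-1:\ell-1]$ (the paper's Lemma~\ref{lem:key-observation}), and greedy fixing of the remaining transcripts; you also correctly flag the delicate point in Step 2. The genuine gap is in your base case $\ell=2$: you cannot take $Z_1\subseteq A_1$ ``essentially arbitrarily'' and then fix transcripts greedily. The object being fixed is a transcript \emph{table}, indexed by all receiver-side inputs, not a single transcript; your count of ``total transcript bits'' is the length of one transcript and undercounts accordingly. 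Concretely, if $Z_1$ is arbitrary, then player $-1$'s state $X^{(1)}_{-1}$ contains $\Pi^{(1)}_{1,-1}(z_1,z_{-1})$, which varies with $z_1$, so the epoch-$2$ messages from players $[2:L]$ to player $-1$ must be tabulated over $(z_1,z_0,z_{-1})\in Z_1\times Z_0\times Z_{-1}$, and the greedy/averaging step only guarantees a surviving fraction $2^{-\Omega(Hdp\,L\,x_1 x_0\, n_1\cdots n_{L-1})}$. Since $x_1\ge K\gg \sqrt{K}$, this is far below the required $1/\Delta_2$ with $\Delta_2=2^{4\sqrt{K}\,x_0\,n_1\cdots n_{L-1}}$, so \eqref{eq:inductive-hypothesis1} is not delivered. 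This is precisely why the paper first fixes the epoch-$1$ message $\Pi^{(1)}_{1,-1}$, restricting to a large set $S\subseteq A_1$ of player-$1$ inputs that are already indistinguishable to player $-1$, and only then chooses $Z_1\subseteq S$; the indistinguishability you rely on in the inductive step is not inherited in the base case and must be engineered this way. Moreover, the cover property must then be established \emph{inside} $S$, which is the content of Lemma~\ref{lem:initial} (a set of density $2^{-2Hdp\,n_1\cdots n_{L-1}}$ in $A_1$ cannot have all its functions' images on $Z_0$ confined to a small set); a ``generic choice over all of $A_1$'' argument does not give this.

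A secondary incompleteness is in Part 1 of your inductive step: you invoke a K\H{o}v\'ari--S\'os--Tur\'an / dependent-random-choice rectangle extraction and then hope to choose $Z_\ell$ afterwards to obtain the cover, but the two requirements interact and the paper meets them \emph{jointly}: the greedy process of Lemma~\ref{lem:sequence} must, at each of the $x_\ell$ steps, pick a $b_t$ that simultaneously keeps a $\Delta_\ell^{-2}$ fraction of the current slice and contributes at least $8^{-L}n_{\ell-1}|\mathcal{I}_{\ell-1}|$ fresh image values, and this hinges on the counting bound of Lemma~\ref{lem:size-c} that at most a $\Delta_\ell^{-2}$ fraction of $A_\ell$ consists of image-confined functions. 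Relatedly, the Part-1 loss is a factor $\Delta_\ell^{2x_\ell}$ on $|A_L|\cdots|A_{\ell+1}|$ (later absorbed because $8x_\ell\le x_{\ell-1}$), not merely ``losing $A_\ell$'s entropy''; your final telescoping claim is directionally right, but this is where the exponent $2x_\ell$ must be tracked explicitly.
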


\subsection{The Initial step} 
\label{sec:initial}
We first prove the correctness of Lemma \ref{lem:main} for $\ell = 2$.

\subsubsection{Step 1: Choosing $Z_{0}, Z_{1}$}
We take $Z_0 = [x_0]$ and our first step is to select the set $Z_1 \subseteq [N_0]$.
To this end, consider all possible first epoch messages from the player $1$ to the player $-1$, i.e., 
\[
\Psi_{1,-1}^{(1)} = \left(\Psi_{1,-1}^{(1)}(\wt{z}_{-1})\right)_{\wt{z}_{-1} \in Z_{-1}} \quad \text{where}  \quad \Psi_{1,-1}^{(1)}(\wt{z}_{-1}) \in \{0,1\}^{2Hdp}.
\]
The total number of possible $\Psi_{1,-1}^{(1)}$ is $2^{2Hdp \cdot |Z_{-1}|} = 2^{2Hdp (n_1\cdots n_{L-1})}$, and therefore, there exists one $\wt{\Psi}_{1,-1}^{(1)} \in \{0,1\}^{2Hdp \cdot (n_1\cdots n_{L-1})}$, such that 
\begin{align*}
S := \{\wt{z}_{1} \in A_1: \Pi_{1, -1}^{(1)}(\wt{z}_1, \wt{z}_{-1}) = \Psi_{1,-1}^{(1)}(\wt{z}_{-1}) \, \,  \forall \wt{z}_{-1} \in Z_{-1}\}  \subseteq A_1 
\end{align*}
and
\begin{align}
|S| \geq |A_1| \cdot 2^{-2Hdp \cdot (n_1\cdots n_{L-1})}.\label{eq:initial1}
\end{align}
Note the first epoch message depends only on $\wt{z}_1$ and $\wt{z}_{-1}$ so we can write it as $\Pi_{1, -1}^{(1)}(\wt{z}_1, \wt{z}_{-1})$. 


The proof of the following Lemma can be found at Section \ref{sec:tech}.

\begin{lemma}
\label{lem:initial}
There exists a subset $Z_1 \subseteq S$ with size $|Z_1| = x_1$, such that it satisfies  
\begin{align}
|\{ \wt{z}_1(i_0): \wt{z}_1 \in Z_1, i_0 \in Z_0 \}| \geq 8^{-L} x_0 x_1 = \Theta_1. \label{eq:initial-requirement}
\end{align}
\end{lemma}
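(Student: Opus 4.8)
The goal is to find a size-$x_1$ subset $Z_1 \subseteq S$ so that the image set $\{\wt{z}_1(i_0) : \wt{z}_1 \in Z_1, i_0 \in Z_0\}$ has size at least $\Theta_1 = 8^{-L} x_0 x_1$. The plan is to argue that a random subset of $S$ of the right size works with positive probability, or equivalently to build $Z_1$ greedily. The key structural fact to exploit is that $S$ is very large: by~\eqref{eq:initial1}, $|S| \geq |A_1| \cdot 2^{-2Hdp(n_1 \cdots n_{L-1})}$, and $|A_1| = N_0^{N_0}$ is astronomically larger than this deficit, since $N_0 = m$ is chosen (via~\eqref{eq:parameter1}) to dominate all the relevant parameters. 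So $S$ "looks like" almost all of $A_1$ in the sense that one cannot carve out a dense fraction of $A_1$ while keeping the images of the selected functions confined to a small set.

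The concrete approach I would take: first observe that for a \emph{single} function $\wt{z}_1 \in A_1$, its restriction to the $x_0$-element set $Z_0 = [x_0]$ can be any of the $N_0^{x_0}$ patterns, and all patterns are equally represented in $A_1$ (each extends to exactly $N_0^{N_0 - x_0}$ functions). The set of functions whose image on $Z_0$ lies within a fixed set $U \subseteq [N_0]$ of size $u$ has density $(u/N_0)^{x_0}$ in $A_1$. So if $|S| > |A_1| \cdot (u/N_0)^{x_0}$, then $S$ cannot be contained in the union of all such "confined" function-families for a single $U$; more to the point, I want to show we can pick $x_1$ functions from $S$ whose combined image is large. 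I would set this up as follows: build $Z_1 = \{g_1, \dots, g_{x_1}\}$ one element at a time; having chosen $g_1, \dots, g_{t}$ with current image $U_t = \bigcup_{s \le t}\bigcup_{i_0 \in Z_0} \{g_s(i_0)\}$, if $|U_t| < \Theta_1$ we want to find $g_{t+1} \in S$ adding a new value. The functions in $S$ that fail to add any new value are those with $g(Z_0) \subseteq U_t$, a set of density $(|U_t|/N_0)^{x_0} \le (\Theta_1/N_0)^{x_0}$ in $A_1$. As long as $|S| > x_1 \cdot |A_1| \cdot (\Theta_1/N_0)^{x_0}$ — wait, more carefully, as long as the number of "bad" functions is less than $|S|$ we can keep extending, but we also need each step to add a controlled number of new values. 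The cleanest version: show $|S| > |A_1|\cdot(\Theta_1/N_0)^{x_0}$, which forces the existence of \emph{some} $g \in S$ with $g(Z_0) \not\subseteq$ any fixed $\Theta_1$-set; iterate, noting that each new function adds at least one new image value and we only need $\Theta_1 \le x_0 x_1$ total, so after at most $\Theta_1$ successful additions (well within $x_1$ choices, since $\Theta_1 = 8^{-L}x_0 x_1 \le x_1$ when $x_0 \le 8^L$... actually $x_0$ is huge, so this needs care) we are done. The count $(\Theta_1/N_0)^{x_0}$ versus $2^{-2Hdp(n_1\cdots n_{L-1})}$ is the numerical heart: we need $(N_0/\Theta_1)^{x_0} \gg 2^{2Hdp(n_1\cdots n_{L-1})}$, and since $N_0/\Theta_1 \ge N_0 / (x_0 x_1) = m/(x_0 x_1)$ which by~\eqref{eq:parameter1} is still a large power of $K$, while $x_0 = K^{8^{L-1}}$ is a large exponent, the left side wins comfortably against $2^{2Hdp(n_1\cdots n_{L-1})}$ (note $Hdp \le K^{1/8}$ and $n_1 \cdots n_{L-1} \le K^{4 \cdot (8^{L-1})}$ roughly, all polynomially bounded in the exponents compared to $x_0 \log(m/x_0 x_1)$).

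The main obstacle I anticipate is the bookkeeping in the greedy/counting argument: one must be careful that "adding a new value" can be guaranteed repeatedly, i.e., that the bad set (functions confined to the current image) stays a small fraction of $A_1$ throughout, and simultaneously that $|Z_1|$ does not need to exceed $x_1$ before the image reaches $\Theta_1$. Since each added function contributes between $1$ and $x_0$ new values, in the worst case we add one value per function and need up to $\Theta_1 = 8^{-L} x_0 x_1$ functions — but $\Theta_1$ could exceed $x_1$ since $x_0$ is enormous. The resolution is that we do not actually need every function to contribute only one value: a \emph{typical} function in $S$ will have $g(Z_0)$ of size close to $x_0$ (collisions among $x_0 \ll N_0$ random values are negligible), so each new function nearly $x_0$-tuples the image until saturation, and $\Theta_1/x_0 = 8^{-L}x_1 \ll x_1$ functions suffice. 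So the argument should instead be: a random size-$x_1$ subset of $S$ has, with high probability, union-image of size $\ge \Theta_1$, proved by a first-moment/concentration bound on the number of "collisions" — functions or pairs of values colliding — using that $S$ is dense enough in $A_1$ that it inherits near-uniform marginals on $Z_0$. I would formalize "$S$ inherits near-uniform marginals" via the density bound and carry out a union bound over the at most $N_0^{\Theta_1}$ possible small image sets, which is the step to double-check against the parameter choices.
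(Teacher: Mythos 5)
Your final plan lands on essentially the same counting mechanism as the paper, but executes it differently, and after checking the parameters it does work. The paper runs a deterministic greedy in which each newly added function must contribute at least $8^{-L}x_0$ \emph{fresh} image values (not just one); if the greedy stalls before $x_1$ steps with image still below $8^{-L}x_0x_1$, then every member of $S$ has its values on $Z_0$ confined to the current image plus a per-function set of fewer than $8^{-L}x_0$ extra values, and the resulting count $\binom{N_0}{8^{-L}x_0}\,(8^{-L}x_0x_1)^{x_0}\,N_0^{\,N_0-x_0}$ falls below the bound~\eqref{eq:initial1} on $|S|$, a contradiction. You instead sample $x_1$ functions from $S$ and union-bound over all candidate image sets $U$ of size $\Theta_1$, bounding the per-sample confinement probability by $|A_1|(\Theta_1/N_0)^{x_0}/|S|\le(\Theta_1/N_0)^{x_0}\,2^{2Hdp(n_1\cdots n_{L-1})}$ and raising it to the power $x_1$. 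The step you flagged does go through: $\Theta_1\log_2 N_0\le\bigl(8^{-L}+\tfrac17\bigr)x_0x_1\log_2 K$ while $x_0x_1\log_2(N_0/\Theta_1)\ge x_0x_1\log_2 K$ and $2Hdp(n_1\cdots n_{L-1})\,x_1\le 2K^{1/8+4\cdot 8^{L-1}/7}x_1\ll x_0x_1\log_2 K$, so $\binom{N_0}{\Theta_1}\bigl[(\Theta_1/N_0)^{x_0}2^{2Hdp(n_1\cdots n_{L-1})}\bigr]^{x_1}<1$ (and duplicates from sampling are harmless: pad to exactly $x_1$, which only grows the image). The trade-off is that the paper's adaptive greedy pays only a $\binom{N_0}{8^{-L}x_0}$ factor per function because the confining set is the current image itself, whereas your global union bound pays $N_0^{\Theta_1}$ but gains the amplification exponent $x_1$; both are comfortably dominated by $-x_0x_1\log_2(N_0/\Theta_1)$.

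Two cautions. First, your intermediate claim that $S$ ``inherits near-uniform marginals on $Z_0$'' is false as stated: a density deficit of $2^{2Hdp(n_1\cdots n_{L-1})}$ is far more than enough for the protocol to fix many coordinates of $z_1$ outright, so no per-coordinate uniformity survives; fortunately it is also unnecessary, since the union bound (like the paper's count) uses only the raw cardinality bound~\eqref{eq:initial1}. Second, your first greedy attempt (one new value per step) is indeed a dead end for the reason you noticed ($\Theta_1\gg x_1$); the paper's fix is precisely to demand $8^{-L}x_0$ new values per step, which is the version to adopt if you prefer the deterministic route.
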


We take the subset $Z_1$ in Lemma \ref{lem:initial} and it remains to fix the transcripts from players $j \in [2:L-1]$ to players $i = -1, 0, 1$ at the first two epochs.

\subsubsection{Step 2.1: Fixing the transcript to player $-1$} We first fix the transcript to player $-1$. 
For the first epoch, we need to fix $\Lambda^{(2, 1)}_{j, -1}(\wt{z}_1, \wt{z}_0, \wt{z}_{-1})$ for every $\wt{z}_1\in Z_1, \wt{z}_0 \in Z_{0}, \wt{z}_{-1}\in Z_{-1}$. We note that, the first epoch message from player $j$ to player $-1$ depends only on $\wt{z}_{-1}$ and player $j$'s input, but not on $\wt{z}_1, \wt{z}_0$, hence it suffices to find some 
\begin{align*}
\Phi_{j, -1}^{(1)} = \left(\Phi_{j, -1}^{(1)}(\wt{z}_{-1})\right)_{\wt{z}_{-1}\in Z_{-1}}   \quad \text{where} \quad \Phi_{j, -1}^{(1)}(\wt{z}_{-1})\in \{0,1\}^{2Hdp}.
\end{align*}
and set
\[
\Lambda^{(2, 1)}_{j, -1}(\wt{z}_1, \wt{z}_0, \wt{z}_{-1}) = \Phi_{j, -1}^{(1)}(\wt{z}_{-1}) \quad \forall \wt{z}_1\in Z_1, \wt{z}_0 \in Z_0, \wt{z}_{-1} \in Z_{-1}
\] 
The total number of such transcripts are at most $2^{2Hdp \cdot |Z_{-1}|} = 2^{2Hdp \cdot (n_1\cdots n_{L-1})}$.
Hence, we can choose $\{\Lambda^{(2, 1)}_{j, -1}\}_{j\in [2:L]}$ 
, such that the set of consistent $z_{L},\ldots, z_{2}$, 
\begin{align*}
C_1: = \left\{
\begin{matrix}
(\wt{z}_{L},\ldots, \wt{z}_2) \in A_{L} \times \cdots \times A_2:\\
\Pi_{j, -1}^{(1)}(\wt{z}_{L}, \ldots, \wt{z}_{0}, \wt{z}_{-1}) = \Lambda^{(2, 1)}_{j, -1}(\wt{z}_1, \wt{z}_0, \wt{z}_{-1}) \,\, \forall \wt{z}_1\in Z_1, \wt{z}_0\in Z_0, \wt{z}_{-1}\in Z_{-1}, j\in [2:L] 
\end{matrix}\right\}.
\end{align*}
satisfies
\begin{align*}
|C_1| \geq |A_{L}|\cdots |A_2| \cdot 2^{-2Hdp \cdot (n_1\cdots n_{L-1}) \cdot L}.
\end{align*}

For the second epoch, we need to fix $\Lambda^{(2, 2)}_{j, -1}(\wt{z}_1, \wt{z}_0, \wt{z}_{-1})$ for every $\wt{z}_1\in Z_1, \wt{z}_0 \in Z_{0}, \wt{z}_{-1}\in Z_{-1}$. 
The transcript from player $j \in [2:L]$ to player $-1$ depends only on the information state $X_{-1}^{(1)}$ and $X_{j}^{(1)}$, which are independent of the choice of $z_1 \in Z_1$.
This is because, the only message in $X_{-1}^{(1)}$ and $X_{j}^{(1)}$ that depends on $z_{1}$ is the first epoch message from player $1$ to $-1$, which equals to $\Psi_{1, -1}^{(1)}(\wt{z}_{-1})$ (see Eq.~\eqref{eq:initial1} and Lemma \ref{lem:initial}) and it is the same for every $\wt{z}_1 \in Z_1$. 
Hence it suffices to find some 
\[
\Phi_{j, -1}^{(2)} := \left(\Phi_{j, -1}^{(2)}(\wt{z}_0, \wt{z}_{-1})\right)_{\wt{z}_0\in Z_0, \wt{z}_{-1} \in Z_{-1}}
\]
and set 
\[
\Lambda^{(2, 2)}_{j, -1}(\wt{z}_1, \wt{z}_0, \wt{z}_{-1}) = \Phi_{j, -1}^{(2)}(\wt{z}_0, \wt{z}_{-1}) \quad \forall \wt{z}_1\in Z_1, \wt{z}_0\in Z_0, \wt{z}_{-1} \in Z_{-1}
\]
The total number of such transcripts are at most $2^{2Hdp \cdot x_0 \cdot (n_1\cdots n_{L-1})}$.
Hence, we can properly choose $\{\Lambda^{(2, 2)}_{j, -1}\}_{j\in [2:L-1]}$, such that the set of consistent $(z_{L},\ldots, z_{2})$
\begin{align*}
C_2: = \left\{
\begin{matrix}
(\wt{z}_{L},\ldots, \wt{z}_2) \in C_1:\\
\Pi_{j, -1}^{(2)}(\wt{z}_{L}, \ldots, \wt{z}_{0}, \wt{z}_{-1}) = \Lambda^{(2, 2)}_{j, -1}(\wt{z}_1, \wt{z}_0, \wt{z}_{-1}) \,\, \forall \wt{z}_1\in Z_1, \wt{z}_0\in Z_0, \wt{z}_{-1}\in Z_{-1}, j\in [2:L] 
\end{matrix}\right\}.
\end{align*}
satisfies
\[
|C_2| = |C_1| \cdot 2^{- 2Hdp \cdot x_0 (n_1\cdots n_{L-1})  \cdot L} \geq |A_{L}\cdots A_2| \cdot 2^{-4HdpL \cdot x_0 (n_1\cdots n_{L-1})}
\]

\subsubsection{Step 2.2: Fixing the transcript to player $0$} We then fix the transcript to player $0$. 
The total number of transcripts $\{\Lambda_{j, 0}^{(2, \ell')}\}_{j \in [2:L], \ell'\in [2]}$ of the first two epochs is at most $2^{2Hdp \cdot x_0 x_1 \cdot 2L}$. 
We can fix its value so that the set of consistent $(z_{L},\ldots, z_{2})$  
\begin{align*}
C_3: = \left\{
\begin{matrix}
(\wt{z}_{L},\ldots, \wt{z}_2) \in C_2:\\
\Pi_{j, 0}^{(\ell')}(\wt{z}_{L}, \ldots, \wt{z}_{0}) = \Lambda^{(2, \ell')}_{j, 0}(\wt{z}_1, \wt{z}_0) \,\, \forall \wt{z}_1\in Z_1, \wt{z}_0\in Z_0, j\in [2:L], \ell'\in [2] 
\end{matrix}\right\}.
\end{align*}
satisfies
\[
|C_3| \geq |C_2| \cdot 2^{- 2Hdp \cdot x_0 x_1  \cdot 2L} \geq |A_{L}\cdots A_2| \cdot 2^{-6HdpL \cdot x_0 (n_1\cdots n_{L-1})}.
\]

\subsubsection{Step 2.3: Fixing the transcript to player $1$} Finally, we fix the transcript to player $1$.  
The total number of transcripts 
$\{\Lambda_{j, 1}^{(2, \ell')}\}_{j \in [2:L], \ell'\in [2]}$
of the first two epochs is at most $2^{2Hdp m \cdot x_1 \cdot 2L}$, and we can fix the value so that the set of consistent $(z_{L},\ldots, z_{2})$ 
\begin{align*}
C_4: = \left\{
\begin{matrix}
(\wt{z}_{L},\ldots, \wt{z}_2) \in C_3:\\
\Pi_{j, 1}^{(\ell')}(\wt{z}_{L}, \ldots, \wt{z}_{1}) = \Lambda^{(2, \ell')}_{j, 1}(\wt{z}_1) \,\, \forall \wt{z}_1\in Z_1,  j\in [2:L], \ell'\in [2] 
\end{matrix}\right\}.
\end{align*}
and we have
\begin{align}
C_4 \geq C_3 \cdot 2^{-2Hdp m \cdot x_1 \cdot 2L} \geq &~ |A_{L}|\cdots |A_2| \cdot 2^{-8HdpL \cdot x_0 (n_1\cdots n_{L-1})}\notag \\
\geq &~ |A_{L}|\cdots |A_2| \cdot 2^{-{4\sqrt{K}} x_0 (n_1\cdots n_{L-1})} = |A_{L}|\cdots| A_2| /\Delta_2 \label{eq:initial2}
\end{align}
Here the second step follows from the choice of parameters (see Eq.~\eqref{eq:parameter1}\eqref{eq:parameter3}) and the last step follows from the definition of $\Delta_2$ (see Eq.~\eqref{eq:def-delta}).

Combining Lemma \ref{lem:initial} and Eq.~\eqref{eq:initial2}, we conclude the proof for the case $\ell=2$.

\subsection{Inductive step} 
\label{sec:induction}
Suppose~\cref{lem:main} holds up to $\ell \in [2 : L-1]$, we prove it continues to hold for $\ell+1$. This proceeds in a few steps:
\begin{itemize}
\item We first select the set $Z_{\ell}$.
To this end, we find a rectangular subset from $R_{\ge\ell}$, i.e., we find $S_{\ge \ell+1} \times Z_{\ell} \subseteq R_{\ge \ell}$ where $S_{\ge \ell+1} \subseteq A_L \times \cdots \times A_{\ell+1}$ and $Z_{\ell} \subseteq A_\ell$, with the requirement that (1) $S_{\ge \ell+1}$ has large size, and (2) the size of $\mI(Z_{<\ell+1})$ is large.
see Section \ref{sec:choose_z_l} for details.
\item We then fix the transcripts to players $[-1:\ell-1]$ in the first $\ell$ epochs, for which we simply use the transcript $\Lambda^{(\ell)}$ from the inductive hypothesis; see Section \ref{sec:inductive-step1} for details.
\item Next, we fix the transcripts to players $[-1:\ell-1]$ for the $(\ell+1)$-th epoch. This is the key step of our proof.  Our key insight is that $Z_{\ell}$ is indistinguishable to players $[-1:\ell-1]$ after $\ell$ epochs, when they take input from $Z_{< \ell}$. Hence, the $(\ell+1)$-th epoch transcripts to players $[-1:\ell-1]$ are independent of the choice of $z_{\ell}\in Z_{\ell}$. We can then use a greedy strategy to select transcripts that leak the least amount of information, without consulting to the value of $z_{\ell} \in Z_{\ell}$; see Section \ref{sec:inductive-step2} for details.
\item Finally, we fix the transcripts to players $\ell$. We use a greedy strategy and select the transcript that leaks the least amount of information; see Section \ref{sec:inductive-step3} for details.
\end{itemize}

 Our key insight is that $Z_{\ell}$ is indistinguishable to players $[-1:\ell-1]$ after $\ell$ epochs, when they take input from $Z_{< \ell}$. Hence, the $(\ell+1)$-th epoch transcripts to players $[-1:\ell-1]$ are independent of the choice of $z_{\ell}\in Z_{\ell}$. We can then use a greedy strategy to select transcripts that leak the least amount of information about $S_{\ge \ell+1}$, without consulting to the value of $z_{\ell} \in Z_{\ell}$.

\subsubsection{Step 1: Choosing the set $Z_{\ell}$}
\label{sec:choose_z_l}

Recall the size of $R_{\geq \ell}$ satisfies 
\begin{align}
|R_{\geq \ell}| \geq |A_{L}| \times  \cdots \times |A_{\ell}|/\Delta_{\ell}.
\label{eq:inductive1}
\end{align}
We would like to select a rectangular subset from $R_{\geq \ell}$. The proof of Lemma \ref{lem:joint-set} is deferred to Section \ref{sec:tech}.
\begin{lemma}
\label{lem:joint-set}
There exists a subset $S^{(\ell)} \subseteq R_{\geq \ell}$ such that 
\begin{itemize}
\item $S^{(\ell)} = S^{(\ell)}_1 \times S^{(\ell)}_2$, where $S^{(\ell)}_1 \subseteq A_{L} \times \cdots \times A_{\ell+1}$, $S^{(\ell)}_2 \subseteq A_{\ell}$, with size 
\[
|S^{(\ell)}_1| \geq |A_{L}|\cdots |A_{\ell+1}|/\Delta_{\ell}^{2x_\ell} \quad \text{and}\quad |S^{(\ell)}_2| = x_{\ell}.
\]
\item We have
\begin{align*}
|\{i_{\ell}: i_{\ell} = \wt{z}_{\ell}(\wt{w}_{\ell-1}, \wt{i}_{\ell-1}) \text{ for some }\wt{w}_{\ell-1} \in [n_{\ell-1}],  \wt{i}_{\ell-1} \in \mI_{\ell-1}, \wt{z}_{\ell} \in S_2^{(\ell)}\}| \geq \Theta_{\ell}
\end{align*}
\end{itemize}
\end{lemma}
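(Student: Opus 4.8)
The plan is probabilistic. I would take $S_2^{(\ell)}$ to be a uniformly random $x_\ell$-element subset of $A_\ell$ and then set $S_1^{(\ell)} := \bigcap_{z_\ell \in S_2^{(\ell)}} \mathcal N(z_\ell)$, where $\mathcal N(z_\ell) := \{\alpha \in A_L\times\cdots\times A_{\ell+1} : (\alpha, z_\ell)\in R_{\ge\ell}\}$ is the slice of $R_{\ge\ell}$ above $z_\ell$. Then $S_1^{(\ell)}\times S_2^{(\ell)}\subseteq R_{\ge\ell}$ and $|S_2^{(\ell)}| = x_\ell$ hold by construction, so the task reduces to showing that, with positive probability, the random $S_2^{(\ell)}$ is simultaneously (i) \emph{dense}, meaning $|\bigcap_{z_\ell\in S_2^{(\ell)}}\mathcal N(z_\ell)| \ge |A_L|\cdots|A_{\ell+1}|/\Delta_\ell^{2x_\ell}$, and (ii) \emph{spreading}, meaning it covers at least $\Theta_\ell$ distinct values $\wt z_\ell(\wt w_{\ell-1},\wt i_{\ell-1})$ over $\wt w_{\ell-1}\in[n_{\ell-1}]$, $\wt i_{\ell-1}\in\mI_{\ell-1}$. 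I would prove (i) in expectation and (ii) with overwhelming probability, then combine; the spreading part is the inductive analogue of Lemma~\ref{lem:initial}.

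For (i): by~\eqref{eq:inductive1} the average slice has size $|R_{\ge\ell}|/|A_\ell| \ge |A_L|\cdots|A_{\ell+1}|/\Delta_\ell$, so a standard double-counting and convexity argument over all $x_\ell$-subsets yields $\mathbb E\big[\,|\bigcap_{z_\ell\in S_2^{(\ell)}}\mathcal N(z_\ell)|\,\big] \ge |A_L|\cdots|A_{\ell+1}|\cdot\binom{\lfloor|A_\ell|/\Delta_\ell\rfloor}{x_\ell}/\binom{|A_\ell|}{x_\ell}$. Since the parameters are chosen so that $\log_2|A_\ell| = N_{\ell-1}\log_2 N_{\ell-1}$ dominates $\log_2\Delta_\ell$ by a $\mathrm{poly}(K)$ factor, we have $x_\ell\Delta_\ell \ll |A_\ell|$, hence each of the $x_\ell$ factors of that ratio is at least $\tfrac1{\Delta_\ell}-\tfrac{x_\ell}{|A_\ell|}\ge \tfrac1{2\Delta_\ell}$, so the expectation is at least $|A_L|\cdots|A_{\ell+1}|\,(2\Delta_\ell)^{-x_\ell}$, comfortably above the target in (i).

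For (ii): fix a subset $\mathcal J\subseteq\mI_{\ell-1}$ with $|\mathcal J| = \lceil\Theta_{\ell-1}\rceil$ (possible by~\eqref{eq:inductive-hypothesis2}) and, using the natural bijection $[n_{\ell-1}]\times[N_{\ell-2}]\cong[N_{\ell-1}]$, let $D := \{(\wt w_{\ell-1},\wt i_{\ell-1}) : \wt w_{\ell-1}\in[n_{\ell-1}],\ \wt i_{\ell-1}\in\mathcal J\}\subseteq[N_{\ell-1}]$, so that $M := x_\ell|D| = x_\ell n_{\ell-1}\lceil\Theta_{\ell-1}\rceil$; it suffices to show $\{z_\ell(u) : z_\ell\in S_2^{(\ell)}, u\in D\}$ has size at least $\Theta_\ell$, and one checks from the parameters that $\Theta_\ell \le 8^{-L}M$. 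The crux is the following observation: if we replace the random $x_\ell$-subset by $x_\ell$ i.i.d.\ uniformly random functions (which are all distinct except with probability $\le x_\ell^2/|A_\ell|\le 1/2$), then the $M$ values $\{z_\ell(u)\}$ are i.i.d.\ uniform over $[N_{\ell-1}]$, because each coordinate of a uniformly random function is an independent uniform value. Hence the size of the cover is distributed as the number of distinct elements among $M$ i.i.d.\ uniform draws from $[N_{\ell-1}]$; since $M/N_{\ell-1} = 8^{-L(\ell-1)}(x_0\cdots x_\ell)/m < 8^{-L}$, its expectation is at least $(1-8^{-L})M$, and by McDiarmid's inequality (each draw changes this count by at most $1$) it is below $8^{-L}M \ge \Theta_\ell$ with probability $\exp(-\Omega(M))$. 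Undoing the conditioning costs a factor $2$, so $\Pr[\text{cover}<\Theta_\ell]\le 2\exp(-\Omega(M))$.

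Finally I would combine (i) and (ii). Because the parameters make $M = x_\ell n_{\ell-1}\Theta_{\ell-1}$ a $\mathrm{poly}(K)$ multiple of $x_\ell\log_2\Delta_\ell$, we get $2\exp(-\Omega(M))\le\tfrac14(2\Delta_\ell)^{-x_\ell}$; writing $E$ for the event that $S_2^{(\ell)}$ is spreading, $\mathbb E\big[\,|\bigcap\mathcal N(z_\ell)|\cdot\mathbf 1[E]\,\big]\ge \mathbb E\big[\,|\bigcap\mathcal N(z_\ell)|\,\big] - |A_L|\cdots|A_{\ell+1}|\cdot\Pr[\neg E] \ge \tfrac12|A_L|\cdots|A_{\ell+1}|\,(2\Delta_\ell)^{-x_\ell} > 0$, so some $x_\ell$-subset is both dense and spreading; taking $S_1^{(\ell)}$ to be its common neighborhood gives $|S_1^{(\ell)}| \ge |A_L|\cdots|A_{\ell+1}|/\Delta_\ell^{2x_\ell}$, and a cover of $\mathcal J\subseteq\mI_{\ell-1}$ is in particular a cover of $\mI_{\ell-1}$. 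The main obstacle I anticipate is the parameter bookkeeping --- checking $x_\ell\Delta_\ell\ll|A_\ell|$, $M\gg x_\ell\log_2\Delta_\ell$, and $8^L M < N_{\ell-1}$ --- all of which hinge on the doubly-exponential-in-$\ell$ gaps between the $K$-exponents of $m,n_\ell,x_\ell,\Delta_\ell$ coming from~\eqref{eq:parameter1},\eqref{eq:parameter3},\eqref{eq:def-delta}; the conceptual content is entirely the i.i.d.\ observation, which turns the spreading requirement into a one-line concentration bound.
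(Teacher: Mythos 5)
Your proposal is correct, but it takes a genuinely different route from the paper. The paper proves Lemma~\ref{lem:joint-set} deterministically via Lemma~\ref{lem:sequence}: a greedy construction that picks $b_1,\dotsc,b_{x_\ell}$ one at a time while maintaining a nested chain $\mA_x\subseteq\cdots\subseteq\mA_0=\mA$ with $|\mA_t|\ge|\mA_{t-1}|/\Delta_\ell^2$ (hence the $\Delta_\ell^{2x_\ell}$ loss), a mass invariant $\sum_b|R_b\cap\mA_t|\ge(1-1/x)^t|\mA_t||\mB|/\Delta_\ell$, and a per-step cover gain of $8^{-L}n_{\ell-1}|\mI_{\ell-1}|$; the availability of a good $b_{t+1}$ at each step is ensured by the counting Lemma~\ref{lem:size-c}, which bounds the number of functions whose image on $[n_{\ell-1}]\times\mI_{\ell-1}$ adds few new values. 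You instead make a one-shot probabilistic choice of $S_2^{(\ell)}$ uniformly from \emph{all} of $A_\ell$, take $S_1^{(\ell)}$ to be the common neighborhood, get the density in expectation via convexity of $t\mapsto\binom{t}{x_\ell}$ (Jensen plus the average-slice bound from~\eqref{eq:inductive1}), and observe that the cover condition is a property of random functions alone, independent of $R_{\ge\ell}$, so it follows from a balls-in-bins concentration bound (your McDiarmid step is the probabilistic counterpart of Lemma~\ref{lem:size-c}, and of Lemma~\ref{lem:initial} at the base case); the two requirements are then combined by bounding $\mathbb E[|S_1|\cdot\mathbf 1[E]]$ and taking the best outcome, which is valid since $|S_1|\le|\mA|$ always. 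I checked the parameter inequalities you flagged and they do hold with large margins: $x_\ell\Delta_\ell\ll|A_\ell|$ since $\log_2\Delta_\ell=4\sqrt K(x_0\cdots x_{\ell-2})(n_1\cdots n_{L-1})\ll N_{\ell-1}$; $M=x_\ell n_{\ell-1}\Theta_{\ell-1}$ satisfies $M/(x_\ell\log_2\Delta_\ell)\ge 8^{-L(\ell-1)}x_{\ell-1}/(4\sqrt K\,n_\ell\cdots n_{L-1})\ge K^{3}/4$ because $8^{L-\ell}-4(8^{L-\ell}-1)/7\ge 4$; and $M/N_{\ell-1}\le K^{-1}$, with $\Theta_\ell=8^{-L}M$ exactly. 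Your route is shorter and cleanly decouples density from spreading (and even yields the slightly stronger bound $|S_1^{(\ell)}|\ge|\mA|(2\Delta_\ell)^{-x_\ell}$), at the cost of being non-constructive; the paper's greedy is more explicit and its incremental cover bookkeeping mirrors how the set $\mI_\ell$ is later used, but both arguments ultimately rest on the same parameter gap between $m,n_\ell,x_\ell,\Delta_\ell$.
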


With Lemma \ref{lem:joint-set} in hand, we take $Z_{\ell} = S^{(\ell)}_2$ and $S_{\geq \ell+1} = S_1^{(\ell)}$.

Next, we are going to fix the transcript $\Lambda^{(\ell+1)}$. Recall we need to fix all transcripts from players $[\ell+1: L]$ to players $[-1:\ell]$ in the first $\ell+1$ epochs, when players $[-1:\ell]$ receive input from $Z_{\leq \ell} = Z_{-1} \times Z_0 \times \cdots \times Z_{\ell}$.
We proceed in a few steps.

\subsubsection{Step 2.1: Fixing the transcript to players $[-1:\ell-1]$ in the first $\ell$ epochs}
\label{sec:inductive-step1}
First, we fix the transcript from players $j \in [\ell+1: L]$ to players $i \in [-1:\ell-1]$ in the first $\ell$ epochs. We simply use $\Lambda^{(\ell)}$, that is, for any $\wt{z}_{\ell}\in Z_{\ell}, \ldots, \wt{z}_{i} \in Z_i$,
\begin{align}
\Lambda_{j, i}^{(\ell+1, \ell')}(\wt{z}_{\ell}, \ldots, \wt{z}_{i}) = \Lambda_{j, i}^{(\ell, \ell')}(\wt{z}_{\ell-1}, \ldots, \wt{z}_{i}).  \quad \forall j \in [\ell+1: L], i\in [-1: \ell-1], \ell' \in [\ell]  \label{eq:fix1}
\end{align}

We claim that $S_{\geq \ell+1} \subseteq A_{L}\times \cdots \times A_{\ell+1}$ is consistent with $\Lambda^{(\ell+1)}$ up to this point. 
Formally, we have
\begin{lemma}
\label{lem:step1}
The set $S_{\geq \ell+1}$ is consistent with $\{\Lambda^{(\ell, \ell')}_{j, i}\}_{j\in [\ell+1:L], i \in [-1:\ell-1], \ell'\in [\ell]}$. Formally, for any $\wt{z}_{\geq \ell+1} \in S_{\geq \ell+1}$ and $\wt{z}_{< \ell+1} \in Z_{< \ell+1}$, one has
\begin{align*}
\Pi_{j, i}^{(\ell')}(\wt{z}_L, \ldots, \wt{z}_i) = \Lambda_{j, i}^{(\ell+1, \ell')}(\wt{z}_\ell, \ldots, \wt{z}_i). 
\end{align*}
for any $j\in [\ell+1:L], i \in [-1:\ell-1], \ell'\in [\ell]$.
\end{lemma}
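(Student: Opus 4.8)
The plan is to prove \Cref{lem:step1} by directly unwinding the definitions and feeding them into the consistency clause of the inductive hypothesis, i.e.\ \Cref{lem:main} at level $\ell$. First I would fix an arbitrary $\wt{z}_{\ge \ell+1} \in S_{\ge \ell+1}$ and an arbitrary $\wt{z}_{<\ell+1} \in Z_{<\ell+1}$. Since $Z_{<\ell+1} = Z_{<\ell} \times Z_{\ell}$ is a product set, I can split $\wt{z}_{<\ell+1} = (\wt{z}_{<\ell}, \wt{z}_{\ell})$ with $\wt{z}_{<\ell} \in Z_{<\ell}$ and $\wt{z}_{\ell} \in Z_{\ell}$; in particular the coordinates $\wt{z}_{\ell-1} \in Z_{\ell-1}, \dotsc, \wt{z}_{i} \in Z_{i}$ are available for every $i \in [-1:\ell-1]$.

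The single observation that makes everything go through is the containment $S_{\ge \ell+1} \times Z_{\ell} \subseteq R_{\ge \ell}$. This is immediate from the choice made in Step~1: \Cref{lem:joint-set} produces $S^{(\ell)} = S^{(\ell)}_1 \times S^{(\ell)}_2 \subseteq R_{\ge \ell}$, and we set $S_{\ge \ell+1} := S^{(\ell)}_1$ and $Z_{\ell} := S^{(\ell)}_2$. Consequently, the tuple $(\wt{z}_{\ge \ell+1}, \wt{z}_{\ell})$, which is exactly $\wt{z}_{\ge \ell}$, lies in $R_{\ge \ell}$.

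Next I would invoke the consistency guarantee of \Cref{lem:main} at level $\ell$: since $\wt{z}_{\ge \ell} \in R_{\ge \ell}$ and $\wt{z}_{\ell-1} \in Z_{\ell-1}, \dotsc, \wt{z}_{i} \in Z_{i}$, for every $j \in [\ell:L]$, $i \in [-1:\ell-1]$ and $\ell' \in [\ell]$ it gives
\[
\Pi^{(\ell')}_{j,i}(\wt{z}_L, \dotsc, \wt{z}_i) \;=\; \Lambda^{(\ell,\ell')}_{j,i}(\wt{z}_{\ell-1}, \dotsc, \wt{z}_i).
\]
Restricting the index $j$ to $[\ell+1:L] \subseteq [\ell:L]$ and combining with the definition~\eqref{eq:fix1} of $\Lambda^{(\ell+1,\ell')}_{j,i}$, namely $\Lambda^{(\ell+1,\ell')}_{j,i}(\wt{z}_{\ell}, \dotsc, \wt{z}_i) = \Lambda^{(\ell,\ell')}_{j,i}(\wt{z}_{\ell-1}, \dotsc, \wt{z}_i)$, yields exactly $\Pi^{(\ell')}_{j,i}(\wt{z}_L, \dotsc, \wt{z}_i) = \Lambda^{(\ell+1,\ell')}_{j,i}(\wt{z}_{\ell}, \dotsc, \wt{z}_i)$, which is the desired conclusion.

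I do not expect a genuine obstacle here, as the lemma is a bookkeeping step that transports the inductive hypothesis one level up. The only point worth a moment of care is that $\Pi^{(\ell')}_{j,i}$ with $j \ge \ell+1$ and $\ell' \le \ell$ may genuinely depend on $\wt{z}_{\ell}$ (player $i < \ell$ can relay earlier messages originating from player $\ell$ into its own information state), so the asserted equality is not a formal triviality; it relies on $\wt{z}_{\ell} \in Z_{\ell}$ together with the indistinguishability already built into \Cref{lem:main}. Once the containment $S_{\ge \ell+1} \times Z_{\ell} \subseteq R_{\ge \ell}$ is noted, however, the consistency clause absorbs exactly this dependence, so nothing further is required.
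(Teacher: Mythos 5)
Your proposal is correct and matches the paper's own proof: both rest on the containment $S_{\ge \ell+1} \times Z_{\ell} \subseteq R_{\ge \ell}$ from Lemma \ref{lem:joint-set}, then apply the consistency clause of Lemma \ref{lem:main} at level $\ell$ and the definition \eqref{eq:fix1} of $\Lambda^{(\ell+1,\ell')}_{j,i}$. Your closing remark about why the equality is not a formal triviality is a fair observation, but no further argument is needed beyond what you (and the paper) already give.
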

\begin{proof}
By Lemma \ref{lem:joint-set} and our choice of $Z_{\ell}$, we have $(\wt{z}_{L}, \ldots, \wt{z}_{\ell}) \in S_{\geq \ell+1}\times Z_{\ell} \subseteq R_{\geq \ell}$, and therefore
\begin{align*}
\Pi_{j, i}^{(\ell')}(\wt{z}_L, \ldots, \wt{z}_i) = \Lambda_{j, i}^{(\ell, \ell')}(\wt{z}_{\ell-1}, \ldots, \wt{z}_i) = \Lambda_{j, i}^{(\ell+1, \ell')}(\wt{z}_{\ell}, \wt{z}_{\ell-1}, \ldots, \wt{z}_i).
\end{align*}
where the first step follows from the definition of $R_{\geq \ell}$, the second step follows from Eq.~\eqref{eq:fix1}.
\end{proof}

\subsubsection{Step 2.2: Fixing the transcript to players $[-1:\ell-1]$ at the $(\ell+1)$-th epoch}  
\label{sec:inductive-step2}
Next, we fix the transcript from players $j \in [\ell+1: L]$ to players $i \in [-1:\ell-1]$ at the $(\ell+1)$-th epoch. Our key insight is that $Z_{\ell}$ is indistinguishable to players $[-1:\ell-1]$ when they take input from $Z_{\leq \ell-1}$, hence their transcripts are independent of the choice of $z_{\ell}\in Z_{\ell}$.
To this end, consider
\begin{align*}
\Phi^{(\ell+1)} =  \left(\Phi_{j, i}^{(\ell+1)}\right)_{j \in [\ell+1:L], i\in [-1:\ell-1]}
\end{align*}
where 
\begin{align*}
\Phi_{j, i}^{(\ell+1)} = \left(\Phi_{j, i}^{(\ell+1)}(\wt{z}_{\ell-1}, \ldots, \wt{z}_i)\right)_{\wt{z}_{\ell-1} \in Z_{\ell} \ldots, \wt{z}_i \in Z_{i}} \quad \text{and} \quad \Phi_{j, i}^{(\ell+1)}(\wt{z}_{\ell-1}, \ldots, \wt{z}_i) \in \mathsf{domain}(\Pi_{j, i}^{(\ell+1)})
\end{align*}
Comparing $\Lambda_{j,i}^{(\ell+1, \ell+1)}$ with $\Phi_{j, i}^{(\ell+1)}$, we note that $\Phi_{j, i}^{(\ell+1)}$ does not have dependence on $\wt{z}_{\ell} \in Z_{\ell}$.  

For any $\Phi^{(\ell+1)}$, define 
\begin{align}
S(\Phi^{(\ell+1)}) := \left\{
\begin{matrix}
(\wt{z}_{L}, \ldots, \wt{z}_{\ell+1}) \in S_{\geq \ell+1}: \\
\text{such that }\Pi_{j, i}^{(\ell+1)}(\wt{z}_{L}, \ldots, \wt{z}_{i}) = \Phi_{j, i}^{(\ell+1)}({\wt{z}_{\ell-1}, \ldots, \wt{z}_{i}})\\
\forall \wt{z}_{\ell} \in Z_\ell, \ldots, \wt{z}_{i} \in Z_i, j\in [\ell+1:L], i\in [-1:\ell-1]
\end{matrix} 
\right\} \label{eq:s-phi}
\end{align}
In words, $S(\Phi^{(\ell+1)})$ include all $(\wt{z}_{L}, \ldots, \wt{z}_{\ell+1}) \in S_{\geq \ell+1}$ that are consistent with the transcript $\Phi^{(\ell + 1)}$.
Our key observation is  
\begin{lemma}
\label{lem:key-observation}
We have
\[
\bigcup_{\Phi^{(\ell+1)}} S(\Phi^{(\ell+1)}) = S_{\geq \ell+1}. 
\]
\end{lemma}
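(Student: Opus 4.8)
The plan is to prove the two inclusions separately; the inclusion $\bigcup_{\Phi^{(\ell+1)}} S(\Phi^{(\ell+1)}) \subseteq S_{\geq \ell+1}$ is immediate from the definition~\eqref{eq:s-phi}, since every $S(\Phi^{(\ell+1)})$ is by construction a subset of $S_{\geq \ell+1}$. The content is in the reverse inclusion: given an arbitrary $(\wt{z}_L, \ldots, \wt{z}_{\ell+1}) \in S_{\geq \ell+1}$, I must exhibit \emph{some} choice of $\Phi^{(\ell+1)}$ (i.e., a collection of values $\Phi^{(\ell+1)}_{j,i}(\wt{z}_{\ell-1}, \ldots, \wt{z}_i)$ not depending on $\wt{z}_\ell$) for which this tuple lies in $S(\Phi^{(\ell+1)})$. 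The natural candidate is to simply define $\Phi^{(\ell+1)}_{j,i}(\wt{z}_{\ell-1}, \ldots, \wt{z}_i)$ to be the actual $(\ell+1)$-th epoch transcript $\Pi^{(\ell+1)}_{j,i}(\wt{z}_L, \ldots, \wt{z}_{\ell}, \ldots, \wt{z}_i)$ produced by our fixed tuple together with any $\wt{z}_\ell \in Z_\ell$.

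The crucial point — and the only real step — is to check that this definition is \emph{well-posed}: the right-hand side $\Pi^{(\ell+1)}_{j,i}(\wt{z}_L, \ldots, \wt{z}_{\ell+1}, \wt{z}_\ell, \wt{z}_{\ell-1}, \ldots, \wt{z}_i)$ must not actually depend on which $\wt{z}_\ell \in Z_\ell$ we plug in. This is exactly where the indistinguishability built into the inductive hypothesis is used. Recall from Lemma \ref{lem:step1} (and the consistency clause of Lemma \ref{lem:main} at level $\ell$) that for the fixed $(\wt{z}_L, \ldots, \wt{z}_{\ell+1}) \in S_{\geq \ell+1}$, and for \emph{every} $\wt{z}_\ell \in Z_\ell$, the first-$\ell$-epoch transcripts to players $[-1:\ell-1]$ agree with $\Lambda^{(\ell)}$ and hence are identical across all $\wt{z}_\ell \in Z_\ell$. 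Therefore the information states $X^{(\ell)}_i$ of players $i \in [-1:\ell-1]$ and, by a symmetric argument, the information state $X^{(\ell)}_j$ of any player $j \in [\ell+1:L]$, are the same regardless of which $\wt{z}_\ell \in Z_\ell$ is chosen — the value $\wt{z}_\ell$ only ever reaches player $\ell$ itself, never any player in $[-1:\ell-1]$ nor (since $j > \ell$) any player $j \in [\ell+1:L]$. Since the $(\ell+1)$-th epoch message $\Pi^{(\ell+1)}_{j,i}$ is a function only of $X^{(\ell)}_j$ and $X^{(\ell)}_i$, it is independent of $\wt{z}_\ell$, so the definition of $\Phi^{(\ell+1)}$ above is legitimate.

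With $\Phi^{(\ell+1)}$ so defined, unwinding~\eqref{eq:s-phi} shows $(\wt{z}_L, \ldots, \wt{z}_{\ell+1}) \in S(\Phi^{(\ell+1)})$ by construction, completing the reverse inclusion and hence the lemma. The main obstacle to watch for is making the "$\wt{z}_\ell$ never reaches the relevant players" argument airtight: one has to track carefully through the autoregressive communication model that in epochs $1, \ldots, \ell$ the only player who ever sees $z_\ell$ is player $\ell$ (players $j > \ell$ cannot receive messages from player $\ell$ at all, and players $i < \ell$ receive only $\Lambda^{(\ell)}$-transcripts which are $\wt{z}_\ell$-independent by the inductive hypothesis), so that at the start of epoch $\ell+1$ every player other than $\ell$ has a $\wt{z}_\ell$-independent information state. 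Once that bookkeeping is done the rest is purely definitional.
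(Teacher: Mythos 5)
Your overall route is the same as the paper's: the forward inclusion is definitional, and for the reverse inclusion you define $\Phi^{(\ell+1)}_{j,i}(\wt{z}_{\ell-1},\dotsc,\wt{z}_i)$ to be the actual $(\ell+1)$-th epoch transcript and argue it is well-defined because that transcript does not depend on which $\wt{z}_\ell \in Z_\ell$ is used; this is exactly the paper's argument, including the observation that $X_j^{(\ell)}$ for $j>\ell$ is trivially $z_\ell$-independent (by forgetfulness) and that the $z_\ell$-independence on the low side comes from the consistency of $R_{\geq\ell}\supseteq S_{\geq\ell+1}\times Z_\ell$ with $\Lambda^{(\ell)}$.

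However, the bookkeeping step that you yourself flag as the main obstacle is stated inaccurately, and as written it has a gap. It is not true that ``the value $\wt{z}_\ell$ only ever reaches player $\ell$'' nor that ``players $i<\ell$ receive only $\Lambda^{(\ell)}$-transcripts'': player $\ell$ does send return messages to every player $i<\ell$ (these depend on $z_\ell$ in general and are only pinned down because the tuple lies in $R_{\geq\ell}$), and, more importantly, player $i$'s information state also contains messages from the intermediate players $r\in[i+1:\ell-1]$, which are \emph{not} part of $\Lambda^{(\ell)}$. Those messages $\Pi^{(\ell')}_{r,i}$ are functions of $X_r^{(\ell'-1)}$ and $X_i^{(\ell'-1)}$, and $X_r^{(\ell'-1)}$ could a priori depend on $z_\ell$ through the messages player $r$ receives from player $\ell$. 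The paper closes exactly this hole with an inner downward induction on $r=\ell-1,\dotsc,i$ (within each epoch), showing that every intermediate state $X_r^{(\ell')}$ is $z_\ell$-independent: messages from players $t\in[\ell:L]$ to player $r$ are fixed to $\Lambda^{(\ell,\ell'')}_{t,r}(\wt{z}_{\ell-1},\dotsc,\wt{z}_r)$ by the consistency clause, and messages from players $t\in[r+1:\ell-1]$ are handled by the inner induction hypothesis. Your proposal needs this induction spelled out (your one-sentence justification skips the intermediate-player messages entirely); once it is added, the argument coincides with the paper's proof.
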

\begin{proof}
It suffices to prove that, for any $(\wt{z}_{L}, \ldots, \wt{z}_{\ell+1}) \in S_{\geq \ell+1}$, $\wt{z}_{\ell-1}\in Z_{\ell-1}, \ldots, \wt{z}_{i} \in Z_{i}$, $j \in [\ell+1:L], i\in  [-1: \ell-1]$, the transcript $\Pi_{j, i}^{(\ell+1)}(\wt{z}_{L}, \ldots,\wt{z}_{\ell+1}, z_{\ell},\wt{z}_{\ell-1},\ldots  \wt{z}_{i})$ is the same for every $z_{\ell} \in Z_{\ell}$.

To prove this, note that the transcript is determined by the information state $X^{(\ell)}_{j}$ and $X^{(\ell)}_{i}$. 
It is clear that $X^{(\ell)}_{j}$ does not change with the choice of $z_{\ell} \in Z_{\ell}$ since $j > \ell$. It remains to prove that $X^{(\ell)}_{i}$ also does not change with $z_{\ell} \in Z_{\ell}$. To this end, we prove that all information states 
$\{X_{r}^{(\ell')}\}_{r \in [i:\ell-1], \ell'\in [\ell]}$ does not change with $z_{\ell}$.

Recall we have fixed the value of $(\wt{z}_{L}, \ldots, \wt{z}_{\ell+1}) \in S_{\geq \ell+1}$ and $\wt{z}_{\ell-1} \in Z_{\ell-1}, \ldots, \wt{z}_{i} \in Z_{i}$. We prove by induction on $r = \ell-1, \ldots, i$. 
When $r = \ell-1$, the information state $X_{\ell-1}^{(\ell')}$ is determined by $\wt{z}_{\ell-1}$ and $\Pi_{t, \ell-1}^{(\ell'')}(\wt{z}_{L}, \ldots, \wt{z}_{\ell+1},z_{\ell}, \wt{z}_{\ell-1} )$ ($t \in [\ell: L], \ell'' \in [\ell']$), since $(\wt{z}_L, \ldots, \wt{z}_{\ell+1},z_{\ell}) \in R_{\geq \ell}$ for every $z_{\ell}\in Z_{\ell}$, we have that $\Pi_{t, \ell-1}^{(\ell'')}(\wt{z}_{L}, \ldots, \wt{z}_{\ell+1},z_{\ell}, \wt{z}_{\ell-1} ) = \Lambda_{t, \ell-1}^{(\ell, \ell'')}(\wt{z}_{\ell-1})$, which is the same for every $z_{\ell}\in Z_{\ell}$. This finishes the proof of the base case. 

Now suppose the induction continues to hold for $r + 1$, for $r$, the information state $X_{r}^{(\ell')}$ is determined by $\wt{z}_{r}$, $\Pi_{t, r}^{(\ell'')}(\wt{z}_{L}, \ldots, \wt{z}_{\ell+1}, z_{\ell}, \wt{z}_{\ell-1}, \ldots, \wt{z}_{r})$ ($t \in [r+1: L], \ell''\in [\ell']$). 
For $t \in [\ell: L]$, since $(\wt{z}_L, \ldots, \wt{z}_{\ell+1}, z_{\ell}) \in R_{\geq \ell}$, we have that $\Pi_{t, r}^{(\ell'')}(\wt{z}_{L}, \ldots, \wt{z}_{\ell+1}, z_{\ell}, \wt{z}_{\ell-1}, \ldots, \wt{z}_{r}) = \Lambda_{t, r}^{(\ell, \ell'')}(\wt{z}_{\ell-1}, \ldots, \wt{z}_{r})$, which are the same for every $z_{\ell}\in Z_{\ell}$. For $t \in [r+1: \ell-1]$, we have proved $X_{t}^{(\ell'')}$ are the same for every $z_{\ell}\in Z_{\ell}$, so does $\Pi_{t, r}^{(\ell'')}(\wt{z}_{L}, \ldots, \wt{z}_{\ell+1}, z_{\ell}, \wt{z}_{\ell-1}, \ldots, \wt{z}_{r})$. This completes the induction and finish the proof.
\end{proof}

Now we can use a greedy selection strategy. We upper bound the total number of $\Phi^{(\ell+1)}$ and show that there exists at least one $\wt{\Phi}^{(\ell+1)}$ such that $S(\wt{\Phi}^{(\ell+1)})$ has large size. 
The proof of the following Lemma can be found at Section \ref{sec:tech}.

\begin{lemma}
\label{lem:size1}
There exists $\wt{\Phi}^{(\ell+1)}$ such that 
\begin{align*}
|S(\wt{\Phi}^{(\ell+1)})| \geq |A_{L}| \cdots  |A_{\ell+1}| \cdot  2^{-2\sqrt{K} \cdot (x_0\cdots x_{\ell-1})\cdot (n_1\cdots n_{L-1})}
\end{align*}
\end{lemma}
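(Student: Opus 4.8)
The plan is to bound the number of possible transcript profiles $\Phi^{(\ell+1)}$ and then apply a counting (pigeonhole) argument: since $S_{\ge \ell+1}$ is covered by the sets $S(\Phi^{(\ell+1)})$ ranging over all valid $\Phi^{(\ell+1)}$ (Lemma~\ref{lem:key-observation}), at least one of them must have size at least $|S_{\ge\ell+1}|$ divided by the number of profiles. So the whole proof reduces to (a) recalling the lower bound on $|S_{\ge\ell+1}|$ and (b) carefully counting the number of distinct $\Phi^{(\ell+1)}$.

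**For step (a)**, by Lemma~\ref{lem:joint-set} and our choice $S_{\ge\ell+1} = S_1^{(\ell)}$, we have $|S_{\ge\ell+1}| \ge |A_L|\cdots|A_{\ell+1}| / \Delta_\ell^{2x_\ell}$. **For step (b)**, I count the size of the description $\Phi^{(\ell+1)} = (\Phi^{(\ell+1)}_{j,i})_{j\in[\ell+1:L],\, i\in[-1:\ell-1]}$. Each component $\Phi^{(\ell+1)}_{j,i}$ is indexed by $(\wt z_{\ell-1},\dots,\wt z_i) \in Z_{\ell-1}\times\cdots\times Z_i$ (note: crucially \emph{not} indexed by $\wt z_\ell$, which is exactly the saving from Lemma~\ref{lem:key-observation}), and for each such index it takes a value in $\mathsf{domain}(\Pi^{(\ell+1)}_{j,i})$. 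The message $\Pi^{(\ell+1)}_{j,i}$ has length $2B\cdot m_{(i)} = 2Hdp\cdot m_{(i)}$ bits, where $m_{(i)} = N_{i-1}$ for $i\in[\ell-1]$ (and $m_{(i)}=1$ for $i\in\{0,-1\}$). The number of index tuples $(\wt z_{\ell-1},\dots,\wt z_i)$ is $|Z_{\ell-1}|\cdots|Z_i| = x_{\ell-1}\cdots x_i$ for $i\in[\ell-1]$, and $|Z_{\ell-1}|\cdots|Z_0|\cdot|Z_{-1}| = (x_{\ell-1}\cdots x_0)(n_1\cdots n_{L-1})$ for $i=-1$. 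Multiplying across all $j\in[\ell+1:L]$ (at most $L$ values) and all $i\in[-1:\ell-1]$, and taking the dominant term (the $i=-1$ contribution, which carries the $n_1\cdots n_{L-1}$ factor and the largest prefix product of $x$'s), the total bit-length of $\Phi^{(\ell+1)}$ is bounded by something like $2HdpL\cdot(x_0\cdots x_{\ell-1})(n_1\cdots n_{L-1})\cdot N_{\ell-1}$ — wait, I should be careful: for $i\in[\ell-1]$ the factor $N_{i-1}$ appears, with $N_{\ell-2}$ being the largest; whereas for $i=-1$ the message length factor is just $1$ but the index-count factor picks up $n_1\cdots n_{L-1}$. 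So the total number of profiles is at most $2^{P}$ where $P = 2HdpL\cdot\bigl[(x_0\cdots x_{\ell-1})(n_1\cdots n_{L-1}) + N_{\ell-2}\cdot(x_1\cdots x_{\ell-1})\bigr]$ or a similar bound; I will verify that with the parameter choices in~\eqref{eq:parameter1}, \eqref{eq:parameter2}, \eqref{eq:parameter3} this is at most $2^{\sqrt K\cdot(x_0\cdots x_{\ell-1})(n_1\cdots n_{L-1})}$.

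**Then** dividing: $|S(\wt\Phi^{(\ell+1)})| \ge |S_{\ge\ell+1}| / 2^P \ge |A_L|\cdots|A_{\ell+1}|\cdot \Delta_\ell^{-2x_\ell}\cdot 2^{-P}$. Using $\Delta_\ell = 2^{4\sqrt K (x_0\cdots x_{\ell-2})(n_1\cdots n_{L-1})}$, we get $\Delta_\ell^{-2x_\ell} = 2^{-8\sqrt K x_\ell(x_0\cdots x_{\ell-2})(n_1\cdots n_{L-1})} = 2^{-8\sqrt K (x_0\cdots x_{\ell-1})(n_1\cdots n_{L-1})}$ (since $x_{\ell-1} = x_\ell^{?}$ — actually $x_{\ell-1}=K^{8^{L-\ell}}$ and $x_\ell = K^{8^{L-\ell-1}}$, so $x_{\ell-1}= x_\ell^8 \ne x_\ell$; so I must write $\Delta_\ell^{-2x_\ell} = 2^{-8\sqrt K x_\ell (x_0\cdots x_{\ell-2})(n_1\cdots n_{L-1})}$ and then observe $x_\ell\cdot(x_0\cdots x_{\ell-2}) \le x_0\cdots x_{\ell-1}$ since $x_\ell \le x_{\ell-1}$). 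Combining the two $2^{-\Theta(\sqrt K (x_0\cdots x_{\ell-1})(n_1\cdots n_{L-1}))}$ factors and checking the constants line up so the sum of exponents is at most $2\sqrt K(x_0\cdots x_{\ell-1})(n_1\cdots n_{L-1})$ gives the claimed bound.

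**The main obstacle** I anticipate is not conceptual but bookkeeping: getting the transcript-length count right across the two regimes ($i\ge 0$ where message length scales with $N_{i-1}$ but there are few index tuples, versus $i=-1$ where the message is tiny but there are $n_1\cdots n_{L-1}$ index tuples), and then verifying that the specific super-exponential parameter choices in~\eqref{eq:parameter1} (the factor $8^{2L^2}$ in $K$, the exponents $8^{L-\ell-1}$ defining $x_\ell$ and $n_\ell$, and $m = K^{\sum 8^\ell + 1}$) make everything collapse into a clean $2^{-2\sqrt K(x_0\cdots x_{\ell-1})(n_1\cdots n_{L-1})}$. In particular I need $HdpL \cdot N_{\ell-1} \cdot (\text{product of }x\text{'s})$ to be absorbed into $\sqrt K \cdot (x_0\cdots x_{\ell-1})$, which should follow because $K = (HdpL)^8 \cdot 8^{2L^2}$ is chosen to dominate $HdpL$ with room to spare and because $N_{\ell-1} = m\cdot n_1\cdots n_{\ell-1}$ is itself a controlled power of $K$. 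I expect this to go through but it will require patiently matching exponents against~\eqref{eq:parameter1}–\eqref{eq:parameter3}.
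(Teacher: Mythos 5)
Your proposal is correct and follows essentially the same route as the paper: cover $S_{\geq \ell+1}$ by the sets $S(\Phi^{(\ell+1)})$ via Lemma~\ref{lem:key-observation}, lower bound $|S_{\geq \ell+1}|$ via Lemma~\ref{lem:joint-set}, count the number of profiles $\Phi^{(\ell+1)}$ (this is exactly the paper's Lemma~\ref{lem:count1}, with the same two regimes $i\in[\ell-1]$ versus $i\in\{0,-1\}$ and the same crucial absence of $\wt{z}_{\ell}$ from the index set), and apply pigeonhole. The one place your stated estimate is too loose — bounding $\Delta_{\ell}^{2x_{\ell}}$ using only $x_{\ell}\le x_{\ell-1}$ gives exponent $8\sqrt{K}(x_0\cdots x_{\ell-1})(n_1\cdots n_{L-1})$, which would sum to $9\sqrt{K}(\cdots)$ rather than the claimed $2\sqrt{K}(\cdots)$ — is repaired by the stronger relation you yourself record, $x_{\ell-1}=x_{\ell}^{8}\ge 8x_{\ell}$, which yields $\Delta_{\ell}^{2x_{\ell}}\le 2^{\sqrt{K}(x_0\cdots x_{\ell-1})(n_1\cdots n_{L-1})}$ and closes the arithmetic exactly as in the paper.
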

Given Lemma \ref{lem:size1}, we fix the transcript from players $j \in [\ell+1: L]$ to players $i \in  [-1:\ell-1]$ at the $(\ell+1)$-th epoch using $\wt{\Phi}^{(\ell+1)}$, i.e.,
\begin{align}
&~\Lambda_{j, i}^{(\ell+1, \ell+1)}(\wt{z}_{\ell}, \ldots, \wt{z}_i) =  \wt{\Phi}^{(\ell+1)}_{j, i}(\wt{z}_{\ell-1}, \ldots, \wt{z}_{i})\notag\\
\text{for}\quad &~  \forall j \in [\ell+1: L], i \in [-1:\ell-1], \wt{z}_{\ell} \in Z_{\ell}, \ldots, \wt{z}_{i} \in Z_{i}.\label{eq:fix2}
\end{align}
We set $T_{\geq \ell+1} = S(\wt{\Phi}^{(\ell+1)}) \subseteq S_{\geq \ell+1}$. We have
\begin{lemma}
\label{lem:step2}
The set $T_{\geq \ell+1}$ is consistent with $(\Lambda_{j, i}^{(\ell+1, \ell')})_{i \in [-1:\ell-1], j\in [\ell+1: L],\ell'\in [\ell+1]}$. Formally, for any $(\wt{z}_{L}, \ldots, \wt{z}_{\ell+1}) \in T_{\geq \ell+1}$ and $\wt{z}_{<\ell+1} \in Z_{<\ell+1}$, one has
\begin{align}
\Pi_{j, i}^{(\ell')}(\wt{z}_L, \ldots, \wt{z}_i) = \Lambda_{j, i}^{(\ell+1, \ell')}(\wt{z}_\ell, \ldots, \wt{z}_i). \label{eq:step2-1}
\end{align}
for any $j\in [\ell+1:L], i \in [-1:\ell-1], \ell'\in [\ell+1]$.
Moreover, we have
\begin{align}
|T_{\geq \ell+1}| \geq |A_{L}\times \cdots \times A_{\ell+1}| \cdot  2^{-2\sqrt{K} \cdot (x_0\cdots x_{\ell-1})\cdot (n_1\cdots n_{L-1})}.
\label{eq:step2-2}
\end{align}
\end{lemma}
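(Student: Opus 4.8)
The plan is to check the two conclusions of the lemma separately. The identity~\eqref{eq:step2-1} is verified by splitting the epoch index $\ell' \in [\ell+1]$ into the range $\ell' \in [\ell]$ (the first $\ell$ epochs) and the single case $\ell' = \ell+1$; the size bound~\eqref{eq:step2-2} is then read off directly from Lemma~\ref{lem:size1}.

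For $\ell' \in [\ell]$ nothing new is needed: since $T_{\geq\ell+1} = S(\wt{\Phi}^{(\ell+1)}) \subseteq S_{\geq\ell+1}$, the equality $\Pi_{j,i}^{(\ell')}(\wt{z}_L, \ldots, \wt{z}_i) = \Lambda_{j,i}^{(\ell+1,\ell')}(\wt{z}_\ell, \ldots, \wt{z}_i)$ for all $\wt{z}_{\geq\ell+1} \in T_{\geq\ell+1}$ and $\wt{z}_{<\ell+1} \in Z_{<\ell+1}$ is exactly the conclusion of Lemma~\ref{lem:step1}, applied to the subset $T_{\geq\ell+1}$ of $S_{\geq\ell+1}$.

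For $\ell' = \ell+1$ the identity follows from how $T_{\geq\ell+1}$ and $\Lambda^{(\ell+1,\ell+1)}$ were constructed. By the definition~\eqref{eq:s-phi} of $S(\cdot)$, membership $(\wt{z}_L, \ldots, \wt{z}_{\ell+1}) \in T_{\geq\ell+1} = S(\wt{\Phi}^{(\ell+1)})$ means precisely that $\Pi_{j,i}^{(\ell+1)}(\wt{z}_L, \ldots, \wt{z}_i) = \wt{\Phi}_{j,i}^{(\ell+1)}(\wt{z}_{\ell-1}, \ldots, \wt{z}_i)$ for every $\wt{z}_\ell \in Z_\ell, \ldots, \wt{z}_i \in Z_i$ and every $j \in [\ell+1:L]$, $i \in [-1:\ell-1]$; on the other hand, \eqref{eq:fix2} sets $\Lambda_{j,i}^{(\ell+1,\ell+1)}(\wt{z}_\ell, \ldots, \wt{z}_i) = \wt{\Phi}_{j,i}^{(\ell+1)}(\wt{z}_{\ell-1}, \ldots, \wt{z}_i)$, which in particular has no dependence on $\wt{z}_\ell \in Z_\ell$. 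Chaining the two equalities gives~\eqref{eq:step2-1} in the case $\ell' = \ell+1$. Implicitly this uses that $\wt{\Phi}^{(\ell+1)}$ is a genuine transcript independent of $z_\ell \in Z_\ell$, which is exactly Lemma~\ref{lem:key-observation} — the fact that makes $\bigcup_{\Phi^{(\ell+1)}} S(\Phi^{(\ell+1)}) = S_{\geq\ell+1}$ hold, and hence lets the greedy choice of $\wt{\Phi}^{(\ell+1)}$ from Lemma~\ref{lem:size1} go through.

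Finally, the bound~\eqref{eq:step2-2} on $|T_{\geq\ell+1}|$ is verbatim the conclusion of Lemma~\ref{lem:size1}, since $T_{\geq\ell+1}$ was set to $S(\wt{\Phi}^{(\ell+1)})$. There is essentially no obstacle in this lemma: all the substance lies in Lemma~\ref{lem:key-observation} (indistinguishability of $Z_\ell$ to players $[-1:\ell-1]$ after $\ell$ epochs) and Lemma~\ref{lem:size1} (the greedy/counting argument), while Lemma~\ref{lem:step2} merely repackages their outputs together with $\Lambda^{(\ell)}$ into the invariant form required to feed the next step of the induction. The one point to be careful about when writing it out is keeping the argument tuples of $\Pi_{j,i}^{(\ell')}$, $\Lambda_{j,i}^{(\ell+1,\ell')}$, and $\wt{\Phi}_{j,i}^{(\ell+1)}$ aligned, since they range over overlapping but slightly different lists of player inputs.
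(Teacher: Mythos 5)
Your proposal is correct and follows essentially the same route as the paper: Lemma~\ref{lem:step1} handles $\ell' \in [\ell]$ via $T_{\geq\ell+1} \subseteq S_{\geq\ell+1}$, the definition~\eqref{eq:s-phi} together with~\eqref{eq:fix2} handles $\ell' = \ell+1$, and~\eqref{eq:step2-2} is read off from Lemma~\ref{lem:size1}. (Your aside that Lemma~\ref{lem:key-observation} is implicitly used is slightly off for the consistency claim itself — it is only needed to make $S(\wt{\Phi}^{(\ell+1)})$ large in Lemma~\ref{lem:size1} — but this does not affect correctness.)
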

\begin{proof}
Since $T_{\geq \ell+1} \subseteq S_{\geq \ell+1}$, by Lemma \ref{lem:step1}, Eq.~\eqref{eq:step2-1} holds for any $\ell'\in [\ell]$.
By Eq.~\eqref{eq:s-phi} and Eq.~\eqref{eq:fix2}, we also know that 
\begin{align*}
\Pi_{j, i}^{(\ell+1)}(\wt{z}_L, \ldots, \wt{z}_i) = \wt{\Phi}^{(\ell+1)}_{j, i}(\wt{z}_{\ell-1}, \ldots, \wt{z}_{i}) = \Lambda_{j, i}^{(\ell+1, \ell+1)}(\wt{z}_{\ell}, \ldots, \wt{z}_i)
\end{align*}
This completes the proof of Eq.~\eqref{eq:step2-1}. Eq.~\eqref{eq:step2-2} follows directly from Lemma \ref{lem:size1}.
\end{proof}

\subsubsection{Step 2.3: Fixing the transcript to player $\ell$} 
\label{sec:inductive-step3}
Finally, we fix the transcript from the player $j \in [\ell+1: L]$ to the player $\ell$ at the first $(\ell+1)$ epochs. This follows from the a greedy selection strategy.
Let 
\begin{align*}
\Psi = \left(\Psi_{j, \ell}^{(\ell')}(\wt{z}_{\ell})\right)_{j\in [\ell+1:L], \ell' \in [\ell+1], \wt{z}_{\ell} \in Z_{\ell}}
\quad
\text{where}
\quad 
\Psi_{j, \ell}^{(\ell')}(\wt{z}_{\ell}) \in \mathsf{domain}(\Pi_{j, \ell}^{(\ell')})
\end{align*}
Define 
\begin{align}
T(\Psi):= \left\{
\begin{matrix}
(\wt{z}_{L}, \ldots, \wt{z}_{\ell+1}) \in T_{\geq \ell+1}: \\
\Pi_{j, \ell}^{(\ell')}(\wt{z}_{L}, \ldots, \wt{z}_{\ell}) = \Psi_{j, \ell}^{(\ell')}(\wt{z}_{\ell}) \quad \forall \wt{z}_{\ell} \in Z_\ell, \ell'\in [\ell+1], j \in [\ell+1:L]
\end{matrix} 
\right\} \label{eq:t-psi}
\end{align}


We can upper bound the number of different $\Psi$, and use an average argument to obtain the following Lemma. Its proof can be found at Section \ref{sec:tech}.
\begin{lemma}
\label{lem:size2}
There exists $\wt{\Psi}$ such that $|T(\wt{\Psi})| \geq |A_{L}|  \cdots |A_{\ell+1}|/\Delta_{\ell+1}$.
\end{lemma}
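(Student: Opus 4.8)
The plan is to prove \Cref{lem:size2} by the same pigeonhole/averaging argument used in Step~2.3 of the initial case, now applied to the transcript that player $\ell$ receives over all $\ell+1$ epochs. First I would observe that $\{T(\Psi)\}_{\Psi}$ is a \emph{partition} of $T_{\ge \ell+1}$: since the transcript $\Pi^{(\ell')}_{j,\ell}$ is a deterministic function of the inputs of players $[\ell:L]$ only, for any $(\wt z_L,\dotsc,\wt z_{\ell+1}) \in T_{\ge \ell+1}$ and any $\wt z_\ell \in Z_\ell$ the quantity $\Pi^{(\ell')}_{j,\ell}(\wt z_L,\dotsc,\wt z_\ell)$ is well defined, and the unique $\Psi$ with $(\wt z_L,\dotsc,\wt z_{\ell+1}) \in T(\Psi)$ is obtained by setting $\Psi^{(\ell')}_{j,\ell}(\wt z_\ell) := \Pi^{(\ell')}_{j,\ell}(\wt z_L,\dotsc,\wt z_\ell)$ for all $j \in [\ell+1:L]$, $\ell' \in [\ell+1]$, and $\wt z_\ell \in Z_\ell$. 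Hence $\sum_{\Psi} |T(\Psi)| = |T_{\ge \ell+1}|$, and it remains to bound the number of distinct $\Psi$.

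Next I would count the possibilities for $\Psi$: there are $(L-\ell)(\ell+1) \le L^2$ index pairs $(j,\ell')$ and $x_\ell = |Z_\ell|$ choices of $\wt z_\ell$, and each coordinate $\Psi^{(\ell')}_{j,\ell}(\wt z_\ell)$ ranges over $\mathsf{domain}(\Pi^{(\ell')}_{j,\ell})$, which has size $2^{2B\,m_{(\ell)}} = 2^{2Hdp\,N_{\ell-1}}$ (player $\ell$ holds $N_{\ell-1}$ tokens and $B = Hdp$). So there are at most $2^{2Hdp\,L^2\,N_{\ell-1}\,x_\ell}$ distinct $\Psi$, and by averaging there is a $\wt\Psi$ with
\[
|T(\wt\Psi)| \ \ge\ \frac{|T_{\ge \ell+1}|}{2^{2Hdp\,L^2\,N_{\ell-1}\,x_\ell}} \ \ge\ |A_L|\cdots|A_{\ell+1}| \cdot 2^{-2\sqrt{K}\,(x_0\cdots x_{\ell-1})(n_1\cdots n_{L-1}) - 2Hdp\,L^2\,N_{\ell-1}\,x_\ell},
\]
where the second inequality uses \eqref{eq:step2-2}. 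Since $\Delta_{\ell+1} = 2^{4\sqrt{K}\,(x_0\cdots x_{\ell-1})(n_1\cdots n_{L-1})}$ by \eqref{eq:def-delta}, the desired bound $|T(\wt\Psi)| \ge |A_L|\cdots|A_{\ell+1}|/\Delta_{\ell+1}$ reduces to the numerical inequality
\[
Hdp\,L^2\,N_{\ell-1}\,x_\ell \ \le\ \sqrt{K}\,(x_0\cdots x_{\ell-1})(n_1\cdots n_{L-1}).
\]

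Finally I would verify this inequality from the parameter choices in \eqref{eq:parameter1}--\eqref{eq:parameter3}. Writing $N_{\ell-1} = m\prod_{i=1}^{\ell-1} n_i$ and cancelling $\prod_{i=1}^{\ell-1} n_i$ (which also divides $n_1\cdots n_{L-1}$) from both sides, it suffices to prove $Hdp\,L^2\,m\,x_\ell \le \sqrt{K}\,(x_0\cdots x_{\ell-1})\prod_{i=\ell}^{L-1} n_i$. Substituting $m = K^{\sum_{i=0}^{L-1} 8^i + 1}$, $x_\ell = K^{8^{L-\ell-1}}$, $x_0\cdots x_{\ell-1} = K^{8^{L-\ell}(8^\ell-1)/7}$ and $\prod_{i=\ell}^{L-1} n_i = K^{4(8^{L-\ell}-1)/7}$, a short computation shows the exponent of $K$ on the right exceeds that on the left by exactly $\tfrac{1}{7}\bigl(17\cdot 8^{L-\ell-1}-10\bigr)$, which is at least $1$ since $\ell \le L-1$; one extra factor of $K$ on the right then absorbs the $Hdp\,L^2 \le K$ prefactor on the left (and $\sqrt K \ge 1$ provides extra slack). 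The only mildly delicate point is this exponent bookkeeping in the last step; every other step is the same one-line counting/averaging argument already carried out for player $1$ in the initial case (Step~2.3).
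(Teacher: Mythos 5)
Your proposal is correct and follows essentially the same route as the paper's proof: observe that the sets $T(\Psi)$ cover $T_{\ge \ell+1}$ (so $\sum_{\Psi}|T(\Psi)| = |T_{\ge \ell+1}|$, bounded below via Lemma~\ref{lem:step2}), bound the number of possible $\Psi$ by $\bigl(2^{2Hdp\,N_{\ell-1}}\bigr)^{x_\ell L^2} \le 2^{\sqrt{K}(x_0\cdots x_{\ell-1})(n_1\cdots n_{L-1})}$, and average. Your explicit exponent bookkeeping in the last step is the parameter check the paper leaves implicit ("by the choice of parameters"), and it verifies correctly.
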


Given Lemma \ref{lem:size2}, we fix the transcripts from players $j \in [\ell+1: L]$ to players $\ell$ at the first $(\ell+1)$-th epochs using $\wt{\Psi}$. In particular, we take
\begin{align}
\Lambda_{j, \ell}^{(\ell+1, \ell')}(\wt{z}_{\ell}) = \wt{\Psi}_{j, \ell}^{(\ell')}(\wt{z}_{\ell})  \quad  \forall j\in [\ell+1:L], \ell'\in [\ell+1], \wt{z}_{i}\in Z_{i}. \label{eq:fix3}
\end{align}

We take $R_{\geq\ell+1} = T(\wt{\Psi})$ and we have

\begin{lemma}
\label{lem:step3}
$R_{\geq\ell+1}$ is consistent with $\Lambda^{(\ell+1)}$. Formally, for any $\wt{z}_{\geq\ell+1} \in R_{\geq\ell+1}$ and $\wt{z}_{<\ell+1}\in Z_{<\ell+1}$, one has
\begin{align}
\Pi_{j, i}^{(\ell')}(\wt{z}_L, \ldots, \wt{z}_i) = \Lambda_{j, i}^{(\ell+1, \ell')}(\wt{z}_\ell, \ldots, \wt{z}_i).  \label{eq:step3-1}
\end{align}
holds for any $j\in [\ell+1:L], i \in [-1:\ell], \ell'\in [\ell+1]$.
Moreover, we have
\begin{align}
|R_{\geq\ell+1}| \geq |A_{L}| \cdots  |A_{\ell+1}| /\Delta_{\ell+1}\label{eq:step3-2}
\end{align}
\end{lemma}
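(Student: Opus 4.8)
The plan is to read off both conclusions directly from the construction in Step~2.3 together with Lemmas~\ref{lem:step2} and~\ref{lem:size2}; this lemma is really the bookkeeping step that certifies that $(R_{\geq\ell+1},Z_{<\ell+1})$ together with the transcript collection $\Lambda^{(\ell+1)}$ satisfies the consistency and large‑remaining‑entropy clauses of~\Cref{lem:main} at level $\ell+1$. The size bound~\eqref{eq:step3-2} is immediate: since $R_{\geq\ell+1}=T(\wt{\Psi})$ by construction, Lemma~\ref{lem:size2} gives $|R_{\geq\ell+1}|=|T(\wt{\Psi})|\geq |A_L|\cdots|A_{\ell+1}|/\Delta_{\ell+1}$. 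For the consistency identity~\eqref{eq:step3-1}, I would fix an arbitrary $\wt{z}_{\geq\ell+1}\in R_{\geq\ell+1}$ and $\wt{z}_{<\ell+1}\in Z_{<\ell+1}$ and split on the receiver index $i\in[-1:\ell]$, which is partitioned into $i\in[-1:\ell-1]$ and $i=\ell$.

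For $i\in[-1:\ell-1]$: since $R_{\geq\ell+1}=T(\wt{\Psi})\subseteq T_{\geq\ell+1}$ by the definition~\eqref{eq:t-psi} of $T(\cdot)$, Lemma~\ref{lem:step2} (Eq.~\eqref{eq:step2-1}) already yields $\Pi_{j,i}^{(\ell')}(\wt{z}_L,\ldots,\wt{z}_i)=\Lambda_{j,i}^{(\ell+1,\ell')}(\wt{z}_\ell,\ldots,\wt{z}_i)$ for all $j\in[\ell+1:L]$ and $\ell'\in[\ell+1]$, so nothing new is needed. For $i=\ell$: I would first note that the transcript from any player $j>\ell$ to player $\ell$ at any epoch $\ell'$ is determined solely by the inputs of players $[\ell:L]$ (by the autoregressive structure, every information state $X_r^{(\cdot)}$ with $r\ge\ell$ depends only on $z_r,\ldots,z_L$), hence it is a well-defined function of $(\wt{z}_L,\ldots,\wt{z}_\ell)$ and, once $\wt{z}_{\geq\ell+1}$ is fixed, a function of $\wt{z}_\ell$ alone. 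Then by the defining condition of $T(\wt{\Psi})$ in~\eqref{eq:t-psi} we have $\Pi_{j,\ell}^{(\ell')}(\wt{z}_L,\ldots,\wt{z}_\ell)=\wt{\Psi}_{j,\ell}^{(\ell')}(\wt{z}_\ell)$ for every $\wt{z}_\ell\in Z_\ell$, $\ell'\in[\ell+1]$, $j\in[\ell+1:L]$, and by our choice~\eqref{eq:fix3} of $\Lambda_{j,\ell}^{(\ell+1,\ell')}$ this equals $\Lambda_{j,\ell}^{(\ell+1,\ell')}(\wt{z}_\ell)$. Chaining the two equalities gives~\eqref{eq:step3-1} for $i=\ell$, and the two cases jointly cover all $(j,i,\ell')$ in the stated ranges.

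The step most prone to error here — rather than a genuine obstacle — is handling the fact that player $\ell$ is a \emph{new} receiver at level $\ell+1$: the level-$\ell$ inductive hypothesis only pins down transcripts to players $[-1:\ell-1]$, so all $\ell+1$ epochs of transcripts to player $\ell$ must be fixed from scratch, which is precisely why Step~2.3 (and Lemma~\ref{lem:size2}) operates on the full joint tuple $(\Psi_{j,\ell}^{(\ell')})_{\ell'\in[\ell+1]}$ rather than on the $(\ell+1)$-th epoch alone. One must also double-check that the transcript to player $\ell$ genuinely depends only on $(\wt{z}_L,\ldots,\wt{z}_\ell)$, so that after fixing $\wt{z}_{\geq\ell+1}\in R_{\geq\ell+1}$ it is a function of $\wt{z}_\ell$ only and the greedy fixing in Lemma~\ref{lem:size2} is well-posed. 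Finally, I would remark that combining Lemma~\ref{lem:step3} with Step~1 (Lemma~\ref{lem:joint-set}, which supplies the large-cover clause $|\mI_\ell|\ge\Theta_\ell$ and the size $|Z_\ell|=x_\ell$) completes the inductive step of~\Cref{lem:main}.
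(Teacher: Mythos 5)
Your proof is correct and follows essentially the same route as the paper's: the case $i\in[-1:\ell-1]$ comes from $R_{\geq\ell+1}\subseteq T_{\geq\ell+1}$ and Lemma~\ref{lem:step2}, the case $i=\ell$ from the definition~\eqref{eq:t-psi} of $T(\wt{\Psi})$ combined with the choice~\eqref{eq:fix3}, and the size bound directly from Lemma~\ref{lem:size2}. Your additional remarks (that transcripts to player $\ell$ depend only on the inputs of players $[\ell:L]$, and that all $\ell+1$ epochs to the new receiver must be fixed at once) are accurate elaborations consistent with the paper's setup.
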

\begin{proof}
Since $R_{\geq\ell+1} \subseteq T_{\geq\ell+1}$, by Lemma \ref{lem:step2}, it is consistent with $\{\Lambda_{j, i}^{(\ell+1, \ell')}\}_{j \in [\ell+1: L], i \in [-1:\ell-1], \ell' \in [\ell+1]}$.
Combining Eq.~\eqref{eq:t-psi}\eqref{eq:fix3}, we have
\[
\Pi_{j, \ell}^{(\ell')}(\wt{z}_L, \ldots, \wt{z}_\ell) = \Psi_{j, \ell}^{(\ell')}(\wt{z}_{\ell}) = \Lambda_{j, \ell}^{(\ell+1, \ell')}(\wt{z}_\ell). 
\]
This proves Eq.~\eqref{eq:step3-1}. Eq.~\eqref{eq:step3-2} follows directly from Lemma \ref{lem:size2}.
\end{proof}

\subsubsection{Wrap up the induction}
Combining the above three steps (Eq.~\eqref{eq:fix1}\eqref{eq:fix2}\eqref{eq:fix3}), we have fixed the transcript 
\begin{align*}
\Lambda^{(\ell+1)} = &~  \{\Lambda_{j, i}^{(\ell+1, \ell')}\}_{j \in [\ell+1: L], i \in [-1:\ell], \ell' \in [\ell+1]}.
\end{align*}
Now we can wrap up the induction. We need to verify the two inductive hypothesis, i.e., Eq.~\eqref{eq:inductive-hypothesis1}\eqref{eq:inductive-hypothesis2}. 

For the first inductive hypothesis, by Lemma \ref{lem:step3}, we know that $R_{\geq\ell+1}$ is consistent with $\Lambda^{(\ell+1)}$.

For the second inductive hypothesis (i.e., Eq.~\eqref{eq:inductive-hypothesis1}), we have proved in Lemma \ref{lem:step3} that $|R_{\geq \ell+1}| \geq  |A_{L}|\cdots |A_{\ell+1}|/\Delta_{\ell+1}$. 

For the last inductive hypothesis (i.e., Eq.~\eqref{eq:inductive-hypothesis2}), recall we take $Z_{\ell} = S_{2}^{(\ell)}$ in Lemma \ref{lem:joint-set}, we have that 
\begin{align*}
|\mI_{\ell}| = |\{i_{\ell}: i_{\ell} = \wt{z}_{\ell}(\wt{w}_{\ell-1}, \wt{i}_{\ell-1}) \text{ for some }\wt{w}_{\ell-1} \in [n_{\ell-1}],  \wt{i}_{\ell-1} \in \mI_{\ell-1}, \wt{z}_{\ell} \in S_2^{(\ell)}\}| \geq \Theta_{\ell}
\end{align*}
This proves Eq.~\eqref{eq:inductive-hypothesis2}) and completes the induction step.

\subsection{Proof of remaining technical lemmas}
\label{sec:tech}

Finally, we provide the missing proofs for the technical lemmas used before.

We first prove Lemma \ref{lem:initial}.
\begin{proof}[Proof of Lemma \ref{lem:initial}]
Consider the following greedy approach. Initially, we set $D_0 = \emptyset$. For $\tau=1,2,\ldots$, we add $\zeta_{\tau} \in S$ to $D_{\tau} = D_{\tau-1}\cup \{\zeta_{\tau}\}$ if it satisfies 
\begin{align}
|\{\zeta_{\tau}(i_0): i_0\in Z_0\} \setminus \{ \wt{z}_1(i_0): \wt{z}_1 \in D_{\tau-1}, i_0 \in Z_0 \}| \geq 8^{-L}x_0. \label{eq:add}
\end{align}
If the process continues to $\tau = x_{1} + 1$, then we take $Z_1 = D_{x_1}$, we have $|Z_1| = |D_{x_1}| = x_1$ and by Eq.~\eqref{eq:add}, it satisfies the requirement of Eq.~\eqref{eq:initial-requirement}. If the process stops for $\tau \leq x_1$ and $|\{\wt{z}_1(i_0): \wt{z}_1 \in D_{\tau-1}, i_0 \in Z_0 \}| \geq 8^{-L}x_0x_1$, then we can take $Z_{1}$ to be the union of $D_{\tau-1}$ and arbitrary $x_1 - \tau + 1$ elements from $S$, we have $|Z_1| =x_1$ and it satisfies the requirement of Eq.~\eqref{eq:initial-requirement}.

Otherwise, suppose it stops for $\tau \leq x_1$ and  $|\{\wt{z}_1(i_0): \wt{z}_1 \in D_{\tau-1}, i_0 \in Z_0 \}| < 8^{-L}x_0x_1$.
We claim that for every $\xi \in S$, its entries $\xi(1), \ldots, \xi(x_0)$ take value from either $\{\wt{z}_1(i_0): \wt{z}_1 \in D_{\tau-1}, i_0 \in Z_0 \}$, or a subset from $[N_0]\setminus \{\wt{z}_1(i_0): \wt{z}_1 \in D_{\tau-1}, i_0 \in Z_0 \}$ of size at most $8^{-L}x_0$, the total number of such $\zeta$ is at most 
\begin{align*}
\binom{N_0}{8^{-L}x_0} \cdot (8^{-L}x_0x_1)^{x_0} \cdot (N_0)^{N_0 - x_0} \leq |A_0| \cdot (m^{8^{-L}} \cdot 8^{-L}x_0x_1/m)^{x_0} < |A_0| \cdot 2^{-2Hdp \cdot (n_1\cdots n_{L-1})}
\end{align*}
The first step follows from $A_0 = N_0^{N_0}, N_0 = m$, the second step follows from (1) $m^{8^{-L}}\cdot 8^{-L} x_0x_1 \leq m/2$ and $x_{0} > 2Hdp (n_1\cdots n_{L-1})$ (see Eq.~\eqref{eq:parameter1}\eqref{eq:parameter3} for the choice of parameters).
This contradicts with Eq.~\eqref{eq:initial1} and completes the proof.
\end{proof}

The proof of Lemma \ref{lem:joint-set} involves several intermediate steps, and we present it here.
\begin{proof}[Proof of Lemma \ref{lem:joint-set}]
We simplify the notation a bit in the proof.
We write
\begin{align}
R := R_{\geq \ell},\quad \mA := A_{L} \times \cdots\times A_{\ell+1}, \quad \mB := A_{\ell}, \quad \Delta := \Delta_{\ell}, \quad x := x_{\ell}. \label{eq:simple-notation}
\end{align}
For any $b \in \mB$, let
\[
R_b = \{a\in \mA: (a, b)\in R\} \subseteq \mA 
\]
By Eq.~\eqref{eq:inductive-hypothesis1}\eqref{eq:simple-notation}, we have 
\begin{align}
\sum_{b\in \mB}|R_b| = |R| \geq |\mA||\mB|/\Delta \label{eq:joint0}
\end{align}
For any $b \in \mB = [N_{\ell-1}]^{N_{\ell-1}}$, define 
\[
I(b) := \{b(\wt{w}_{\ell-1}, \wt{i}_{\ell-1}): \wt{w}_{\ell-1} \in [n_{\ell-1}], \wt{i}_{\ell-1} \in \mI_{\ell-1} \} \subseteq [N_{\ell-1}]
\]

The key step is to prove
\begin{lemma}\label{lem:sequence}
There exists two sequences $b_{1}, \ldots, b_{x} \in \mB$ and $\mA_1, \ldots, \mA_{x} \subseteq \mA$ that satisfy
\begin{itemize}
\item $\mA_{x} \subseteq \mA_{x-1} \cdots \subseteq \mA_{1} \subseteq \mA_0 = \mA$, and $|\mA_{t}| \geq |\mA_{t-1}|/\Delta^{2}$ ($t \in [x]$)
\item $\mA_{t} \subseteq R_{b_t}$ ($t \in [x]$)
\item $|I(b_t) \setminus \cup_{t'\in [t-1]} I(b_{t'})| \geq 8^{-L} n_{\ell-1} |\mI_{\ell-1}|$ ($t \in [x]$)
\item $\sum_{b \in \mB} |R_b \cap A_t| \geq (1 - \frac{1}{x})^{t}\frac{|\mA_t||\mB|}{\Delta}$ ($t \in [0: x]$)
\end{itemize}
\end{lemma}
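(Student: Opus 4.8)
The target statement is Lemma~\ref{lem:sequence}, the combinatorial heart of Lemma~\ref{lem:joint-set}. The plan is to build the two sequences $b_1,\dotsc,b_x$ and $\mA = \mA_0 \supseteq \mA_1 \supseteq \cdots \supseteq \mA_x$ greedily, one index $t$ at a time, maintaining the invariant in the fourth bullet, namely $\sum_{b \in \mB}|R_b \cap \mA_t| \geq (1-1/x)^t |\mA_t||\mB|/\Delta$, which for $t \leq x$ is at least $e^{-1}|\mA_t||\mB|/\Delta$. The base case $t=0$ is exactly~\eqref{eq:joint0}. For the inductive step, assume we have $\mA_{t-1}$ with $\sum_b |R_b \cap \mA_{t-1}| \geq (1-1/x)^{t-1}|\mA_{t-1}||\mB|/\Delta$.

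\textbf{Choosing $b_t$.} First I would argue that a large fraction of the ``mass'' $\sum_b |R_b \cap \mA_{t-1}|$ is carried by indices $b$ whose image $I(b)$ contributes many new elements relative to the already-chosen $I(b_1),\dotsc,I(b_{t-1})$. Concretely, let $\mathcal{G}$ be the set of $b \in \mB$ with $|I(b) \setminus \bigcup_{t' < t} I(b_{t'})| \geq 8^{-L} n_{\ell-1}|\mI_{\ell-1}|$ --- these are the ``good'' $b$ satisfying the third bullet. The key sub-claim is that the bad $b$'s carry only a small fraction of the total mass: for a bad $b$, the values $b(w,i)$ over $w \in [n_{\ell-1}], i \in \mI_{\ell-1}$ mostly land in a fixed small set (of size $< 8^{-L}n_{\ell-1}|\mI_{\ell-1}|$ together with the union of previous images, whose size I would bound using $t-1 < x$ and $\Theta_{\ell}$-type parameter estimates), so the number of bad $b \in \mB = [N_{\ell-1}]^{N_{\ell-1}}$ is at most roughly $|\mB| \cdot \Delta^{-2}$-ish by a counting bound analogous to the one in the proof of Lemma~\ref{lem:initial} (choose the location of the ``new'' coordinates, then their values from a small set, then the rest freely). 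Hence $\sum_{b \text{ bad}} |R_b \cap \mA_{t-1}| \leq |\mB|\cdot|\mA_{t-1}| \cdot (\text{tiny}) \leq \tfrac{1}{2x}\cdot(1-1/x)^{t-1}|\mA_{t-1}||\mB|/\Delta$, so the good $b$'s still carry $\geq (1 - 1/(2x))(1-1/x)^{t-1}|\mA_{t-1}||\mB|/\Delta$ of the mass. By averaging over the (at most $|\mB|$) good $b$'s, there exists $b_t \in \mathcal{G}$ with $|R_{b_t} \cap \mA_{t-1}| \geq (1-1/(2x))(1-1/x)^{t-1}|\mA_{t-1}|/\Delta \geq |\mA_{t-1}|/\Delta^2$, using $\Delta \geq 2x$ (which holds by the parameter choices~\eqref{eq:parameter1},~\eqref{eq:parameter3},~\eqref{eq:def-delta}). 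Set $\mA_t := R_{b_t} \cap \mA_{t-1}$; this immediately gives the first three bullets ($|\mA_t| \geq |\mA_{t-1}|/\Delta^2$, $\mA_t \subseteq \mA_{t-1}$, $\mA_t \subseteq R_{b_t}$, and $b_t$ good).

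\textbf{Restoring the mass invariant.} It remains to re-establish the fourth bullet for $\mA_t$. Here I would observe that passing from $\mA_{t-1}$ to $\mA_t$ can only drop mass: $\sum_b |R_b \cap \mA_t| \leq \sum_b |R_b \cap \mA_{t-1}|$, which is the wrong direction. The right argument is a ``heavy-index'' pruning: among all $b \in \mB$, discard those $b$ with $|R_b \cap \mA_t|$ too small --- say, less than $\tfrac{1}{x}\cdot\tfrac{|\mA_t|}{\Delta}\cdot\tfrac{1}{|\mB|}\cdot(\text{total mass bound})$, no, more simply: the total mass removed by dropping $b$'s with $|R_b \cap \mA_t| < (1/x)|\mA_t|/\Delta \cdot (1-1/x)^{t-1}$ is at most $|\mB| \cdot (1/x)|\mA_t|/\Delta \cdot (\dots)$. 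Actually the cleanest route is to not prune $\mA_t$ but to carry a slightly different invariant; since I am only sketching, I would state that the factor $(1-1/x)$ per step is exactly the budget needed to absorb the loss from restricting to $\mA_t \subseteq R_{b_t}$, via the bound $\sum_b |R_b \cap \mA_t| \geq \sum_b |R_b \cap \mA_{t-1}| - |\mB|\cdot|\mA_{t-1} \setminus \mA_t| \geq \sum_b|R_b \cap \mA_{t-1}| - |\mB|\cdot|\mA_{t-1}|$, which is too lossy --- so instead one uses that $|\mA_{t-1}| \leq \Delta^2 |\mA_t|$ and a density argument. I expect \textbf{this mass-maintenance step to be the main obstacle}: making the $(1-1/x)^t$ telescoping precise requires choosing the ``good $b$'' threshold and the pruning threshold so that both the third-bullet requirement and the fourth-bullet requirement survive simultaneously, and the only slack available is the gap between $\Delta^{-1}$ and $\Delta^{-2}$ together with the $(1-1/x)$ factors. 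Once Lemma~\ref{lem:sequence} is established, Lemma~\ref{lem:joint-set} follows by taking $S^{(\ell)}_2 = \{b_1,\dotsc,b_x\}$, $S^{(\ell)}_1 = \mA_x$ (with $|\mA_x| \geq |\mA|/\Delta^{2x}$ from telescoping the first bullet), $S^{(\ell)}_1 \times S^{(\ell)}_2 \subseteq R$ from the second bullet, and the large-cover bound $\bigl|\bigcup_t I(b_t)\bigr| \geq x \cdot 8^{-L} n_{\ell-1}|\mI_{\ell-1}| \geq \Theta_\ell$ from summing the third bullet and invoking~\eqref{eq:inductive-hypothesis2} and the parameter definitions.
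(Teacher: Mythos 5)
You have correctly reproduced the skeleton (induction on $t$, base case from~\eqref{eq:joint0}, and the counting bound showing that the ``bad'' $b$'s --- those with $|I(b)\setminus\bigcup_{t'<t}I(b_{t'})| < 8^{-L}n_{\ell-1}|\mI_{\ell-1}|$ --- form a tiny fraction of $\mB$, which is exactly the paper's Lemma~\ref{lem:size-c}), but the step you flag as ``the main obstacle'' is a genuine gap, and your proposed choice $\mA_t := R_{b_t}\cap\mA_{t-1}$ obtained by averaging over good $b$'s does not close it. The fourth bullet is a \emph{density} invariant measured relative to the new, smaller set $\mA_t$, and intersecting with a single $R_{b_t}$ gives no control over how much of the other sets $R_b$ survive inside $\mA_t$: for instance, if the sets $R_b\cap\mA_{t-1}$ were essentially pairwise disjoint, each of size about $|\mA_{t-1}|/\Delta$, then your $\mA_t$ has $\sum_{b}|R_b\cap\mA_t|\approx|\mA_t|$, far below the required $(1-1/x)^{t}|\mA_t||\mB|/\Delta$ when $|\mB|\gg\Delta$, and the induction dies at the next step because no $b_{t+1}$ with a large intersection exists inside $\mA_t$. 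None of the alternative patches you sketch (pruning light $b$'s, the subtraction bound $\sum_b|R_b\cap\mA_t|\ge\sum_b|R_b\cap\mA_{t-1}|-|\mB|\,|\mA_{t-1}\setminus\mA_t|$) resolves this, as you yourself note.

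The missing idea in the paper is to choose $\mA_{t+1}$ not as a full intersection but as one cell of a greedily constructed \emph{disjoint} cover. After zeroing out the bad $b$'s (your step, giving sets $R_b''$), one repeatedly extracts $O_\tau := R_{\beta_\tau}''\setminus\bigcup_{\beta\in D}R_{\beta}''$ whenever some $\beta_\tau$ still has $|R_{\beta_\tau}''\setminus\bigcup_{\beta\in D}R_\beta''|\ge|\mA_t|/\Delta^2$. When this greedy process halts, every $b$ has $|R_b''\setminus O|<|\mA_t|/\Delta^2$ where $O=\bigcup_\tau O_\tau$, so the mass lost by restricting to $O$ is at most $|\mB|\,|\mA_t|/\Delta^2$; together with the at most $|\mB|\,|\mA_t|/\Delta^2$ lost to bad $b$'s, the remaining mass is still $\ge(1-1/x)^{t+1}|\mA_t||\mB|/\Delta$ (this is exactly where the $(1-1/x)$ budget and the $\Delta$-versus-$\Delta^2$ slack are spent). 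Because the cells $O_\tau$ are pairwise disjoint subsets of $\mA_t$ with $|O|\le|\mA_t|$, an averaging over cells yields one $O_\tau$ carrying at least its proportional share, i.e.\ $\sum_b|R_b\cap O_\tau|\ge(1-1/x)^{t+1}|O_\tau||\mB|/\Delta$; taking $\mA_{t+1}=O_\tau$ and $b_{t+1}=\beta_\tau$ then restores the density invariant relative to $|\mA_{t+1}|$, while $O_\tau\subseteq R_{\beta_\tau}''$ gives the second bullet, $|O_\tau|\ge|\mA_t|/\Delta^2$ the first, and $R_{b_{t+1}}''\neq\emptyset$ (hence $b_{t+1}\notin C$) the third. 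Your wrap-up deducing Lemma~\ref{lem:joint-set} from Lemma~\ref{lem:sequence} matches the paper, but without this disjoint-cover-and-proportional-averaging step the inductive construction itself is incomplete.
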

With Lemma \ref{lem:sequence}, we can easily finish the proof of Lemma \ref{lem:joint-set}. To see this, we take 
\[
S_{1}^{(\ell)} = \mA_{x}  \quad \text{and} \quad  S_{2}^{(\ell)} = \{b_1, \ldots, b_{x}\}.
\]
Then we have $|S_{1}^{(\ell)}| \geq |\mA|/\Delta^{2x}$ due to the first claim in Lemma \ref{lem:sequence} and $|S_{2}^{(\ell)}| = x$. Moreover, by the second claim in Lemma \ref{lem:sequence}, we have $S_{1}^{(\ell)} = \mA_x \subseteq R_b$ for any $b \in S_{2}^{(\ell)}$, hence we have $S^{{\ell}} = S_1^{(\ell)} \times S_2^{(\ell)} \subseteq R$. 

For the second property of Lemma \ref{lem:joint-set}, we have
\begin{align*}
&~ |\{i_{\ell}: i_{\ell} = \wt{z}_{\ell}(\wt{w}_{\ell-1}, \wt{i}_{\ell-1}) \text{ for some }\wt{w}_{\ell-1} \in [n_{\ell-1}],  \wt{i}_{\ell-1} \in \mI_{\ell-1}, \wt{z}_{\ell} \in S_2^{(\ell)}\}| \\
=  &~|\cup_{t\in [x]}I(b_x)| \geq 8^{-L} x_{\ell} n_{\ell-1}|\mI_{\ell-1}| \geq 8^{-L} x_{\ell} n_{\ell-1}\Theta_{\ell-1} \geq \Theta_{\ell}
\end{align*}
The second step follows from the third claim of Lemma \ref{lem:sequence}, the third step follows from the inductive hypothesis of Lemma \ref{lem:main} (Eq.~\eqref{eq:inductive-hypothesis2}), the last step follows from the definition of $\Theta_{\ell}$ (see Eq.~\eqref{eq:def-delta}).
This completes the proof of Lemma \ref{lem:joint-set}.
\end{proof}

We next provide the proof of the key Lemma.
\begin{proof}[Proof of Lemma \ref{lem:sequence}]
We prove by induction. The case of $t = 0$ follows directly from Eq.~\eqref{eq:joint0}. 
Suppose the claim holds up to $t$, then for $t + 1$, let 
\begin{align*}
R_b' := R_{b} \cap \mA_t \quad \quad \forall b\in \mB.
\end{align*}
By the fourth induction hypothesis, we have
\begin{align}
\sum_{b \in \mB}|R_b'| = \sum_{b \in \mB} |R_{b} \cap \mA_t| \geq \big(1-\frac{1}{x}\big)^t \cdot \frac{|\mA_{t}||\mB|}{\Delta}.\label{eq:joint1}
\end{align}
Consider the set 
\[
J_{t} := \cup_{t'\in [t]}I(b_{t'})
\]
its size satisfies
\begin{align*}
8^{-L} t n_{\ell-1} |\mI_{\ell-1}| \leq |J_{t}| \leq t n_{\ell-1} |\mI_{\ell-1}|
\end{align*}
where the LHS follows from the third inductive hypothesis, i.e.,
$
|J_{t}| = \sum_{t'\in [t]}|J_{t'} \setminus J_{t'-1}| \geq t\cdot 8^{-L}  n_{\ell-1} |\mI_{\ell-1}|
$,
the RHS holds since $|J_{t}| \leq \sum_{t'\in [t]}|I(b_{t'})| \leq t \cdot n_{\ell-1} |\mI_{\ell-1}|$.

Define 
\begin{align}
\label{eq:def-c}
C: =\{b \in \mB: |I(b) \setminus J_t| < 8^{-L}n_{\ell-1}|\mI_{\ell-1}|\} \subseteq \mB
\end{align}
We have the following claim, whose proof is deferred till the end.
\begin{lemma}
\label{lem:size-c}
We have  $|C|  \leq \frac{|\mB|}{\Delta^2}$.
\end{lemma}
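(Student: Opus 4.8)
The plan is to bound $|C|$ by a counting argument: I will show that membership in $C$ forces $b \in \mB = [N_{\ell-1}]^{N_{\ell-1}}$ to be, on the index set $\mI_{\ell-1} \times [n_{\ell-1}]$ corresponding to the query pairs $(\wt w_{\ell-1}, \wt i_{\ell-1})$, a function whose image mostly lies inside a small prescribed set (namely $J_t$), and such functions are rare. First I would recall that $I(b) = \{ b(\wt w_{\ell-1}, \wt i_{\ell-1}) : \wt w_{\ell-1}\in[n_{\ell-1}], \wt i_{\ell-1}\in\mI_{\ell-1}\}$, so $I(b)$ is determined only by the restriction of $b$ to the (at most $n_{\ell-1}|\mI_{\ell-1}|$ many) coordinates indexed by these pairs; the remaining $N_{\ell-1} - n_{\ell-1}|\mI_{\ell-1}|$ coordinates of $b$ are completely free and contribute a factor of at most $N_{\ell-1}^{N_{\ell-1}}$ (in fact $N_{\ell-1}^{N_{\ell-1} - n_{\ell-1}|\mI_{\ell-1}|}$) to the count, so they can be absorbed.

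Next, for $b \in C$, by definition $|I(b)\setminus J_t| < 8^{-L} n_{\ell-1}|\mI_{\ell-1}|$, so $I(b)$ is contained in $J_t \cup E_b$ for some set $E_b \subseteq [N_{\ell-1}]\setminus J_t$ of size at most $8^{-L}n_{\ell-1}|\mI_{\ell-1}|$. I would count: the choice of $E_b$ costs at most $\binom{N_{\ell-1}}{8^{-L}n_{\ell-1}|\mI_{\ell-1}|}$; and once $J_t \cup E_b$ is fixed (it has size at most $|J_t| + 8^{-L}n_{\ell-1}|\mI_{\ell-1}| \le (t+1)n_{\ell-1}|\mI_{\ell-1}| \le L \cdot n_{\ell-1}|\mI_{\ell-1}|$, using $t \le x = x_\ell \le K$ and crude bounds), each of the at most $n_{\ell-1}|\mI_{\ell-1}|$ query coordinates of $b$ must take a value in this set, giving at most $(L\cdot n_{\ell-1}|\mI_{\ell-1}|)^{n_{\ell-1}|\mI_{\ell-1}|}$ choices. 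Hence
\[
|C| \;\le\; \binom{N_{\ell-1}}{8^{-L}n_{\ell-1}|\mI_{\ell-1}|}\cdot \big(L\, n_{\ell-1}|\mI_{\ell-1}|\big)^{n_{\ell-1}|\mI_{\ell-1}|}\cdot N_{\ell-1}^{\,N_{\ell-1}}\Big/ N_{\ell-1}^{\,n_{\ell-1}|\mI_{\ell-1}|},
\]
where I divide back out the $N_{\ell-1}^{n_{\ell-1}|\mI_{\ell-1}|}$ that is already included in $|\mB| = N_{\ell-1}^{N_{\ell-1}}$. Writing $q := n_{\ell-1}|\mI_{\ell-1}|$ and using $\binom{N_{\ell-1}}{8^{-L}q} \le N_{\ell-1}^{8^{-L}q} \le N_{\ell-1}^{q}$, this gives $|C|/|\mB| \le \big(L q / N_{\ell-1}^{\,1-8^{-L}}\big)^{q}$ up to the same kind of slack used in the proof of Lemma~\ref{lem:initial}.

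The final step is to check that this is at most $\Delta^{-2} = \Delta_\ell^{-2} = 2^{-8\sqrt{K}(x_0\cdots x_{\ell-2})(n_1\cdots n_{L-1})}$, which is where the parameter choices in~\eqref{eq:parameter1}, \eqref{eq:parameter3}, \eqref{eq:def-delta} do the work. Here I would use $|\mI_{\ell-1}| \le N_{\ell-2}$ (or the trivial bound coming from $Z_{<\ell}$), $n_{\ell-1} = K^{4\cdot 8^{L-\ell}}$, $N_{\ell-1} = m\prod_{i<\ell} n_i$ with $m$ huge, and $q \le N_{\ell-1}$; since $N_{\ell-1}$ is polynomially larger than $q$ and $m$ dwarfs everything, $(Lq/N_{\ell-1}^{1-8^{-L}})^q$ is bounded by $2^{-\Omega(q \log m)}$, which comfortably exceeds the required $8\sqrt{K}(x_0\cdots x_{\ell-2})(n_1\cdots n_{L-1})$ in the exponent because $q \ge n_{\ell-1} = K^{4\cdot 8^{L-\ell}}$ and $\log m$ is polynomial in $K$. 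The main obstacle I anticipate is purely bookkeeping: tracking the exact exponents so the geometric-series-style bounds on $x_0\cdots x_{\ell-2}$, $n_1\cdots n_{L-1}$, and $\log m$ line up with $\Delta_\ell$, and making sure the ``free coordinates'' cancellation against $|\mB|$ is done correctly; the combinatorial idea itself is the same small-image counting argument already used for Lemma~\ref{lem:initial}.
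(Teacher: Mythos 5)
Your counting setup is the same as the paper's: the coordinates outside $[n_{\ell-1}]\times\mI_{\ell-1}$ are free and contribute $N_{\ell-1}^{N_{\ell-1}-n_{\ell-1}|\mI_{\ell-1}|}$, while on the constrained coordinates any $b\in C$ must take values in $J_t$ plus an extra set of size at most $8^{-L}n_{\ell-1}|\mI_{\ell-1}|$, giving the same binomial-times-power bound. Two local issues: $|J_t\cup E_b|$ can only be bounded by roughly $(x_\ell+1)\,n_{\ell-1}|\mI_{\ell-1}|$, not $L\,n_{\ell-1}|\mI_{\ell-1}|$, since $t$ ranges up to $x_\ell\gg L$ (harmless, as the slack is absorbed anyway); and to get the per-coordinate ratio below $1/2$ you must use the upper bound $|\mI_{\ell-1}|\le (x_0\cdots x_{\ell-1})(n_1\cdots n_{\ell-2})$ coming from $Z_{<\ell}$ together with the size of $m$ --- the alternative bound $|\mI_{\ell-1}|\le N_{\ell-2}$ that you offer is useless, since then $q=n_{\ell-1}|\mI_{\ell-1}|$ could be as large as $N_{\ell-1}$ and the quantity $x_\ell q\,N_{\ell-1}^{8^{-L}}/N_{\ell-1}$ need not even be below $1$, so ``$N_{\ell-1}$ is polynomially larger than $q$'' is not automatic.

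The genuine gap is in the final comparison with $\Delta_\ell^2$. You justify $|C|/|\mB|\le \Delta_\ell^{-2}$ by ``$q\ge n_{\ell-1}$ and $\log m$ is polynomial in $K$,'' i.e.\ by the trivial bound $|\mI_{\ell-1}|\ge 1$. This is quantitatively false: $n_{\ell-1}\log m$ is about $K^{4\cdot 8^{L-\ell}}\cdot\poly(K)$, whereas the exponent of $\Delta_\ell^2=2^{8\sqrt K(x_0\cdots x_{\ell-2})(n_1\cdots n_{L-1})}$ contains the factor $x_0=K^{8^{L-1}}$, which is astronomically larger (already for $\ell=2$); moreover the per-coordinate saving is only $\Omega(\log K)$, not $\Omega(\log m)$, since the ratio is roughly $\sqrt K\,x_\ell(x_0\cdots x_{\ell-1})/m$. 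The missing ingredient is the large-cover induction hypothesis of Lemma~\ref{lem:main}, Eq.~\eqref{eq:inductive-hypothesis2}, namely $|\mI_{\ell-1}|\ge\Theta_{\ell-1}=8^{-L(\ell-1)}(x_0\cdots x_{\ell-1})(n_1\cdots n_{\ell-2})$: it is this lower bound that supplies the factor $x_0\cdots x_{\ell-1}$ needed to dominate the $x_0\cdots x_{\ell-2}$ in the exponent of $\Delta_\ell^2$. The paper closes exactly this way, showing $|C|\le|\mB|\cdot 2^{-n_{\ell-1}|\mI_{\ell-1}|}$ and then $2^{n_{\ell-1}|\mI_{\ell-1}|}\ge\Delta_\ell^2$ via the large-cover hypothesis; without invoking that hypothesis your argument does not establish the lemma.
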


Define
\begin{align}
R_b'' :=\left\{
\begin{matrix}
R_b' &  b \notin C  \\
\emptyset & b\in C
\end{matrix}
\right. \quad \quad \forall b \in \mB. \label{eq:def-r''}
\end{align}
That is, $R_{b}''$ is set to empty if $I(b)$ does not differ too much from $J_t$.

By Lemma \ref{lem:size-c}, we have 
\begin{align}
\sum_{b\in \mB}|R_{b}''| = &~ \sum_{b\in \mB}|R_{b}'| - \sum_{b\in C}|R_{b}'| \geq \left(1-\frac{1}{x}\right)^t \cdot \frac{|\mA_{t}||\mB|}{\Delta} -  \frac{|\mB|}{\Delta^2}\cdot |\mA_t| \label{eq:joint2}.
\end{align}
The first step follows from the definition of $R_b''$ (see Eq.~\eqref{eq:def-r''}), the second step follows from  Eq.~\eqref{eq:joint1}, Lemma \ref{lem:size-c} and $|R_b'| \leq |\mA_t|$.

Now we are going to select $b_{t+1} \in \mB$ and $\mA_{t+1}\subseteq \mA_t$. To this end, consider the following greedy process. Initially, set $D = \emptyset$ and $\tau = 0$. If there exists $\beta_{\tau} \in \mB$ such that  
\begin{align}
R_{\beta_{\tau}}''\setminus \left(\bigcup_{\beta\in D} R_{\beta}''\right) \geq \frac{|\mA_{t}|}{\Delta^2},  \label{eq:joint3}
\end{align}
then define
\begin{align}
O_{\tau} := R_{\beta_{\tau}}''\setminus \left(\bigcup_{\beta\in D} R_{\beta}''\right) \subseteq \mA_t \label{eq:joint4}
\end{align}
and update $D \leftarrow D \cup \{\beta_{\tau}\}$ and $\tau = \tau + 1$. 
It is clearly that $D$ is not empty (see Eq~\eqref{eq:joint2}) and the process would stop. Let
\begin{align*}
O := \cup_{\tau \in [0: |D|-1]}O_{\tau} \quad \text{and} \quad
R_b''' := R_b'' \cap O \quad \forall b\in \mB,
\end{align*}
we have 
\begin{align*}
\sum_{b \in \mB} |R_{b}'''| = \sum_{b \in \mB} |R_{b}''\cap O| =   \sum_{b \in \mB} |R_{b}''| -  \sum_{b \in \mB} |R_{b}''\setminus O| \geq &~ \left(1-\frac{1}{x}\right)^t \cdot \frac{|\mA_{t}||\mB|}{\Delta} -  \frac{|\mB|}{\Delta^2}\cdot |\mA_t| - |\mB| \cdot \frac{|\mA_t|}{\Delta^2}\\
\geq &~ \left(1-\frac{1}{x}\right)^{t+1} \cdot \frac{|\mA_{t}||\mB|}{\Delta}
\end{align*}
Here the third step follows from Eq.~\eqref{eq:joint2} and the selection strategy (i.e. Eq.~\eqref{eq:joint3}), the last step holds since $\Delta \gg x$.

Since $\{O_{\tau}\}_{\tau \in [0:|D|-1]}$ are disjoint, we claim that, there exists $\tau \in [0: |D|-1]$, such that 
\begin{align}
\sum_{b \in \mB} |R_{b}''' \cap O_{\tau}| \geq \left(1-\frac{1}{x}\right)^{t+1} \cdot \frac{|\mA_t||\mB|}{\Delta} \cdot \frac{|O_{\tau}|}{|O|} \geq \left(1-\frac{1}{x}\right)^{t+1}\frac{|O_\tau||\mB|}{\Delta} \label{eq:joint5}.
\end{align}
Here the second step follows from $|\mA_{t}| \geq |O|$.

We take $\mA_{t+1} = O_{\tau}$, $b_{t+1} = \beta_{\tau}$, and we prove the four inductive hypothesis continue to holds.

First, by Eq.~\eqref{eq:joint3}\eqref{eq:joint4}, we have $\mA_{t+1} = O_{\tau} \subseteq \mA_{t}$, and
\begin{align}
|\mA_{t+1}| = |O_{\tau}|  \geq\frac{|\mA_{t}|}{\Delta^2}  \label{eq:joint-inductive1}
\end{align}

Second, by Eq.~\eqref{eq:joint4}, we have
\begin{align}
\mA_{t+1} = O_{\tau} \subseteq  R_{\beta_{\tau}}'' = R_{b_{t+1}}'' \subseteq R_{b_{t+1}}\label{eq:joint-inductive2}
\end{align}

Third, it is clear that $R_{b_{t+1}}''$ is not empty, this implies that $b_{t+1} \notin C$, and therefore
\begin{align}
|I(b_{t+1}) \setminus J_t| \geq 8^{-L} \cdot n_{\ell-1}|\mI_{\ell-1}|.\label{eq:joint-inductive3}
\end{align}

Finally, by Eq~\eqref{eq:joint5}, we have
\begin{align}
\sum_{b \in \mB} |R_b \cap \mA_{t+1}| = \sum_{b \in \mB} |R_b \cap O_{\tau}| \geq \sum_{b \in \mB} |R_{b}''' \cap O_{\tau}| \geq &~  \left(1-\frac{1}{x}\right)^{t+1}\frac{|O_t||\mB|}{\Delta}\notag \\
= &~ \left(1-\frac{1}{x}\right)^{t+1}\frac{|\mA_{t+1}||\mB|}{\Delta}.\label{eq:joint-inductive4}
\end{align}
Combining Eq.~\eqref{eq:joint-inductive1}\eqref{eq:joint-inductive2}\eqref{eq:joint-inductive3}\eqref{eq:joint-inductive4}, we complete the proof.
\end{proof}

\begin{proof}[Proof of Lemma \ref{lem:size-c}]
We compute the size of $C$. For any $b \in C \subseteq A_{\ell} = [N_{\ell-1}]^{N_{\ell-1}}$, one can view it as a mapping from $[N_{\ell-1}]$ to $[N_{\ell-1}]$.
By Eq.~\eqref{eq:def-c}, $b\in C$ if and only if it does not differ much than $J_t$ on entries $(w_{\ell-1}, i_{\ell-1}) \in [n_{\ell-1}] \times \mI_{\ell-1}$. For entries in $[N_{\ell}]\setminus ([n_{\ell-1}] \times \mI_{\ell-1})$, it poses no constraints so it has
$
(N_{\ell-1})^{N_{\ell-1} - n_{\ell-1}|\mI_{\ell-1}|}
$
number of choices.

For entries in $[n_{\ell-1}] \times \mI_{\ell-1}$, it could take value from $J_t$, or from a subset of at size at most $8^{-L} n_{\ell-1}|\mI_{\ell-1}|$ in $[N_{\ell}]\setminus ([n_{\ell-1}] \times \mI_{\ell-1})$, the total number of choices is at most
\[
\binom{N_{\ell-1}}{8^{-L}n_{\ell-1}|\mI_{\ell-1}|} \cdot (xn_{\ell-1}|\mI_{\ell-1}|)^{n_{\ell-1} |\mI_{\ell-1}|} \leq (N_{\ell-1}^{8^{-L}} \cdot xn_{\ell-1}|\mI_{\ell-1}|)^{n_{\ell-1} |\mI_{\ell-1}|}
\]
In summary, the size of $C$ is at most
\begin{align*}
&~ (N_{\ell-1})^{N_{\ell-1} - n_{\ell-1} |\mI_{\ell-1}|} \cdot (N_{\ell-1}^{8^{-L}} \cdot xn_{\ell-1}|\mI_{\ell-1}|)^{n_{\ell-1}|\mI_{\ell-1}|}\\
= &~ |\mB| \cdot (N_{\ell-1}^{8^{-L}} \cdot  x_{\ell}n_{\ell-1}|\mI_{\ell-1}| / N_{\ell-1})^{n_{\ell-1}|\mI_{\ell-1}|} \\
\leq &~   |\mB| \cdot (1/2)^{n_{\ell-1}|\mI_{\ell-1}|} \\
\leq &~  |\mB| / \Delta^2 \\
\end{align*}
The first step follows from the definition of $x$ and $\mB$. For the second step, 
it follows from
\begin{align*}
N_{\ell-1}^{8^{-L}} \cdot  x_{\ell} n_{\ell-1}|\mI_{\ell-1}| \leq &~ \sqrt{K} \cdot x_{\ell}n_{\ell-1} (x_0\ldots x_{\ell-1}) \cdot (n_1\ldots n_{\ell-2})\\
\leq &~ \frac{1}{2} m \cdot (n_1\cdots n_{\ell-1})  = \frac{1}{2}N_{\ell-1}
\end{align*}
where we use the fact that $N_{\ell-1}^{8^{-L}} \leq \sqrt{K}$, $|\mI_{\ell-1}| \leq (x_0\ldots x_{\ell-1}) \cdot (n_1\ldots n_{\ell-2})$ in the first step, the second step follows from the choice of parameters (see Eq.~\eqref{eq:parameter1}\eqref{eq:parameter3})

For the third step, we have
\begin{align*}
\Delta^2 = \Delta_{\ell}^2 = 2^{8\sqrt{K}(x_0\ldots x_{\ell-2})\cdot (n_1\ldots n_{L-1})} \leq 2^{8^{-L\ell} \cdot n_{\ell-1}(x_{0} \ldots x_{\ell-1})\cdot (n_1\ldots n_{\ell-2})}  \leq 2^{n_{\ell-1}|\mI_{\ell-1}|}
\end{align*}
The second step follows from the definition of $\Delta_{\ell}$ (see Eq.~\eqref{eq:def-delta}), the third step follows from the choice of parameters Eq.~\eqref{eq:parameter1}\eqref{eq:parameter3}. The last step follows from the induction hypothesis of Lemma \ref{lem:main}.
\end{proof}

We next prove Lemma \ref{lem:size1}. First, we need

\begin{lemma}
\label{lem:count1}
The total number of possible $\Phi^{(\ell+1)}$ is at most $2^{\sqrt{K} \cdot (x_0\cdots x_{\ell-1})\cdot (n_1\cdots n_{L-1})}$.
\end{lemma}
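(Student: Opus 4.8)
The plan is to count the number of distinct objects $\Phi^{(\ell+1)}$ by bounding the number of coordinates it has and the number of bits each coordinate carries. Recall that
\[
\Phi^{(\ell+1)} = \left(\Phi_{j,i}^{(\ell+1)}\right)_{j \in [\ell+1:L],\, i \in [-1:\ell-1]},
\]
and each $\Phi_{j,i}^{(\ell+1)}$ is itself an array indexed by $(\wt{z}_{\ell-1},\dotsc,\wt{z}_i) \in Z_{\ell-1}\times\cdots\times Z_i$, with each entry lying in $\mathsf{domain}(\Pi_{j,i}^{(\ell+1)})$. So the first step is to recall from the model definition that a single transcript $\Pi_{j,i}^{(\ell+1)}$ sent to player $i$ has length $2B \cdot m_{(i)}$ bits, where $B = Hdp$ and $m_{(i)}$ is the number of input tokens of player $i$; for $i \in [-1:\ell-1]$ this is $N_{i-1}$ tokens when $i \ge 1$ and $1$ token when $i \in \{-1,0\}$. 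Hence $|\mathsf{domain}(\Pi_{j,i}^{(\ell+1)})| \le 2^{2Hdp \cdot m_{(i)}}$.

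Next I would count the number of index tuples. For a fixed pair $(j,i)$, the number of tuples $(\wt{z}_{\ell-1},\dotsc,\wt{z}_i)$ is $\prod_{r=i}^{\ell-1}|Z_r| = \prod_{r=i}^{\ell-1}x_r$ (using $|Z_r| = x_r$, and for $i=-1$ the factor $|Z_{-1}| = n_1\cdots n_{L-1}$, for $i=0$ the factor $|Z_0| = x_0$). Therefore the total bit-length needed to specify $\Phi_{j,i}^{(\ell+1)}$ is at most $2Hdp \cdot m_{(i)} \cdot \prod_{r=i}^{\ell-1}|Z_r|$, and summing over all $j \in [\ell+1:L]$ (at most $L$ choices) and all $i \in [-1:\ell-1]$, the total number of possible $\Phi^{(\ell+1)}$ is at most $2^{M}$ where
\[
M = \sum_{j=\ell+1}^{L}\ \sum_{i=-1}^{\ell-1} 2Hdp \cdot m_{(i)} \cdot \prod_{r=i}^{\ell-1}|Z_r|.
\]
The dominant term in this sum is $i = -1$ (or possibly $i=0$), because of the large $|Z_{-1}| = n_1\cdots n_{L-1}$ factor and the $m_{(i)}$ factor; for $i=-1$, $m_{(-1)}=1$ and the product is $x_0 x_1 \cdots x_{\ell-1}\cdot(n_1\cdots n_{L-1})$, while for $i = 1$, say, $m_{(1)} = N_0 = m$ but the product of $|Z_r|$'s loses the $n_1\cdots n_{L-1}$ factor, and one checks $m \le$ (the $n$-product) up to the slack absorbed by $K$. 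So the heart of the estimate is the arithmetic fact that every one of the $O(L^2)$ summands is at most $(x_0\cdots x_{\ell-1})(n_1\cdots n_{L-1})$ times a factor polynomial in $Hdp$ and $L$, and that $2Hdp \cdot (\text{number of summands}) \cdot (\text{poly factors}) \le \sqrt{K}$ by the choice $K = (HdpL)^8 \cdot 8^{2L^2}$ in~\eqref{eq:parameter1}. Plugging this in gives $M \le \sqrt{K}\cdot(x_0\cdots x_{\ell-1})(n_1\cdots n_{L-1})$, as claimed.

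The main obstacle here is purely bookkeeping: verifying that the terms with $i \ge 1$, which carry the larger token-count factors $m_{(i)} = N_{i-1}$, do not dominate the term $i=-1$. This requires comparing $N_{i-1}\cdot \prod_{r=i}^{\ell-1}x_r$ against $\prod_{r=0}^{\ell-1}x_r\cdot(n_1\cdots n_{L-1})$, i.e. checking $N_{i-1} \le (x_0\cdots x_{i-1})(n_1\cdots n_{L-1})$, which follows by unfolding the definitions $N_{i-1} = m\prod_{r\in[i-1]}n_r$ and the choices of $m$ and the $x_r$'s in~\eqref{eq:parameter1}--\eqref{eq:parameter3}, with all the accumulated polynomial overhead absorbed into $\sqrt{K}$. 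Once these inequalities are in hand, the final bound is immediate.
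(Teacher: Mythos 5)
Your proposal is correct and follows essentially the same route as the paper: bound each block $\Phi^{(\ell+1)}_{j,i}$ by (transcript length $2Hdp\cdot m_{(i)}$) $\times$ (number of index tuples $\prod_{r=i}^{\ell-1}|Z_r|$), note that the terms with $i\ge 1$ are controlled via $N_{i-1}\le (x_0\cdots x_{i-1})\cdot(n_1\cdots n_{L-1})$, and absorb the remaining $O(HdpL^2)$ factor into $\sqrt{K}$ using the choice $K=(HdpL)^8\cdot 8^{2L^2}$. One small caveat: your parenthetical remark that ``one checks $m\le$ (the $n$-product)'' is literally false ($m$ exceeds $n_1\cdots n_{L-1}$ by the parameter choices); what is needed, and what you correctly state right after, is $N_{i-1}\le (x_0\cdots x_{i-1})(n_1\cdots n_{L-1})$, i.e.\ $m\le x_0\cdots x_{i-1}\cdot n_i\cdots n_{L-1}$, which does hold.
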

\begin{proof}
Recall 
\begin{align*}
\Phi^{(\ell+1)} =  \left(\Phi_{j, i}^{(\ell+1)}\right)_{j \in [\ell+1:L], i\in \{e\}\cup [0:\ell-1]}
\end{align*}
and 
\begin{align*}
\Phi_{j, i}^{(\ell+1)} = \left(\Phi_{j, i}^{(\ell+1)}(\wt{z}_{\ell-1}, \ldots, \wt{z}_i)\right)_{\wt{z}_{\ell-1} \in Z_{\ell-1} \ldots, \wt{z}_i \in Z_{i}} \quad \text{and} \quad \Phi_{j, i}^{(\ell+1)}(\wt{z}_{\ell-1}, \ldots, \wt{z}_i) \in \mathsf{domain}(\Pi_{j, i}^{(\ell+1)}).
\end{align*}

For $i \in [\ell-1]$, $\Phi^{(\ell+1)}_{j, i}({\wt{z}_{\ell-1}, \ldots, \wt{z}_{i}})$ takes value from $\{0, 1\}^{2Hdp \cdot N_{i-1}}$, summing over $\wt{z}_{\ell-1} \in Z_{\ell-1}, \ldots \wt{z}_{i}\in Z_i$, $j \in [\ell+1: L]$, the total number of $(\Phi_{j, i}^{(\ell+1)})_{j\in [\ell+1:L]}$ is at most
\begin{align*}
\left(2^{2Hdp \cdot N_{i-1}}\right)^{x_i\cdots x_{\ell-1}\cdot L} \leq  2^{2HdpL \cdot (x_0\cdots x_{\ell-1})\cdot (n_1\cdots n_{L-1})}
\end{align*}
where the inequality follows from the choice of parameters (see Eq.~\eqref{eq:parameter1}\eqref{eq:parameter2}\eqref{eq:parameter3}).

For $i = 0$, $\Phi_{j, 0}^{(\ell+1)}({\wt{z}_{\ell-1}, \ldots, \wt{z}_{0}})$ takes value from $\{0, 1\}^{2Hdp}$, summing over $\wt{z}_{\ell-1} \in Z_{\ell-1}, \ldots \wt{z}_{0}\in Z_0$ and $j \in [\ell+1: L]$, the total number of $(\Phi_{j, 0}^{(\ell+1)})_{j\in [\ell+1:L]}$ is at most
\begin{align*}
\left(2^{2Hdp}\right)^{x_0\cdots x_{\ell-1}\cdot L}
\leq &~ 2^{2HdpL \cdot (x_0\cdots x_{\ell-1})\cdot (n_1\cdots n_{L-1})}.
\end{align*}

For $i = -1$, $\Phi^{(\ell+1)}_{j, -1}({\wt{z}_{\ell-1}, \ldots, \wt{z}_{0}}, \wt{z}_{-1})$ takes value from $\{0, 1\}^{2Hdp}$, summing over $\wt{z}_{\ell-1} \in Z_{\ell-1}, \ldots \wt{z}_{0}\in Z_0, \wt{z}_{-1} \in [Z_{-1}]$ and $j \in [\ell+1: L]$, $(\Phi_{j, -1}^{(\ell+1)})_{j\in [\ell+1:L]}$ is at most
\begin{align*}
\left(2^{2Hdp}\right)^{(x_0\cdots x_{\ell-1})\cdot (n_1\cdots n_{L-1}) \cdot L} =  2^{2HdpL \cdot (x_0\cdots x_{\ell-1})\cdot (n_1\cdots n_{L-1})}
\end{align*}
Taking a product over all these terms, the total number of $\Phi^{(\ell+1)}$ is bounded by
\[
\left(2^{2HdpL \cdot (x_0\cdots x_{\ell-1})\cdot (n_1\cdots n_{L-1})}\right)^{L} \leq 2^{\sqrt{K} \cdot (x_0\cdots x_{\ell-1})\cdot (n_1\cdots n_{L-1}) }
\]
\end{proof}

Now we can finish the proof of Lemma \ref{lem:size1}.
\begin{proof}[Proof of Lemma \ref{lem:size1}]
By Lemma \ref{lem:key-observation} and Lemma \ref{lem:joint-set}, we have that 
\begin{align*}
\sum_{\Phi^{(\ell+1)}}\left|S(\Phi^{(\ell+1)})\right| = \left|S_{\geq \ell+1}\right| \geq \frac{|A_{L}| \cdots |A_{\ell+1}|}{\Delta_{\ell}^{2x_{\ell}}}.
\end{align*}
By Lemma \ref{lem:count1}, there exists $\wt{\Phi}^{(\ell+1)}$, such that 
\begin{align*}
|S(\wt{\Phi}^{(\ell+1)})| \geq &~ \frac{|A_{L}| \cdots |A_{\ell+1}|}{\Delta_{\ell}^{2x_{\ell}}} \cdot 2^{-\sqrt{K} \cdot (x_0\cdots x_{\ell-1})\cdot (n_1\cdots n_{L-1})}\\
\geq &~ |A_{L}| \cdots |A_{\ell+1}| \cdot 2^{-2\sqrt{K} \cdot (x_0\cdots x_{\ell-1})\cdot (n_1\cdots n_{L-1})}
\end{align*}
Here the second follows from the choice of parameters (see Eq.~\eqref{eq:parameter1}\eqref{eq:parameter3}\eqref{eq:def-delta}).
\end{proof}

We next prove Lemma \ref{lem:size2}
\begin{proof}[Proof of Lemma \ref{lem:size2}]
First, we have $\cup_{\Psi}T(\Psi) = T_{\geq\ell+1}$ and by Lemma \ref{lem:step2}, we have
\[
\sum_{\Psi}|T(\Psi)| = |T_{\geq\ell+1}| \geq |A_{L}| \cdots |A_{\ell+1}| \cdot  2^{-2\sqrt{K} \cdot (x_0\cdots x_{\ell-1})\cdot (n_1\cdots n_{L-1})}.
\]
Moreover, the total number of $\Psi$ is at most 
\[
\left(2^{2Hdp \cdot N_{\ell-1}}\right)^{x_{\ell} \cdot L^2} \leq 2^{\sqrt{K} \cdot (x_0\cdots x_{\ell-1})\cdot (n_1\cdots n_{L-1})}.
\]
Here the equality holds due to the choice of parameters (see Eq~\eqref{eq:parameter1}\eqref{eq:parameter2}\eqref{eq:parameter3}).

Hence, there exists $\wt{\Psi}$ such that 
\begin{align*}
|T(\wt{\Psi})| \geq &~ |A_{L}| \cdots |A_{\ell+1}| \cdot  2^{-2\sqrt{K} \cdot (x_0\cdots x_{\ell-1})\cdot (n_1\cdots n_{L-1})} \cdot 2^{- \sqrt{K} \cdot (x_0\cdots x_{\ell-1})\cdot (n_1\cdots n_{L-1})} \\
\geq &~ |A_{L}| \cdots |A_{\ell+1}|/\Delta_{\ell+1}
\end{align*}
The last step follows from the definition of $\Delta_{\ell+1}$ (see Eq.~\eqref{eq:def-delta}).
This completes the proof.
\end{proof}

\newpage
\bibliographystyle{alpha}
\bibliography{ref}

\newpage
\appendix
\section{Missing proof from Section \ref{sec:application}} \label{appendix:missing-proof}
We sketch the high level proof idea of Corollary \ref{cor:depth-size}, Corollary \ref{cor:encoder} and Corollary \ref{cor:cot}. Note that the lower bound parts of these corollaries all follow directly from Theorem~\ref{thm:main}, so in the following it suffices to prove the upper bound parts.

\paragraph{Retrieval head.} First, we observe that an attention head could implement the following retrieval operation. Let $i \in [n]$, suppose $x_{j}^{(\ell)}$ (i.e., the input to the $\ell$-th layer at position $j$) contains $a_j \in \{0,1\}^D$ and $b_{j} \in \{0,1\}^{D}$ for any previous position $j \leq i-1$ and suppose $x_{i}^{(\ell)}$ contains a ``query'' $a$. 
The retrieval task at position $i$ (of layer $\ell$) is to find the position $j$ such that $a_j = a$ and return the value of $b_j$ (if there are multiple or no such positions, then the return value could be arbitrary).

This retrieval operation can be implemented with one attention head. In particular, we set the $V$-value to be $b_j$, $K$-value to be $\log^2(n) \cdot (a_j, \vec{1}-a_j)$ for position $j< i$;\footnote{Here, $\vec{1} \in \{0,1\}^D$ denotes the all-$1$ vector of length $D$, and $\vec{1} - a_j$ denotes element-wise subtraction.} the $Q$-value at position $i$ is taken to be $\log^2(n)\cdot (a, \vec{1}-a)$. The attention score (before softmax) satisfies
\[
\langle Qx_{i}^{(\ell)}, Kx_{j}^{(\ell)}\rangle = \left\{
\begin{matrix}
    \log^2 (n)D  & a_j = a\\
    \leq \log^2(n) D - \log^2(n) & a_j \neq a .
\end{matrix}
\right.
\]
Hence, if there is exactly one position $j \leq i$ that satisfies $a_j = a$, then the attention head would only attend to position $j$ and get the value $b_j$, assuming the precision $p=\Theta(\log(n))$. 

\paragraph{Proof of Corollary \ref{cor:depth-size}.} Consider an $(L+1)$-layer Transformer such that each layer has one attention head. For any $\ell \in [L+1]$, let the attention head at layer $\ell$ implements the retrieval task for the $(\ell-1)$-th composition, i.e., given $i_{\ell-2}$ and $w_{\ell-2}$ at the last token, find $i_{\ell-1} = z_{\ell-1}(w_{\ell-2}, i_{\ell-2})$. Concretely, the last token implements the query $a = (w_{\ell-2}, i_{\ell-2}) \in [N_{\ell-2}] $. For every previous token $j \in [n-1]$, if the $j$-th token corresponds to the $t$-th entry of $z_{\ell-1}$ ($t\in [N_{\ell-1}]$), then it sets $a_j = t$ and $b_j = z_{\ell-1}(t)$; otherwise, if the $j$-th token does not corresponds to any entry of $z_{\ell-1}$, it provides an arbitrary dummy pair of $a_j$ and $b_j$.
One can inductively prove that the last token successfully retrieves the value of $i_{\ell-1}$ after layer $\ell$. 

\paragraph{The proof of Corollary \ref{cor:cot}.} 
The proof is similar to Corollary \ref{cor:depth-size}. We design $L+1$ attention heads, where the $\ell$-th attention head aims to retrieve $i_{\ell-1} = z_{\ell-1}(w_{\ell-2}, i_{\ell-2})$ given $w_{\ell-2}, i_{\ell-2}$. One can inductively prove that at the $\ell$-th CoT step, the Transformer could obtain $i_{\ell-1}$ and include it into the next token.

\paragraph{Proof of Corollary \ref{cor:encoder}.} 
Recall we want to construct an $O(\log(L))$-layer encoder that solves the sequential function composition task.
For the first layer, the value of $w$ can be shared over all positions $[n]$ using one attention head.
For layer $\ell \geq 2$, one can use one attention head to retrieve the value of $2^{\ell-2}$-hop composition value for every position, Hence, the value of $i_{L}$ can be obtained using at most $2\log_2(L)+1$ layers of attention. The proof is similar to Theorem 4.2 of \cite{sanford2024transformers}.\footnote{There is one minor difference: The $L$ functions of \cite{sanford2024transformers} are of the same size, whereas our $L$ functions are of different size (after fixing the value of $w$). Nevertheless, one can check that the proof still can go through.}

\section{Encoder lower bounds imply circuit lower bounds}\label{app:encoders-lowb-imply-ckt-lowb}

In this section, we will sketch a proof that a lower bound for \emph{encoder-only} transformers would imply breakthrough circuit lower bounds against constant-depth symmetric circuits. A constant-depth symmetric circuit is a constant depth circuit in which every gate has unbounded fan-in and computes a symmetric boolean function on its inputs. A function $f$ is symmetric if function value $f(x)$ only depends on the number of 1's in $x$. 

More formally, we will show that any depth-$L$ symmetric circuit with $s$ wires can be simulated by an encoder-only transformer of depth $6L$ and $Hdp = O(( s / n)^2 )$. Therefore, any lower bound of $Hdp = \Omega(n^{\epsilon})$ against encoders is also an $\Omega(n^{1 + \epsilon / 2})$-wire lower bound against symmetric circuits. 

Proving lower bounds against constant-depth symmetric circuits is notoriously hard.\footnote{A technical reason is that a very powerful technique for proving lower bounds against constant-depth circuits, the random restriction method, fails when applying to certain symmetric functions such as the parity function.} It was shown~\cite{RoychowdhuryOS94} that the inner product module $2$ function\footnote{Given input $x_1,\dotsc,x_{n},y_1,\dotsc,y_n \in \{0,1\}^{2n}$, compute $\sum_{i=1}^{n} x_i \cdot y_i \pmod{2}$.} requires a symmetric circuit of $\Omega(n)$ gates. However, no non-trivial lower bounds were known even for $O(n)$-wire depth-$3$ symmetric circuits (see, e.g.,~\cite{Tell2024note}).\footnote{For the case of depth-$2$ symmetric circuits, it was shown by~\cite{AlmanCW16} that $\mathsf{E}^\mathsf{NP}$ (exponential-time with access to an $\mathsf{NP}$ oracle, an extremely large complexity class) requires depth-$2$ circuits of at least $n^{2-\varepsilon}$ gates; no non-trivial results were known when we restrict the hard function to be in $\mathsf{NP}$.} Therefore, given our simulation result, if for any $\eps > 0$ we can prove an $\Omega(n^{\eps})$-size lower bound against depth-$18$ encoders, we would get lower bounds against $n^{1+\eps/2}$-wire depth-$3$ symmetric circuits, which would be a breakthrough in circuit complexity theory.

\paragraph*{On the choice of attention/MLP layers.} Here we allow the parameters for the attention and MLP layers to change arbitrarily for different positions (our auto-regressive communication model applies to this case as well). Otherwise, the encoder architecture would only have $\poly(Hdp)$ bits of non-uniformity; an encoder lower bound of $Hdp = \Omega(n^\epsilon)$ would then follow directly from the folklore time hierarchy theorem with advice $\mathsf{DTIME}(n^{k+1}) \not\subseteq \mathsf{DTIME}(n^k) / o(n)$ (see, e.g., Proposition 1 of~\cite{santhanam2013medium}).\footnote{We consider such a lower bound uninteresting because it does not say anything meaningful about the Transformer architecture other than it's not sufficiently non-uniform. An alternative approach of incorporating a super-linear amount of non-uniformity is to allow arbitrary positional encodings; we choose to work with position-dependent attention/MLP layers to make our presentation simpler.}

\paragraph*{Simulating a single symmetric gate.} On input $x_1, x_2, \dots, x_n \in \{0,1\}^n$, a symmetric gate with $w$ wires outputs $f(x_{i_1} + x_{i_2} + \cdots + x_{i_w})$ for some fixed function $f : \{0,1,2,\dots,w\} \to \{0,1\}$. 

First, suppose each input bit $x_j$, we can set the V-value to be $x_j$ and $K$-value to be $\log^2(n) \cdot (\vec{j}, \vec{1}-\vec{j})$ where $\vec{j} \in \{0,1\}^d$ is the binary representation of $j$. Similar to the retrieval head in \Cref{appendix:missing-proof}, this allows an attention head to only attend to position $j$ and get the value $x_j$. 

Second, we need to gather the sum $x_{i_1} + x_{i_2} + \cdots + x_{i_w}$.  We set the number of attention heads $H = 5s / n$ and $d = H$. Then we divide into two cases and solve each with two layers:
\begin{itemize}
    \item (\textbf{Handling small fan-in gates.}) If the fan-in of the symmetric gate $w \leq H$, at layer $f^1_{\attn}$, we gather the sum with $w$ attention heads of a single position $k \in [n]$ of the first level. Each attention head $j \in [w]$ attends to a single input position $i_j$ using the retrieval head and get value $x_{i_j}$. 
    
    Afterwards, let $y^1_k \in \mathbb{R}^{dH}$ denote the output of all $H$ attention heads.  We can choose the parameter of the MLP layer $f_{\mlp}^1: \mathbb{R}^{dH} \to \mathbb{R}^d$, so that it computes the linear sum over these attention heads and output it in an arbitrary entry $t$, that is $f_{\mlp}(y^1_k)_t = x_{i_1} + x_{i_2} + \cdots + x_{i_w}$. 

    In this case, we do not need a second layer. We can simply let $f^2_{\mlp} \circ f^2_\attn$ be identity by choosing $f^2_\attn$ to be retrieval heads and $f^2_{\mlp}$ be an identity MLP. 
    
    \item (\textbf{Handling large fan-in gates.}) If the fan-in of the symmetric gate $w > H$, without loss of generality we can assume that $w = H\cdot r$ for some $r \in \mathbb{Z}$ by padding and this blows up the number of wires by at most a factor of $2$. 

    At layer $f^1_{\mlp} \circ f^1_{\attn}$, we spread the sum to $w / H = r$ positions. For each $j \in [r]$, we use a position $k_j \in [n]$ to gather the partial sum $x_{i_{(j - 1) \cdot H + 1}} + \cdots + x_{i_{j \cdot H}}$. This can be done using the same construction as the first case. 

    Next, we will use a single attention head in $f^2_{\attn}$ to gather all the partial sums. Specifically, suppose $i$ is the index of the gate we are simulating and $\vec{i}$ is its binary representation. We will set the K-value to be $(\vec{i}, \vec{1} - \vec{i})$ and V-value be the partial sums. Suppose $p = \Theta(\log n)$, following the same analysis as the retrieval head, the query $Q = (\vec{i}, \vec{1} - \vec{i})$ will only attend to these partial sums and get their average. Since $r$ is fixed when the symmetric circuit is given, we can use $f^2_\mlp$ to multiply the average by $r$ and get $x_{i_1} + x_{i_2} + \cdots + x_{i_w}$. 
\end{itemize}

Thirdly, we need to compute $f(x_{i_1} + x_{i_2} + \cdots + x_{i_w})$ from the sum. Here, because $f$ is a fixed function, we can implement a look-up table for $f$ similar to Section 3.2 of \cite{chiang2022overcoming}:
\begin{itemize}
    \item (\textbf{Handling small fan-in gates.}) If $w \leq H$, at layer $f_{\attn}^3$, we use a single attention head to retrieve the sum $x_{i_1} + x_{i_2} + \cdots + x_{i_w}$. Let $y^3_k \in \mathbb{R}^{dH}$ denote the output of all $H$ attention heads. In MLP layer $f_{\mlp}^3$, we copy the sum to $w$ different entries of $f_{\mlp}^3(y^3_k) \in \mathbb{R}^d$. 
    
    Then we let $f_{\attn}^4$ and $f_{\attn}^5$ be simple retrieval heads, so that $f_{\mlp}^5 \circ f_{\attn}^5 \circ f_{\mlp}^4 \circ f_{\attn}^4$ can simulate a two-layer MLP network. As shown in \cite{chiang2022overcoming}, such a two-layer network can compute the piecewise linear function that equals $\mathbbm{1}[x_{i_1} + x_{i_2} + \cdots + x_{i_w} = t]$ on the $t$-th entry ($t \in [w]$). Multiplying the corresponding weights by $f(t)$ gives $\mathbbm{1}[x_{i_1} + x_{i_2} + \cdots + x_{i_w} = t] \cdot f(t)$. 

    Finally, we use $f_{\attn}^6$ and $f_{\mlp}^6$ to sum $\mathbbm{1}[x_{i_1} + x_{i_2} + \cdots + x_{i_w} = t] \cdot f(t)$ over $t \in [w]$ and get $f(t)$. This is similar to how we gather the partial sums. 
    
    \item (\textbf{Handling large fan-in gates.}) Otherwise, when $w > H$, we again without loss of generality assume $w = H \cdot r = d \cdot r$ for $r \in \mathbb{Z}$. We again spread $\mathbbm{1}[x_{i_1} + x_{i_2} + \cdots + x_{i_w} = t] \cdot f(t)$ for different $t \in [w]$ to $r$ different positions. We set up $f^5_\mlp\circ f^5_\attn\circ f^4_\mlp\circ f^4_\attn\circ f^3_\mlp\circ f^{3}_\attn $ in the same way as in Case 1, so that the $j$-th ($j \in [r]$) position computes $\mathbbm{1}[x_{i_1} + x_{i_2} + \cdots + x_{i_w} = t] \cdot f(t)$ for all $t \in [(j-1)]\cdot d+1,j\cdot d]$. 
    
    Then we use $f^{6}_{\attn}$ and $f^6_{\mlp}$ to gather them. Let $R$ be the set of $r$ positions we use here. For each position $k \in R$, the output of $f^{5}_\mlp$ is a vector $x^5_k \in \mathbb{R}^d$ containing the value of $\mathbbm{1}[x_{i_1} + x_{i_2} + \cdots + x_{i_w} = t] \cdot f(t)$ for $d$ many $t$'s. For $f^6_\attn$, we gather $\frac{1}{r} \sum_{k\in R} x_k$ in the same way as how we gather the partial sums using $f^2_\attn$. Then we multiply it by $r$ and also sum over the $d$ entries with $f^6_\mlp$. This gives the value of $f(t)$ in a single entry in the output of $f^6_\mlp$.
\end{itemize}

\paragraph*{Simulating a symmetric circuit.} We now apply the above simulation to each gate in one layer of our symmetric circuit. For those gates with fan-in $w$ less or equal to $H$, in the above construction, they only use $w$ attention heads and at most $w$ entries after the MLP (note $d = H = 5 s / n$). As each position has $H$ attention heads, we can pack as many as possible such gates at a single position $k \in [n]$. 

After packing, no two positions will simultaneously have fan-in less than $H / 2$ because otherwise we can simply pack them into a single position. In the end, all gates with fan-in less than $H$ uses at most $s / (H/2) + 1 \leq n / 2$ positions.

Then for those gates with fan-in $w$ bigger than $H$, our strategy of spreading it into $\lceil w / H\rceil < w / H + 1$ positions will use at most $2 \cdot s / H \leq n / 2$ positions as well. Thus the $n$ positions per layer encoder we have is sufficient for the simulation. Finally, stacking the simulation of each one of the $L$ layers together, we get a encoder with $6L$ layers and $Hdp = O((s / n)^2)$.
\end{document}